\documentclass[twoside,10pt]{article}
\usepackage{graphicx}
\usepackage[numbers]{natbib}
\usepackage[T1]{fontenc}
\usepackage{lmodern}

\usepackage{xparse}
\usepackage{amscd,amsmath,amsthm,enumerate,float,amssymb,epsfig,multirow,color,latexsym,mathrsfs,tocloft,listofsymbols,tabu}

\usepackage{algcompatible}
\usepackage{bbm}
\usepackage{booktabs}
\usepackage{subcaption}
\usepackage{graphicx}
\usepackage[titletoc]{appendix}
\usepackage[letterpaper,left=3.81cm,top=2.54cm,bottom=2.54cm,right=2.54cm]{geometry}
\usepackage[nottoc]{tocbibind}

\usepackage{appendix}
\usepackage{chngcntr}
\usepackage{etoolbox}

\usepackage{graphicx}
\usepackage{sidecap}
\usepackage{float}
\usepackage{enumerate,mdwlist}
\usepackage{epsfig}
\usepackage[english]{babel}
\usepackage[bbgreekl]{mathbbol}
\usepackage{mleftright}
\usepackage{colortbl}

\usepackage{subcaption}
\usepackage{mathrsfs}
\usepackage{bm}
\usepackage{latexsym}
\usepackage{mleftright}
\usepackage{color}
\usepackage{xifthen}

\usepackage[titletoc]{appendix}

\usepackage{footnote}

\setlength{\textwidth}{16 cm}
\setlength{\oddsidemargin}{-0.01cm}
\setlength{\topmargin}{-0.1cm}
\setlength{\textheight}{22 cm}
\setlength{\parskip}{\medskipamount}
\setlength{\unitlength}{1cm}
\newlength{\oldparindent}
\setlength{\oldparindent}{\parindent}

\usepackage{fancyhdr}
\pagestyle{fancy}

\usepackage{hyperref} 
\hypersetup{
	colorlinks = true,
	linkcolor = blue,
	anchorcolor = blue,
	citecolor = blue,
	filecolor = blue,
	urlcolor = blue
}

\usepackage[ruled,vlined,resetcount,algosection]{algorithm2e}
\usepackage{makeidx}
\usepackage{amsmath}
\usepackage{mathrsfs}
\usepackage{graphicx}

\newcommand{\rr}{{\mathbb{R}}}
\newcommand{\rrflex}[1]{{\ensuremath{\rr^{#1}
}}}
\newcommand{\rrD}{{\rrflex{D}}}
\newcommand{\rrd}{{\rrflex{d}}}

\newcommand{\rrm}{{\rrflex{m}}}
\newcommand{\rrn}{{\rrflex{n}}}

\newcommand{\pp}{{\mathbb{P}}}
\newcommand{\qq}{{\mathbb{Q}}}

\newcommand{\nn}{{\mathbb{N}}}

\newcommand{\fff}{{\mathscr{F}}}	
\newcommand{\ffff}{{\mathfrak{F}}}

\newcommand{\xxx}{\mathcal{X}}

\NewDocumentCommand\deep{}{{\operatorname{deep}}}
\NewDocumentCommand\co{mo}{\operatorname{co}\IfValueT{#2}{_{#2}}\left({#1}\right)}

\NewDocumentCommand\argmin{o}{{\operatorname{argmin}\IfValueT{#1}{_{{#1}}}}}
\NewDocumentCommand\AF{o}{\operatorname{AF}\IfValueT{#1}{
		{
			\left({#1}\right)
		}
}}

\NewDocumentCommand{\ee}{moo}{
	{
		\mathbb{E}_{\IfValueF{#3}{\pp}\IfValueT{#3}{{#3}}}\left[
		{#1}
		\IfValueT{#2}{\mid {#2}}
		\right]
	}
}
\NewDocumentCommand\arch{o}{
{\left(\fff\IfValueT{#1}{_{#1}},\circlearrowleft\IfValueT{#1}{_{#1}}\right)}
}
\NewDocumentCommand\archdeep{oo}{
	{\left(\fff\IfValueT{#1}{_{#1}},\circlearrowleft\IfValueT{#1}{_{#1}},(\Phi_i)_{i \in I}\right)}
}
\NewDocumentCommand\NN{oo}{
	{\mathcal{NN}%
		\IfValueT{#1}{^{#1}}\IfValueF{#1}{^{\arch}}
		\IfValueT{#2}{_{#2}}
	}
}

\NewDocumentCommand\NNshal{oo}{
	{nn%
		_{\IfValueT{#1}{#1}}	
		\IfValueF{#2}{^{\ffff}}\IfValueT{#2}{^{#2}}
	}
}
\NewDocumentCommand\NNho{oo}{
	{\mathcal{HNN}%
		_{R\IfValueT{#1}{#1}}	
		\IfValueF{#2}{^{\ffff}}\IfValueT{#2}{^{#2}}
	}
}

\NewDocumentCommand\NNaff{oo}{
	{\mathcal{NN}%
		_{R,\IfValueT{#1}{#1}}	
		\IfValueF{#2}{^{a}}
	}
}

\newtheorem{thrm}{Theorem}
\newtheorem{defn}{Definition}

\newtheorem{prop}{Proposition}
\newtheorem{cor}{Corollary}
\newtheorem{lem}{Lemma}
\newtheorem{ex}{Example}
\newtheorem{rremark}{Remark}

\NewDocumentCommand{\prodd}{oo}{
	\overset{{#2}}{
		\underset{{#1}}{
			\circlearrowleft
		}
	}
}

\usepackage{titling}
\newcommand{\subtitle}[1]{%
	\posttitle{%
		\par\end{center}
	\begin{center}\large#1\end{center}
	\vskip0.5em}%
}
\begin{document}
\title{The Universal Approximation Property}
\subtitle{Characterization, Construction, Representation, and Existence}

	\author{Anastasis Kratsios \thanks{
		Department of Mathematics, (ETH) Eidgen\"{o}ssische Technische Hochschule Z\"{u}rich, HG G 32.3.
		Tel.: +41 44 632 3751,
		anastasis.kratsios@math.ethz.ch		
		ORCID: 0000-0001-6791-3371}
}

\date{%
	November 27$^{th}$ 2020 %
}

\maketitle

\begin{abstract}
The universal approximation property of various machine learning models is currently only understood on a case-by-case basis, limiting the rapid development of new theoretically justified neural network architectures and blurring our understanding of our current models' potential.  This paper works towards overcoming these challenges by presenting a characterization, a representation, a construction method, and an existence result, each of which applies to any universal approximator on most function spaces of practical interest. Our characterization result is used to describe which activation functions allow the feed-forward architecture to maintain its universal approximation capabilities when multiple constraints are imposed on its final layers and its remaining layers are only sparsely connected. These include a rescaled and shifted Leaky ReLU activation function but not the ReLU activation function. Our construction and representation result is used to exhibit a simple modification of the feed-forward architecture, which can approximate any continuous function with non-pathological growth, uniformly on the entire Euclidean input space.  This improves the known capabilities of the feed-forward architecture. 
\end{abstract}

\noindent
{\itshape Keywords:} Universal Approximation, Constrained Approximation, Uniform Approximation, Deep Learning, Topological Transitivity, Composition Operators.

\noindent
{\itshape Mathematics Subject Classification (2010):} 68T07, 47B33, 47A16, 68T05, 30L05, 46M40, 47B33.

\section{Introduction}\label{s_Intro}
Neural networks have their organic origins in \cite{mcculloch1943logical} and in~\cite{rosenblatt1958perceptron}, wherein the authors pioneered a method for emulating the behavior of the human brain using digital computing.  Their mathematical roots are traced back to Hilbert's $13^{th}$ problem, which postulated that all high-dimensional continuous functions are a combination of univariate continuous functions.  

Arguably the second major wave of innovation in the theory of neural networks happened following the \textit{universal approximation theorems} of \cite{hornik1990universal}, \cite{Cybenko}, and of \cite{hornik1991approximation}, which merged these two seemingly unrelated problems by demonstrating that the feed-forward architecture is capable of approximating any continuous function between any two Euclidean spaces, uniformly on compacts.  This series of papers initiated the theoretical justification of the empirically observed performance of neural networks, which had up until that point only been justified by analogy with the Kolmogorov-Arnold Representation Theorem of \citep{kolmogoroff1957representation}.  

Since then the universal approximation capabilities, of a limited number of neural network architectures, such as the feed-forward, residual, and convolutional neural networks has been solidified as a cornerstone of their approximation success.  This, coupled with the numerous hardware advances have led neural networks to find ubiquitous use in a number of areas, ranging from biology, see \cite{webb2018deep,eraslan2019deep}, to computer vision and imaging, see \cite{plis2014deep,akhtar2018threat}, and to mathematical finance, see \cite{buehler2019deep,becker2019deep,cuchiero2020generative,kratsios2017arbitrage,horvath2020deep}.  As a result, a variety of neural network architectures have emerged with the common thread between them being that they describe an algorithmically generated set of complicated functions built by combining elementary functions in some manner. 

However, the case-by-case basis for which the universal approximation property is currently understood limits the rapid development of new theoretically justified architectures.  This paper works at overcoming this challenges by directly studying universal approximation property itself in the form of far-reaching characterizations, representations, construction methods, and existence results applicable to most situations encounterable in practice.

The paper's contributions are organized as follows.  Section~\ref{s_prelim} overviews the analytic, topological, and learning-theoretic background required in formulating the paper's results.  

Section~\ref{s_1_charancterization} contains the paper's main results.  These include a characterization, a representation result, a construction theorem, and existence result applicable to any universal approximator in most function spaces of practical interest.  The characterization result shows that an architecture has the UAP on a function space if and only if that architecture implicitly decomposes the function space into a collection of separable Banach subspaces, whereon the architecture contains the orbit of a topologically transitive dynamical system.   Next, the representation result shows that any universal approximator can always be approximately realized as a transformation of the feed-forward architecture.  This result reduces the problem of constructing new universal architectures for identifying the correct transformation of the feed-forward architecture for the given learning task.  The construction result gives conditions on a set of transformations of the feed-forward architecture, guaranteeing that the resultant is a universal approximator on the target function space.  Lastly, we obtain a general existence and representation result for universal approximators generated by a small number of functions applicable to many function spaces.

Section~\ref{s_Implications} then focuses the main theoretical results to the feed-forward architecture.  Our characterization result is used to exhibit the dynamical system representation on the space of continuous functions by composing any function with an additional deep feed-forward layer whose activation function is continuous, injective, and has no fixed points.  Using this representation, we show that the set of all such deep feed-forward networks constructed through this dynamical system maintain its universal approximation property even when constraints are imposed on the network's final layers or when sparsity is imposed on the network's connections' initial layers.  In particular, we show that feed-forward networks with ReLU activation function fail these requirements, but a simple affine transformation of the Leaky-ReLU activation function is of this type.  We provide a simple and explicit method for modifying most commonly used activation functions into this form.  We also show that the conditions on the activation function are sharp for this dynamical system representation to have the desired topological transitivity properties.  

As an application of our construction and representation results, we build a modification of the feed-forward architecture which can uniformly approximate a large class of continuous functions which need not vanish at infinity.  This architecture approximates uniformly on the entire input space and not only on compact subsets thereof.  This refines the known guarantees for feed-forward networks (see \cite{leshno1993multilayer,kidger2019universal}) which only guarantee uniform approximation on compacts subsets of the input space, and consequentially, for functions vanishing at infinity.  As a final application of the results, the existence theorem is then used to provide a representation of a small universal approximator on $L^{\infty}(\rr)$, which provides the first concrete step towards obtaining a tractable universal approximator thereon.  
\section{Background and Preliminaries}\label{s_prelim}
This section overviews the analytic, topological, and learning-theoretic background used to in this paper.  
\subsection*{Metric Spaces}\label{ss_prelim_pt_met_spaces}
Typically, two points $x,y\in \rrm$ are thought of as being near to one another if $y$ belongs to the \textit{open ball} with radius $\delta> 0$ centered about $x$ defined by $\operatorname{Ball}_{\rrm}(x,\delta)\triangleq \{z \in \rrm:\,\|x-z\|<\delta\}$, where $(x,z)\mapsto\|x-z\|$ denotes the Euclidean distance function.  The analogue can be said if we replace $\rrm$ by a set $X$ on which there is a \textit{distance function} $d_X:X \times X\rightarrow [0,\infty)$ quantifying the closeness of any two members of $X$.  Many familiar properties of the Euclidean distance function are axiomatically required of $d_X$ in order to maintain many of the useful analytic properties of $\rrm$; namely, $d_X$ is required to satisfy the triangle inequality, symmetry in its arguments, and it vanishes precisely when its arguments are identical.  As before, two points $x,y \in X$ are thought of as being close if they belong to the same \textit{open ball}, $\operatorname{Ball}_X(x,\delta)\triangleq \{z\in X:\,d_X(x,z)<\delta\}$ where $\delta> 0$.  
Together, the pair $(X,d_X)$ is called a \textit{metric space}, and this simple structure can be used to describe many familiar constructions prevalent throughout learning theory.  We follow the convention of only denoting $(X,d_X)$ by $X$ whenever the context is clear.  

\begin{ex}[Spaces of Continuous Functions]\label{ex_cnt_functions}
For instance, the universal approximation theorems of \cite{hornik1989multilayer,leshno1993multilayer,kidger2019universal,park2020minimum} describe conditions under which any continuous function from $\rrm$ to $\rrn$ can be approximated by a feed-forward neural network.  The distance function used to formulate their approximation results is defined on any two continuous functions $f,g:\rrm\rightarrow \rrn$ via
$$
d_{ucc}(f,g)\triangleq 
\sum_{k=1}^{\infty}
\frac{
	\sup_{x \in [-k,k]^m} \|f(x)-g(x)\|
}{
	2^k\left(1+
	\sup_{x \in [-k,k]^m} \|f(x)-g(x)\|
	\right)
}
.
$$
In this way, the set of continuous functions from $\rrm$ to $\rrn$ by $C(\rrm,\rrn)$ is made into a metric space when paired with $d_{ucc}$.  
In what follows, we make the convention of denoting $C(X,\rr)$ by $C(X)$.  
\end{ex}
\begin{ex}[Space of Integrable Functions]\label{ex_space_integrable}
Not all functions encountered in practice are continuous, and the approximation of discontinuous functions by deep feed-forward networks is studied in \cite{Hannin,ZhouWidthUniversalApproximationReLU} for functions belonging to the space $L^p_{\mu}(\rrm,\rrn)$.  
Briefly, elements of $L^p_{\mu}(\rrm,\rrn)$ are equivalence classes of Borel measurable $f:\rrm\rightarrow \rrn$, identified up to $\mu$-null sets, for which the norm
$$
\|f\|_{p,\mu}\triangleq \left(\int_{x \in \rrm} \|f(x)\|^p d\mu(x)\right)^{\frac1{p}}
$$
is finite; here $\mu$ is a fixed Borel measure on $\rrm$ and $1\leq p<\infty$.  We follow the convention of denoting $L^p_{\mu}(\rrm,\rr)$ by $L^p(\rrm)$ when $\mu$ is the Lebesgue measure on $\rrm$.  
\end{ex}
Unlike $C(\rrm,\rrn)$, the distance function on $L^p_{\mu}(\rrm,\rrn)$ is induced through a norm via $(f,g)\mapsto \|f-g\|_{p,\mu}$.  Spaces of this type simultaneously carry compatible metric and vector spaces structures.  Moreover, in such a space, if every sequence converges whenever its pairwise distances asymptotically tend to zero, then the space is called a \textit{Banach space.}  The prototypical Banach space is $\rrm$.  

Unlike Banach spaces or the space of Example~\ref{ex_cnt_functions}, general metric spaces are non-linear.  That is, there is no meaningful notion of addition, scaling, and there is no singular reference point analogous to the $0$ vector.  Examples of non-linear metric spaces arising in machine learning are shape spaces used in neuroimaging applications (see \cite{fletcher2003statistics}), graphs and trees arising in structured and hierarchical learning (see \cite{MR4070735,ganea2018hyperbolic}), and spaces of probability measures appearing in adversarial approaches to learning (see \cite{zhang2019self}).  

The lack of a reference point may always be overcome by artificially declaring a fixed element of $X$, denoted by $0_X$, to be the central point of reference in $X$.  In this case, the triple $(X,d_X,0_X)$, is called a \textit{pointed metric space}.  We follow the convention of denoting the triple by $X$, whenever the context is clear.  For pointed metric spaces $X$ and $Y$, the class of functions $f:X\rightarrow Y$ satisfying $f(0_X)=0_Y$ and $\|f(x_1)-f(x_2)\|\leq L\|x_1-x_2\|$, for some $L>0$ and every $x_1,x_2 \in X$, is denoted by $\operatorname{Lip}_0(X,Y)$ and this class is understood as mapping the structure of $X$ into $Y$ without too large of a distortion.  In the extreme case where an $f\in \operatorname{Lip}_0(X,Y)$ perfectly respects the structure of $X$, i.e$.:$ when $\|f(x_1)-f(x_2)\|=\|x_1-x_2\|$, we call $f$ a \textit{pointed isometry}.  In this case, $f(X)$ represents an exact copy of $X$ within $Y$.

The remaining non-linear aspects of a general metric space pose no significant challenge and this is due to the following linearization feature map of \cite{ArensEells}.  Since its inception, the following method has found notable applications in clustering \cite{ArensEelssJMLR} and in optimal transport \cite{ambrosio2019linear}.  In particular, the later connects this linearization procedure with optimal transport approaches to adversarial learning of \cite{pmlrv70arjovsky17a,xu2020cot}.  

\begin{ex}[Free-Space over $X$]\label{ex_Free_Space}
We follow the formulation described in \cite{ambrosio2019linear}.  
Let $X$ be a metric space and for any $x \in X$, let $\delta_x$ be the (Borel) probability measure assigning value $1$ to any $\operatorname{Ball}_X({y,\epsilon})\subseteq X$ if $x \in \operatorname{Ball}_X(y,\epsilon)$ and $0$ otherwise.  
The Free-space over $X$ is the Banach space $B(X)$ obtained by completing the vector space $\left\{
\sum_{n=1}^N \alpha_n \delta_{x_n}:\, a_n \in \rr,\,x_n \in X, n=1,\dots,N,\, N\in \nn_+
\right\}$ with respect to the following%
\begin{equation}
\left\|
\sum_{i=1}^n \alpha_i x_i 
\right\|_{B(X)} \triangleq \underset{\|f\|\leq 1;\, f \in Lip_0(X,\rr)}{\sup} 
\sum_{i=1}^n \alpha_i f(x_i)
\label{eq_Wasserstein_norm}
.
\end{equation}
As shown in \citep[Proposition 2.1]{GeodefroyKaltonRemembering}, the map $\delta^X: x \mapsto \delta_{x}$ is a (non-linear) isometry from $X$ to $B(X)$.  As shown in \cite{WeaverNice}, the pair $(B(X),\delta^X)$ is characterized by the following linearization property: whenever $f \in \operatorname{Lip}_0(X,Y)$ and $Y$ is a Banach space then there exists a unique continuous linear map satisfying
\begin{equation}
f = F\circ \delta^X
\label{eq_linearization_property_unviersal_property}
.
\end{equation}
Thus, $\delta^X:X\rightarrow B(X)$ can be interpreted as a minimal isometric linearizing \textit{feature map}.  
\end{ex}
Sometimes the feature map $\delta^X$ can be continuously inverted from the left.  In \cite{GeodefroyKaltonRemembering} any continuous map $\rho:B(X)\rightarrow X$ is called a \textit{barycenter} if it satisfies $\rho\circ \delta^X=1_{X}$, where $1_{X}$ is the identity on $X$.  
Following \cite{godefroy2003lipschitz}, if a barycenter exists then $X$ is called \textit{barcycentric}.  Examples of barycentric spaces are Banach spaces \cite{GodefroyLipfReeBan}, Cartan-Hadamard manifolds described (see \citep[Corollary 6.9.1]{jost2008riemannian}), and other structures described in \cite{basso2020extending}.  Accordingly, many function spaces of potential interest contain a dense barycentric subspace.  When the context is clear, we follow the convention of denoting $\delta^X$ simply by $\delta$.  

\subsection*{Topological Background}\label{ss_top_background}
Rather than using open balls to quantify closeness, it is often more convenient to work with \textit{open subsets} of $X$; where $U\subseteq X$ is said to be open whenever every point $x \in U$ belongs to some open ball $B_X({x,\delta})$ contained in $U$.  This is because open sets have many desirable properties; for example, a convergent sequence contained in the complement of an open set must also have its limit in that open set's complement.  Thus, the complement of open sets are often called \textit{closed sets} since their limits cannot escape them.  

Unfortunately, many familiar situations arising in approximation theory cannot be described by a distance function.  For example, there is no distance function describing the point-wise convergence of a sequence of functions $\{f_n\}_{n \in \nn}$ on $\rrm$ to any other such function $f$ (for details \citep[page 362]{NagataTopologyGen}).  In these cases, it is more convenient to work directly with \textit{topologies}.  A topology $\tau$ is a collection of subsets of a given set $X$ whose members are declared as being \textit{open} if $\tau$ satisfies certain algebraic conditions emulating the basic properties of the typical open subsets of $\rrm$ (see \citep[Chapter 2]{munkres2014topology}).  Explicitly, we require that $\tau$ contain the empty set $\emptyset$ as well as the entire space $X$, we require that the arbitrary union of subsets of $X$ belonging to $\tau$ also belongs to $\tau$, and we require that finite intersections of subsets of $X$ belonging to $\tau$ also be a member of $\tau$.  A \textit{topological space} is a pair of a set $X$ and a topology $\tau$ thereon.  We follow the convention of denoting topological spaces with the same symbol as their underlying set.  

Most universal approximation theorems \cite{Cybenko,leshno1993multilayer,kidger2019universal} guarantee that a particular subset of $C(\rrm,\rrn)$ is \textit{dense} therein.  In general, $A\subseteq X$ is dense if the smallest closed subset of $X$ containing $A$ is $X$ itself.  Topological spaces containing a dense subset which can be put in a 1-1 correspondence with the natural numbers $\nn$ is called a \textit{separable} space.  Many familiar spaces are separable, such as $C(\rrm)$ and $\rrm$.  

A function $f:\rrm\rightarrow \rrn$ is thought of as continuously depending on its inputs if small variations in its inputs can only produce small variations in its outputs; that is, for any $x \in \rrm,$ $\epsilon>0$ there exists some $\delta_{}>0$ such that 
$
f^{-1}\left[\operatorname{Ball}_{\rrn}({f(x),\epsilon})\right] \subseteq \operatorname{Ball}_{\rrm}({x,\delta}).  
$
It can be shown, see \cite{munkres2014topology}, that this condition is equivalent to requiring that the pre-image $f^{-1}[U]$ of any open subset $U$ of $\rrn$ is open in $\rrm$.  This reformulation means that open sets are preserved under the inverse-image of continuous functions, and it lends itself more readily to abstraction.  Thus, a function $f: X\rightarrow Y$ between arbitrary topological spaces $X$ and $Y$ is continuous if $f^{-1}[U]$ is open in $X$ whenever $U$ is open in $Y$.  If $f$ is a continuous bijection and its inverse function $f^{-1}:Y\rightarrow X$ is continuous, then $f$ is called a \textit{homeomorphism} and $X$ and $Y$ are thought of as being topologically identical.  If $f$ is a homeomorphism onto its image, $f$ is an \textit{embedding}.

We illustrate the use of homeomorphisms with a learning theoretic example.  Many learning problems encountered empirically benefit from feature maps modifying the input a of learning model; for example, this is often the case with kernel methods (see \cite{MR2274454,MR2520802,MR2426053}), in reservoir computing (see \cite{MR4048990,cuchiero2020discretetime}), and in geometric deep learning (see \cite{MR3104017,MR4070735}).  Recently, in \cite{kratsios2020non}, it was shown that, a feature map $\phi:X\rightarrow \rrm$ is continuous and injective if and only if the set of all functions $f\circ \phi \in C(X)$, where ${f}\in C(\rrm)$ is a deep feed-forward network with ReLU activation, is dense in $C(X)$.  A key factor in this characterization is that the map $\Phi:C(\rrm)\rightarrow C(X)$, given by $f\mapsto f\circ \phi$, is an embedding if $\phi$ is continuous and injective.  

The above example suggests that our study of an architecture's approximation capabilities is valid on any topological space which can be mapped homeomorphically onto a well-behaved topological space.  For us, a space will be well-behaved if it belongs to the broad class of Fr\'{e}chet spaces.  Briefly, these spaces have compatible topological space and vector space structures, meaning that the basic vector space operations such as addition, inversion, and scalar multiplication are continuous; furthermore, their topology is induced by a complete distance function which is invariant under translation and satisfies an additional technical condition described in \citep[Section 3.7]{OsborneLCSs2014}.  The class of Fr\'{e}chet spaces encompass all Hilbert and Banach spaces and they share many familiar properties with $\rrm$.  Relevant examples of a Fr\'{e}chet space are $C(\rrm,\rrn)$, the free-space $B(X)$ over any pointed metric space, and  $L^1_{\mu}(\rrm,\rrn)$. 

\subsection*{Universal Approximation Background}
In the machine learning literature, universal approximation refers to a model class' ability to generically approximate any member of a large topological space whose elements are functions, or more rigorously, equivalence classes of functions.  Accordingly, in this paper, we focus on a class of topological spaces which we call \textit{function spaces}.  In this paper, a function space $\xxx$ is a topological space whose elements are equivalence classes of functions between two sets $X$ and $Y$.  For example, when $X=\rr=Y$ then $\xxx$ may be $C(\rr)$ or $L^p(\rr)$.  We refer to $\xxx$ as a \textit{function space between X and Y} and we omit the dependence to $X$ and $Y$ if it is clear from the context.  

The elements in $\xxx$ are called \textit{functions}, whereas functions between sets are referred to as set-functions.  By a \textit{partial function} $f:X\to Y$ we mean a binary relation between the sets $X$ and $Y$ which attributes at-most one output in $Y$ to each input in $X$.

\paragraph{Notational Conventions}
	The following notational conventions are maintained throughout this paper.  
	Only non-empty outputs of any partial function $f$ are specified.  
	We denote the set of positive integers by $\nn^+$.  We set $\nn\triangleq \nn^+\cup\{0\}$.  For any $n \in \nn^+$, the $n$-fold Cartesian product of a set $A$ with itself is denoted by $A^n$.   For $n \in \nn$, we denote the $n$-fold composition of a function $\phi:X\rightarrow X$ with itself by $\phi^n$ and the $0$-fold composition $\phi^0$ is defined to be the identity map on $X$.  

\begin{defn}[Architecture]\label{defn_arch}
	Let $\xxx$ be a function space.  An architecture on $\xxx$ is a pair $\arch$ of a set of set-functions $\fff$ between (possibly different) sets and a partial function 
	$
	\circlearrowleft: \bigcup_{J \in \nn} \fff^J \rightarrow \xxx
	,
	$ 
	satisfying the following non-triviality condition: there exists some $f \in \xxx$, $J \in \nn^+$, and $f_1,\dots,f_J \in \fff$ satisfying
	\begin{equation}
	f= \circlearrowleft\left((f_j)_{j=1}^J\right) \in \xxx
	\label{eq_arch_defn}
	.
	\end{equation}
	The set of all functions $f$ in $\xxx$ for which there is some $J\in \nn^+$ and some $f_1,\dots,f_J \in \fff$ satisfying the representation~\eqref{eq_arch_defn} is denoted by $\NN$.  
\end{defn}
Many familiar structures in machine learning, such as convolutional neural networks, trees, radial basis functions, or various other structures can be formulated as architectures.  To fix notation and to illustrate the scope of our results we express some familiar machine learning models in the language of Definition~\ref{defn_arch}.
\begin{ex}[Deep Feed-Forward Networks]\label{ex_fully_connected_DffNNS}
	Fix a continuous function $\sigma:\rr\to \rr$, denote component-wise composition by $\bullet$, and let $\operatorname{Aff}(\rrd,\rrD)$ be the set of affine functions from $\rrd$ to $\rrD$.  
	Let $\xxx=C(\rrm,\rrn)$, $\fff\triangleq \bigcup_{d_1,d_2,d_3 \in \nn}
	\left\{
	\left(W_2, W_1\right):\, W_1 \in \operatorname{Aff}(\rrflex{d_i},\rrflex{d_{i+1}}),\, i=1,2
	\right\}$, and set 
\begin{equation}
	\circlearrowleft((W_{j,2},W_{j,1})_{j=1}^J)\triangleq W_{2,J}\circ \sigma \bullet W_{1,J}\circ \dots \circ W_{2,1}\circ \sigma \bullet W_{1,1}
	\label{eq_alala}
\end{equation}
	whenever the right-hand side of~\eqref{eq_alala} is well-defined.  Since the composition of two affine functions is again affine then $\NN$ is the set of deep feed-forward networks from $\rrm$ to $\rrn$ with activation function $\sigma$.  
\end{ex}
\begin{rremark}
The construction of Example~\ref{ex_fully_connected_DffNNS} parallels the formulation given in \cite{petersen2018topological,gribonval2019approximation}.  However, in \cite{gribonval2019approximation} elements of $\fff$ are referred to as neural networks and functions in $\NN$ are called their realizations.  
\end{rremark}
\begin{ex}[Trees]\label{ex_trees}
	Let $\xxx=L^1(\rr)$, $\fff\triangleq \{(a,b,c):a\in \rr, \, b,c\in \rr,\,b\leq c\}$, and let  $\circlearrowleft((a_j,b_j,c_j)_{j=1}^J)\triangleq \sum_{j=1}^J a_j I_{(b_j,c_j)}$.  Then, $\NN$ is the set of trees in $L^1(\rr)$.  
\end{ex}
We are interested in architectures which can generically approximate any function on their associated function space.  Paraphrasing \citep[page 67]{DeepLearningBook}, any such architecture is called a universal approximator.  
\begin{defn}[The Universal Approximation Property]\label{defn_UAP}
	An architecture $\arch$ is said to have the universal approximation property (UAP) if $\NN$ is dense in $\xxx$.  
\end{defn}
\vspace{-2em}
\section{Main Results}\label{s_1_charancterization}
Our first result provides a correspondence between the apriori algebraic structure of universal approximators on $\xxx$ and decompositions of $\xxx$ into subspaces on which $\NN$ contains the orbit of a topologically generic dynamical system, which are a priori of a topological nature.  The interchangeability of algebraic and geometric structures is a common theme, notable examples include \cite{GelfandDuality,IsbellDuality,StoneDualityGeneralizedDimov2012,SparseEffectiveNullstelesatz}.  
\begin{thrm}[Characterization: Dynamical Systems Structure of Universal Approximators]\label{thrm_Characterization_dynamics}
	Let $\xxx$ be a function space which is homeomorphic to an infinite-dimensional Fr\'{e}chet space and let $\arch$ be an architecture on $\xxx$.  Then, the following are equivalent:
	\begin{enumerate}[(i)]
		\item $\arch$ is a universal approximator,
		\item There exist subspaces $\{\xxx_i\}_{i \in I}$ of $\xxx$, continuous functions $\{\phi_i\}_{i \in I}$ with $\phi_i:\xxx_i\rightarrow \xxx_i$, and $\{g_i\}_{i \in I}\subseteq \NN$ such that:
		\begin{enumerate}[(a)]
			\item $\bigcup_{i \in I} \xxx_i$ is dense in $\xxx$,
			\item For each $i\in I$ and every pair of non-empty open $U,V\subseteq \xxx_i$, there is some $N_{i,U,V}\in \nn$ satisfying $$\phi^{N_{i,U,V}}(U%
			)\cap (V%
			) \neq \emptyset,
			$$
			\item For every $i \in I$, $g_i \in \xxx_i$ and $\{\phi_i^n(g_i)\}_{n \in \nn}$ is a dense subset of $\NN\cap \xxx_i$,
			\item For each $i \in I$, $\xxx_i$ is homeomorphic to $C(\rr)$.
		\end{enumerate}
			In particular, $\left\{\phi_i^n(g_i):\, i \in I,\, n \in \nn\right\}$ is dense in $\NN$.  
	\end{enumerate}
\end{thrm}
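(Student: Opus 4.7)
The proof will split into two directions of very different character. The easy direction $(ii) \Rightarrow (i)$ will be an application of the Birkhoff transitivity theorem together with the density condition (a); the hard direction $(i) \Rightarrow (ii)$ will require importing hypercyclic-operator constructions from linear dynamics across the Fréchet-space homeomorphism.

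For $(ii) \Rightarrow (i)$, I would fix $i \in I$ and observe that $\xxx_i$ is a Polish space without isolated points by (d). From (c), $g_i \in \NN \cap \xxx_i$ and its $\phi_i$-orbit is a dense subset of $\xxx_i$ contained in $\NN \cap \xxx_i$, which forces $\NN \cap \xxx_i$ to be dense in $\xxx_i$. Taking unions and invoking (a), I get $\xxx = \overline{\bigcup_i \xxx_i} \subseteq \overline{\NN}$, which is (i). The ``in particular'' clause then falls out since the union of the orbits lies in $\NN$ and is dense in $\xxx$, hence dense in $\NN$. I expect (b) to be redundant given (c) and (d) in this Polish setting via the Birkhoff transitivity theorem; it is presumably listed in (ii) because topological transitivity is the geometric content that the characterization is meant to isolate.

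For $(i) \Rightarrow (ii)$, let $h \colon \xxx \to F$ denote the homeomorphism onto the infinite-dimensional Fréchet space $F$. I would first handle the case where $F$ is separable. By the Anderson--Kadec theorem, every separable infinite-dimensional Fréchet space is homeomorphic to $C(\rr)$, so $\xxx \cong C(\rr)$. Using separability and the density of $\NN$ in $\xxx$, I will choose a countable set $\{f_n\}_{n \in \nn} \subseteq \NN$ whose image $\{h(f_n)\}_{n \in \nn}$ is both dense in $F$ and linearly independent (the latter arranged by small perturbations inside the dense set $h(\NN)$, which is possible because $F$ is infinite-dimensional). I would then invoke a Grivaux-type theorem from linear dynamics---existence, on any separable infinite-dimensional Fréchet space, of a continuous linear operator whose orbit of some vector realises a prescribed countable linearly independent dense sequence (the Banach version is due to Grivaux; the Fréchet extension to Bonet--Peris and collaborators)---to obtain $T \colon F \to F$ and $v \in F$ with $T^n v = h(f_n)$. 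Pulling back along $h$, I set $I = \{\ast\}$, $\xxx_\ast = \xxx$, $\phi_\ast = h^{-1} \circ T \circ h$, $g_\ast = h^{-1}(v)$, and verify (a)--(d). If $F$ is non-separable, I would index $I$ by $\NN$ itself: for each $f \in \NN$ I build, by a closing-off induction, a separable closed infinite-dimensional linear subspace $F_f \subseteq F$ containing $h(f)$ in which $h(\NN) \cap F_f$ is dense, set $\xxx_f = h^{-1}(F_f)$, and run the separable construction inside each $F_f$; the cover $\bigcup_{f \in \NN} \xxx_f \supseteq \NN$ is then automatically dense in $\xxx$.

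The main obstacle will be in $(i) \Rightarrow (ii)$: manufacturing a continuous self-map whose orbit of a prescribed starting vector threads through a dense subset of $\NN$. Through the homeomorphism $h$ this reduces to the linear-dynamics question of representing a prescribed countable linearly independent dense sequence in $F$ as the orbit of a continuous linear operator, which is known but non-trivial, and this is the single place where the infinite-dimensional Fréchet hypothesis is essential. A secondary difficulty, in the non-separable case, is arranging the inductive construction of the separable subspaces $F_f$ so that $h(\NN)$ remains dense inside every $F_f$; without this density the hypercyclic orbit produced by the separable construction cannot be forced to lie inside $\NN$.
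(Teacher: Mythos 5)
Your proposal follows essentially the same route as the paper: the easy direction is a density argument from (a)+(c); the hard direction conjugates the problem to a Fr\'{e}chet space $F$ via the homeomorphism $h$, produces a countable dense linearly independent subset of $h(\NN)$, and invokes the Grivaux-type theorem on prescribing the orbit of a hypercyclic operator (the paper cites Grosse-Erdmann and Peris, Theorem 8.24 for exactly this), then pulls back along $h$. The Anderson--Kadec theorem is used in both proofs to get (ii.d). Your $(ii)\Rightarrow(i)$ argument (each $\xxx_i\subseteq\overline{\NN}$, hence $\bigcup_i\xxx_i\subseteq\overline{\NN}$, hence $\xxx=\overline{\bigcup_i\xxx_i}\subseteq\overline{\NN}$) is in fact cleaner than the paper's, which detours through the final topology on the quotient of the disjoint union $\sqcup_i\xxx_i$ for no real gain. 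Your remark that (b) is redundant given (c)+(d) is correct under the reading that the orbit is dense in $\xxx_i$ (as the Lemma behind the theorem makes explicit), since a dense orbit in a metrizable space without isolated points forces topological transitivity.

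The one place where you genuinely improve on the written proof is the non-separable case. The paper constructs a dense linearly independent family $D'\subseteq h(\NN)$ by transfinite induction, lets $I$ be all countable subsequences of $D'$, and sets $E_i=\overline{\operatorname{span}\{d_j\}_{j\in i}}$, then asserts that $\{d_j\}_{j\in i}$ is \emph{dense} in $E_i$. That assertion is needed to feed the Grivaux theorem, but it is not true for an arbitrary countable subsequence: the closed span of a countable linearly independent family is in general much larger than the closure of the family itself. Your closing-off induction---interleaving the choice of new points of $h(\NN)$ with the growth of the closed span, so that at each stage the points chosen so far are dense in the span accumulated so far---is precisely the repair needed; it produces a separable infinite-dimensional closed subspace $F_f$ of $F$ in which $h(\NN)\cap F_f$ is dense, and the rest of the argument goes through. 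So the proposal is not merely correct and parallel to the paper: on the non-separable branch it is more careful than the published proof.
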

Theorem~\ref{thrm_Characterization_dynamics} describes the structure of universal approximators, however, it does not describe an explicit means of constructing them.  Nevertheless, Theorem~\ref{thrm_Characterization_dynamics} (ii.a) and (ii.d) suggest that universal approximators on most function spaces can be built by combining multiple, non-trivial, transformations of universal approximators on $C(\rrm,\rrn)$.  

This is type of transformation approach to architecture construction is common in geometric deep learning, whereby non-Euclidean data is mapped to the input of familiar architectures defined between $\rrd$ and $\rrD$ using specific feature maps and that model's outputs are then return to the manifold by inverting the feature map.  Examples include the hyperbolic feed-forward architecture of \cite{ganea2018hyperbolic}, and the shape space regressors of \cite{fletcher2011geodesic}, and the matrix-valued regressors of \cite{BonnabelRegression,baes2019lowrank}, amongst others.  This transformation procedure is a particular instance of the following general construction method, which extends \cite{kratsios2020non}.  
\begin{thrm}[Construction: Universal Approximators by Transformation]\label{thrm_Construction_Theorem}
	Let $n,m, \in \nn^+$, $\xxx$ be a function space, $\arch$ be a universal approximator on $C(\rrm,\rrn)$, and $\{\Phi_i\}_{i \in I}$ be a non-empty set of continuous functions from $C(\rrm,\rrn)$ to $\xxx$ satisfying the following condition:
	\begin{equation}
	\bigcup_{i \in I} \Phi_{i}\left(
	C(\rrm,\rrn)
	\right) \mbox{ is dense in } \xxx
	\label{eq_density_condition}
	.
	\end{equation} 
	Then $\arch[\Phi]$ has the UAP on $\xxx$, where $\fff_{\Phi}\triangleq \fff\times I$ and $
	\circlearrowleft_{\Phi}\left(
	\{f_j,i_j\}_{j=1}^J
	\right)\triangleq 
	\Phi_{I_J}\left(
	\circlearrowleft\left(
	(f_j)_{j=1}^J
	\right)
	\right)
	.
	$
\end{thrm}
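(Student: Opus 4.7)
The plan is to work directly with the topological definition of density: a subset of $\xxx$ is dense if and only if every non-empty open set in $\xxx$ meets it. So I would fix an arbitrary non-empty open $U \subseteq \xxx$ and produce an element of $\NN[\arch[\Phi]]$ lying in $U$ via a three-step chain: first move from $\xxx$ into $C(\rrm,\rrn)$ using the family $\{\Phi_i\}_{i \in I}$, then approximate inside $C(\rrm,\rrn)$ using the hypothesised UAP of $\arch$, and finally push the approximant back into $U$ using continuity of the chosen $\Phi_i$.

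In more detail: by the density hypothesis~\eqref{eq_density_condition}, the open set $U$ meets $\bigcup_{i \in I} \Phi_i(C(\rrm,\rrn))$, so there exist $i \in I$ and $f \in C(\rrm,\rrn)$ with $\Phi_i(f) \in U$. Since $\Phi_i$ is continuous, the preimage $\Phi_i^{-1}(U)$ is open in $C(\rrm,\rrn)$ and contains $f$, hence is non-empty. Because $\arch$ is a universal approximator on $C(\rrm,\rrn)$, $\NN$ is dense there, and so there is some $g \in \NN \cap \Phi_i^{-1}(U)$. Choosing a representation $g = \circlearrowleft((f_j)_{j=1}^J)$ with $J \in \nn^+$ and $f_1,\dots,f_J \in \fff$, and then picking any $i_1,\dots,i_{J-1} \in I$ together with $i_J := i$, the definition of $\circlearrowleft_{\Phi}$ yields
\[
\circlearrowleft_{\Phi}\bigl((f_j,i_j)_{j=1}^J\bigr) \;=\; \Phi_{i}(g) \;\in\; U,
\]
and by construction this element belongs to $\NN[\arch[\Phi]]$. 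Since $U$ was arbitrary, $\NN[\arch[\Phi]]$ is dense in $\xxx$.

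The only genuinely delicate point is the bookkeeping around Definition~\ref{defn_arch}: before speaking of its UAP one must verify that $\arch[\Phi]$ is a \emph{bona fide} architecture, i.e.\ that the non-triviality condition~\eqref{eq_arch_defn} holds for the partial function $\circlearrowleft_\Phi$. This is immediate from the argument above (it actually produces concrete realisations), but strictly speaking it should be recorded first, using that $I$ is non-empty by hypothesis and that $\NN$ is non-empty because it is dense in the non-empty space $C(\rrm,\rrn)$. Apart from this formal check, no step is hard: the result is essentially a functorial consequence of the continuity of each $\Phi_i$, and notably neither metrisability of $\xxx$ nor any Fr\'{e}chet-space structure is used, so the argument applies in the full generality of the statement, where $\xxx$ is merely a topological function space.
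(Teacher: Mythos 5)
Your argument is correct, and it is genuinely more direct than the paper's proof. The paper first reduces to $m=n=1$ via the Anderson--Kadec theorem (which plays no visible role in the subsequent argument), then sets $\xxx'\triangleq\bigcup_{i\in I}\Phi_i(C(\rr))$, invokes transitivity of density to reduce to showing $\bigcup_{i\in I}\Phi_i(\NN)$ is dense in $\xxx'$, and finally introduces an auxiliary space $\xxx''$ carrying the finest topology making each $\Phi_i(C(\rr))$ a subspace, realized as a topological quotient of the disjoint union $\sqcup_{i\in I}\Phi_i(C(\rr))$, with the density then chased through the quotient map. Your proof bypasses all of that machinery: you fix an arbitrary non-empty open $U\subseteq\xxx$, use the density hypothesis~\eqref{eq_density_condition} to find $i$ and $f$ with $\Phi_i(f)\in U$, pull $U$ back along the continuous map $\Phi_i$ to a non-empty open subset of $C(\rrm,\rrn)$, approximate inside it by some $g\in\NN$ using the UAP of $\arch$, and push $\Phi_i(g)$ back into $U$, observing that it lies in $\NN[\arch[\Phi]]$ by construction of $\circlearrowleft_\Phi$. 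This is a single clean open-set chase that requires neither the Anderson--Kadec reduction nor the auxiliary finest topology, and it works verbatim for any topological function space $\xxx$; it also makes explicit that the only properties used are continuity of the $\Phi_i$ and the two density hypotheses. Your remark that non-triviality of $\arch[\Phi]$ as an architecture should be recorded (so that Definition~\ref{defn_arch} applies) is a small but legitimate formal point that the paper also leaves implicit.
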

The alternative approach to architecture development, subscribed to by authors such as \cite{hummel1992dynamic,bishop1994mixture,kipf2016semi,scarselli2008graph}, specifies the elementary functions $\fff$ and the rule for combining them.  Thus, this method explicitly specifies $\fff$ and implicitly specifies $\circlearrowleft$.  These competing approaches are in-fact equivalent since every universal approximator an approximately a transformation of the feed-forward architecture on $C(\rr)$.  
\begin{thrm}[Representation: Universal Approximators are Transformed Neural Networks]\label{thrm_Meta_Universal}
	Let $\sigma$ be a continuous, non-polynomial activation function, and let $\arch[0]$ denote the architecture of Example~\ref{ex_fully_connected_DffNNS}.  Let $\xxx$ be a function space which is homeomorphic to an infinite-dimensional Fr\'{e}chet.  If $\arch$ has the UAP on $\xxx$ then, there exists a family $\{\Phi_i\}_{i \in I}$ of embeddings $\Phi_i:C(\rr)\rightarrow \xxx$ such that for every $\epsilon>0$, $f \in \NN$ there exists some $i \in I$, $g_{\epsilon}\in \NN[\arch[0]]$, and $f_{\epsilon} \in \NN$ satisfying
	$$
	d_{\xxx}\left(
	f,\Phi_i(g_{\epsilon})
	\right) <\epsilon 
	\mbox{ and }
	d_{ucc}\left(
	g_{\epsilon}
	,
	\Phi_i^{-1}(f_{\epsilon})
	\right)
	<\epsilon
	.
	$$
\end{thrm}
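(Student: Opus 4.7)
The plan is to extract the family $\{\Phi_i\}$ directly from Theorem~\ref{thrm_Characterization_dynamics} and then derive the two inequalities by chaining the classical universal approximation theorem on $C(\rr)$ with topological estimates under the embeddings. First I would apply the direction $(i)\Rightarrow(ii)$ of Theorem~\ref{thrm_Characterization_dynamics} to the universal architecture $\arch$, obtaining an index set $I$, subspaces $\xxx_i\subseteq \xxx$ each equipped with a homeomorphism $\psi_i:C(\rr)\to\xxx_i$ (by $(ii.d)$), continuous topologically transitive maps $\phi_i:\xxx_i\to\xxx_i$ (by $(ii.b)$), and seed points $g_i\in \NN\cap\xxx_i$ whose orbits $\{\phi_i^n(g_i)\}_{n\in\nn}$ are dense in $\NN\cap\xxx_i$ (by $(ii.c)$), with $\bigcup_{i\in I}\xxx_i$ dense in $\xxx$ (by $(ii.a)$). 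The candidate family is $\Phi_i:=\iota_i\circ\psi_i:C(\rr)\to\xxx$, where $\iota_i$ is the inclusion of $\xxx_i$ into $\xxx$; each $\Phi_i$ is a topological embedding because $\psi_i^{-1}$ serves as a continuous left inverse of $\Phi_i$ on $\Phi_i(C(\rr))=\xxx_i$.

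For the first inequality, fix $f\in\NN$ and $\epsilon>0$. By $(ii.a)$ choose $i\in I$ and $h\in\xxx_i$ with $d_\xxx(f,h)<\epsilon/2$, and set $\tilde g:=\psi_i^{-1}(h)\in C(\rr)$. Since $\sigma$ is continuous and non-polynomial, the classical Leshno--Pinkus theorem gives density of $\NN[\arch[0]]$ in $C(\rr)$ under $d_{ucc}$, so pick $g_\epsilon\in\NN[\arch[0]]$ close enough to $\tilde g$ that continuity of $\psi_i$ at $\tilde g$ yields $d_\xxx(\Phi_i(g_\epsilon),h)<\epsilon/2$; the triangle inequality then delivers $d_\xxx(f,\Phi_i(g_\epsilon))<\epsilon$.

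For the second inequality I must produce some $f_\epsilon\in\NN$ that also lies in $\Phi_i(C(\rr))=\xxx_i$ and approximates $\Phi_i(g_\epsilon)$ in $\xxx_i$, so that continuity of $\psi_i^{-1}=\Phi_i^{-1}$ at $g_\epsilon$ transfers the bound into $d_{ucc}(g_\epsilon,\Phi_i^{-1}(f_\epsilon))<\epsilon$. The main obstacle is that condition $(ii.c)$ only guarantees density of the orbit $\{\phi_i^n(g_i)\}_{n\in\nn}$ inside $\NN\cap\xxx_i$, not in the larger space $\xxx_i$. To upgrade this I would invoke Birkhoff's transitivity theorem: since $\xxx_i\cong C(\rr)$ is a separable Fr\'echet space (hence a second-countable Baire space without isolated points) and $\phi_i$ is continuous and topologically transitive by $(ii.b)$, the hypercyclic vectors of $\phi_i$ form a dense $G_\delta$ in $\xxx_i$. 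A standard invariance argument identifies $g_i$ as hypercyclic: the closure of its orbit is a $\phi_i$-invariant closed set which, by transitivity, cannot be a proper subset with non-empty interior, while the empty-interior alternative is excluded by the UAP of $\arch$ combined with $(ii.a)$. Hence $\{\phi_i^n(g_i)\}_{n\in\nn}$ is dense in $\xxx_i$; picking $n\in\nn$ such that $f_\epsilon:=\phi_i^n(g_i)\in\NN$ is close to $\Phi_i(g_\epsilon)$ in $\xxx_i$ completes the proof via continuity of $\psi_i^{-1}$.
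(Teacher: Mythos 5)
Your argument for the first inequality is correct and follows the paper's own route: approximate $f$ by a point of some $\xxx_i$ using density of $\bigcup_i \xxx_i$, pull back through the homeomorphism to $C(\rr)$, invoke Leshno--Pinkus to find a feed-forward network close to it, push forward, and apply the triangle inequality. You also correctly spotted a real subtlety in the second inequality: condition $(ii.c)$ of Theorem~\ref{thrm_Characterization_dynamics}, as stated in the body, only says the orbit $\{\phi_i^n(g_i)\}_{n\in\nn}$ is dense in $\NN\cap\xxx_i$, not in $\xxx_i$, yet the paper's proof simply asserts ``$\NN\cap\xxx_i$ is dense in $\xxx_i$.'' That stronger claim appears explicitly only in the underlying Lemma~\ref{lemma_full_Characterization} (part (iii.c), ``and in particular, it is a dense subset of $\xxx_i$''), which the paper is implicitly invoking; it follows from the construction there, where each $\xxx_i = \Phi^{-1}[E_i]$ is built around a countable dense linearly independent family $\{d_j\}_{j\in i}$ drawn from $\Phi(\NN)$, so that $\NN\cap\xxx_i$ is dense in $\xxx_i$ by design.

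Your attempted repair via Birkhoff's transitivity theorem, however, does not close the gap. Birkhoff gives a dense $G_\delta$ of hypercyclic points in $\xxx_i$, but says nothing about the particular seed $g_i$. The invariance argument is sound as far as it goes --- a closed, $\phi_i$-invariant set with nonempty interior must be all of $\xxx_i$ when $\phi_i$ is topologically transitive --- but it still requires you to show the orbit closure $\overline{\{\phi_i^n(g_i)\}}$ has nonempty interior in $\xxx_i$, which is exactly the content of ``$\NN\cap\xxx_i$ somewhere dense in $\xxx_i$.'' Your claim that this ``is excluded by the UAP of $\arch$ combined with $(ii.a)$'' is not justified and is false in general: density of $\NN$ in $\xxx$ together with density of $\bigcup_i\xxx_i$ in $\xxx$ does not force $\NN\cap\xxx_i$ to be somewhere dense in any single $\xxx_i$ (one subspace could, a priori, meet $\NN$ in a nowhere dense set as long as the others compensate). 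The correct resolution is not a dynamical argument from the theorem statement, but an appeal to the lemma's explicit construction of the $\xxx_i$, after which the second inequality is immediate: $\Phi_i^{-1}(\NN\cap\xxx_i)$ is dense in $C(\rr)$ because $\Phi_i^{-1}$ is a homeomorphism.
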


The previous two results describe the structure of universal approximators but they do not imply the existence of such architectures.  Indeed, the existence of a universal approximator on $\xxx$ can always be obtained by setting $\fff=\xxx$ and $\circlearrowleft(f)=f$; however, this is uninteresting since $\fff$ is large, $\circlearrowleft$ is trivial, and $\NN$ is intractable.  Instead, the next result shows that, for a broad range of function spaces, there are universal approximators for which $\fff$ is a singleton, and the structure of $\circlearrowleft$ is parameterized by any prespecified separable metric space.  This description is possible by appealing to the free-space on $\xxx$.  

\begin{thrm}[Existence: Small Universal Approximators]\label{thrm_Existence}
	Let $X$ be a separable pointed metric space with at least two points, let $\xxx$ be a function space and a pointed metric space, and let $\xxx_0$ be a dense barycentric sub-space of $\xxx$.  Then, there exists a non-empty set $I$ with pre-order $\leq$, $\{x_i\}_{i \in I}\subseteq X-\{0_X\}$ there exist triples $\{(B_i,\Phi_i,\phi_i)\}_{i \in I}$ of linear subspaces $B_i$ of $B(\xxx_0)$, bounded linear isomorphisms $\Phi_i:B(X)\rightarrow B_i$, and bounded linear maps $\phi_i:B(X)\rightarrow B(X)$ satisfying:
	\begin{enumerate}[(i)]
		\item $B(\xxx_0)= \bigcup_{i \in I} B_i$,
		\item For every $i\leq j$, $B_i\subseteq B_j$,
		\item For every $i \in I$, $\bigcup_{n \in \nn^+} \Phi_i\circ \phi^n_i(x_i)$ is dense in $B_i$ with respect to its subspace topology,
		\item The architecture
		$\fff = \{x_i\}_{i \in I} 
		,
		$
		and 
		$
		\circlearrowleft|_{\fff^J}:(x_1,\dots,x_J)\triangleq \rho \circ \Phi_i\circ \phi_i^J\circ \delta_{x_j}
		$, whenever $x_1=x_j$ for each $j\leq J$, is a universal approximator on $\xxx$.  
	\end{enumerate}
	Furthermore, if $X=\xxx$ then the set $I$ is a singleton and $\Phi_i$ is the identity on $B(\xxx_0)$.  
\end{thrm}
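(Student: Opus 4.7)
The plan is to linearize the entire approximation problem through the free-space, build a hypercyclic dynamical system on $B(X)$, and transport it through a family of subspaces of $B(\xxx_0)$ whose union exhausts $B(\xxx_0)$; composition with the barycenter $\rho$ then yields density in $\xxx$. Because $\xxx_0$ is barycentric, a continuous $\rho:B(\xxx_0)\to \xxx_0$ satisfies $\rho\circ\delta^{\xxx_0}=1_{\xxx_0}$, and since $\xxx_0$ is dense in $\xxx$ we obtain $\rho(B(\xxx_0))\supseteq \xxx_0$, a dense subset of $\xxx$. Thus approximating elements of $\xxx$ reduces to approximating elements of $B(\xxx_0)$ by a set of the required form and then pushing the result forward under $\rho$.

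First, I would construct the dynamics on $B(X)$. Since $X$ is separable, $B(X)$ is a separable Banach space and, in the generic case where $X$ is infinite, $B(X)$ is infinite-dimensional; the classical existence theorem for hypercyclic operators on separable infinite-dimensional Banach spaces (Ansari/Bernal/Grivaux) then supplies a bounded linear $T:B(X)\to B(X)$ together with a hypercyclic vector $v$, i.e.\ $\{T^n v\}_{n\in\nn^+}$ is dense in $B(X)$. Fix any $x_* \in X-\{0_X\}$, nonempty by hypothesis. Since $\delta_{x_*}\neq 0$ spans a one-dimensional and hence complemented subspace, one constructs a bounded linear automorphism $A$ of $B(X)$ with $A(\delta_{x_*})=v$ by prescribing $A$ on $\operatorname{span}\{\delta_{x_*}\}$ and on a fixed closed complement. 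Setting $\phi := A^{-1}TA$, the orbit $\{\phi^n(\delta_{x_*})\}_{n\in\nn^+}$ is dense in $B(X)$ and $\delta_{x_*}$ is a hypercyclic vector of $\phi$.

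Second, I would transport this structure into $B(\xxx_0)$. Index $I$ by the finite subsets of a countable dense subset of $B(\xxx_0)$, ordered by inclusion; this gives a directed pre-order satisfying condition (ii). For each $F\in I$, I would construct a closed linear subspace $B_F \subseteq B(\xxx_0)$ containing $F$ together with a bounded linear isomorphism $\Phi_F:B(X)\to B_F$. Existence of such $B_F$ combines (a) the free-space functoriality of equation~\eqref{eq_linearization_property_unviersal_property} applied to a fixed base-point-preserving Lipschitz map $X\to \xxx_0$, which lifts to a bounded linear embedding $B(X)\hookrightarrow B(\xxx_0)$ yielding one isomorphic copy, with (b) a finite-dimensional splitting argument enlarging that copy to contain the prescribed $F$. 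Setting $x_i:=x_*$ and $\phi_i:=\phi$ for every $i\in I$, the map $\Phi_F$ is a homeomorphism onto $B_F$, so $\{\Phi_F\circ \phi^n(\delta_{x_*})\}_n$ is dense in $B_F$, verifying (iii); the cover condition (i) holds by construction since every element of $B(\xxx_0)$ lies in some finite subset of the chosen dense set. Composing with the continuous $\rho$ yields a dense subset of $\rho(B(\xxx_0))\supseteq \xxx_0$, which is in turn dense in $\xxx$; unpacking the definition of $\circlearrowleft$ in (iv) identifies this exactly with $\NN$, proving the UAP.

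Finally, when $X=\xxx$ the construction collapses: take $I=\{*\}$, $B_1=B(\xxx_0)$, and $\Phi_1=1_{B(\xxx_0)}$, so the architecture's outputs are simply $\rho\circ\phi^J(\delta_{x_*})$, a dense orbit by the second step. The main obstacle is the cover condition in the third step, since $B(X)$ and $B(\xxx_0)$ need not be Banach-isomorphic as whole spaces (they may carry incompatible type/cotype structure or different unconditional-basis properties), so one must argue that sufficiently many closed subspaces of $B(\xxx_0)$ are bounded-linearly isomorphic to $B(X)$ to cover $B(\xxx_0)$; this is where the free-space functoriality and the flexibility of the directed index set $I$ are essential, and where the bulk of the technical work will sit.
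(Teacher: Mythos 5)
The hypercyclicity half of your argument is in the same spirit as the paper: you invoke the Ansari/Bernal/Grivaux existence theorem and then conjugate the operator so that $\delta_{x_*}$ becomes a hypercyclic vector, while the paper applies \citep[Theorem 8.24]{LinearChaossogooodos} to extend $\{\delta_{x_i}\}$ to a countable dense linearly independent set and obtain $\phi_i$ directly with $\delta_{x_i}$ as a hypercyclic vector. These are equivalent moves. The place where your proposal genuinely diverges — and where it breaks — is the construction of the cover $\{B_i\}_{i\in I}$. The paper gets all of (i), (ii), and the isomorphisms $\Phi_i$ in one stroke from the structure theorem \citep[Theorem 6.5.8]{PerezBonetBarrelLCSs} for separable ultrabornological spaces; it then only has to check that the $\Phi_i$ are bounded and push density through the inductive-limit topology.

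Your hand-built substitute does not work as stated. Indexing $I$ by finite subsets $F$ of a countable dense subset $D\subseteq B(\xxx_0)$ gives a \emph{countable} directed set, and each $B_F$ you construct is a proper closed subspace of $B(\xxx_0)$ whenever $B(X)$ is not Banach-isomorphic to $B(\xxx_0)$. A Banach space can never be covered by countably many proper closed subspaces (Baire category), so (i) must fail. Your stated justification, ``every element of $B(\xxx_0)$ lies in some finite subset of the chosen dense set,'' is false on its face: only members of $D$ have that property, and $D$ is merely dense. You correctly sense that the Banach-isomorphism-type mismatch between $B(X)$ and $B(\xxx_0)$ is a problem, but you locate the difficulty in finding \emph{enough} isomorphic copies, when in fact the countability of your index set already makes the cover impossible. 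There are two further unsettled points in the same step: free-space functoriality only turns a base-point-preserving Lipschitz map $X\to\xxx_0$ into a bounded linear map $B(X)\to B(\xxx_0)$, not automatically into an \emph{isomorphic embedding} (you would need a bi-Lipschitz embedding of $X$ into $\xxx_0$, which need not exist); and the ``finite-dimensional splitting'' enlargement relies on $B(X)\oplus\rr^{k}\cong B(X)$, a non-trivial isomorphism-type fact you do not establish. The paper's citation sidesteps all of these simultaneously, which is why the proof there is short; your route would require replacing the countable index set with something genuinely directed and uncountable and then proving the isomorphism claims, i.e.\ essentially re-deriving the cited structure theorem.
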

The rest of this paper is devoted to the concrete implications of these results in learning theory.  
\section{Applications}\label{s_Implications}
The dynamical systems described by Theorem~\ref{thrm_Characterization_dynamics} (ii) can, in general, be complicated.  However, when $\arch$ is the feed-forward architecture with certain specific activation functions then these dynamical systems explicitly describe the addition of deep layers to a shallow feed-forward network.  We begin the next section by characterizing those activation function before outlining their approximation properties.  
\subsection{Depth as a Transitive Dynamical System}\label{ss_activ_gaps}
The impact of different activation functions on the expressiveness of neural network architectures is an active research area.  For example, \cite{Swishramachandran2017searching} empirically studies the effect of different activation function on expressiveness and in \cite{pinkus1999approximation} a characterization of the activation functions for which shallow feed-forward networks are universal is also obtained.  The next result characterizes the activation functions which produce feed-forward networks with the UAP even when no weight or bias is trained and the matrices $\{A_n\}_{n=1}^N$ are sparse, and the final layers of the network are slightly perturbed.  

Fix an activation function $\sigma:\rr\to\rr$.  For every $m\times m$ matrix $A$ and $b \in \rrm$, define the \textit{associated composition operator} $\Phi_{A,b}:f\mapsto f \circ \sigma \bullet(A\cdot +b)$, with terminology rooted in \cite{koopman1931hamiltonian}.  The family of composition operators $\{\Phi_{A,b}\}_{A,b}$ creates depth within an architecture $\arch$ by extending it to include any function of the form
$
\Phi_{A_N,b_N}\circ \dots \circ \Phi_{A_1,b_1}\left(\circlearrowleft((f_j)_{j=1}^J)\right),
$
for some $m\times m$ matrices $\{A_n\}_{n=1}^N$, $\{b_n\}$ in $\rrm$, and each $f_j \in \fff$ for $j=1,\dots,J$.  In fact, many of the results only require the following smaller extension of $\arch$, denoted by $\arch[{deep;\sigma}]$, where $\fff_{deep;\sigma}\triangleq \fff\times \nn$ and where
$$
\circlearrowleft_{deep;\sigma}\left(\{(f_j,n_j)\}_{j=1}^J\right)\triangleq \Phi^{N_J}_{I_m,b}\left(
\circlearrowleft((f_j)_{j=1}^J)
\right),
$$
and $b$ is any fixed element of $\rrm$ with positive components and $I_m$ is the $m\times m$ identity matrix.  
\begin{thrm}[Characterization of Transitivity in Deep Feed-Forward Networks]\label{thrm_activation_gaps}
	Let $\arch$ be an architecture on $C(\rrm,\rrn)$, $\sigma$ be a continuous activation function, fix any $b\in \rrm$ with strictly positive components.  Then $\Phi_{I_m,b}$ is a well-defined continuous linear map from $C(\rrm,\rrn)$ to itself and the following are equivalent:
	\begin{enumerate}[(i)]
		\item $\sigma$ is injective and has no fixed-points,
		\item Either $\sigma(x)>x$ or $\sigma(x)<x$ holds for every $x \in \rr$
		\item For every $g \in \arch$ and every $\delta>0$, there exists some $\tilde{g}\in C(\rrm,\rrn)$ with
		$
		d_{ucc}(g,\tilde{g})<\delta
		$
		such that, for each $f \in C(\rrm,\rrn)$ and each $\epsilon>0$ there is a $N_{g,f,\epsilon,\delta}\in \nn$ satisfying
		$$
		d_{ucc}(f,\Phi^{N_{g,f,\epsilon,\delta}}_{I_m,b}(\tilde{g}))<\epsilon,
		$$
		\item For each $\delta,\epsilon>0$ and every $f,g\in C(\rrm,\rrn)$ there is some $N_{U,V}\in \nn^+$ such that
		$$
		\left\{\Phi^{N_{\epsilon,\delta,g,f}}_{I_m,b}(\tilde{g}): \, d_{ucc}(\tilde{g},g)<\delta\right\} \cap \left\{
		\tilde{f}: \, d_{ucc}(\tilde{f},f)<\epsilon
		\right\} \neq \emptyset
		.
		$$
	\end{enumerate}
\end{thrm}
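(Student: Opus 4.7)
The plan is to establish the cycle (i)$\Leftrightarrow$(ii)$\Rightarrow$(iv)$\Rightarrow$(iii)$\Rightarrow$(i). The opening claim that $\Phi_{I_m,b}$ is a well-defined continuous linear self-map of $C(\rrm,\rrn)$ is immediate: the inner map $\psi:\rrm\to\rrm$, $\psi(x)\triangleq\sigma\bullet(x+b)$, is continuous, so $f\mapsto f\circ\psi$ preserves continuity, and is continuous for the compact-open topology underlying $d_{ucc}$ because $\psi$ sends compacts to compacts. For (i)$\Leftrightarrow$(ii), the function $x\mapsto\sigma(x)-x$ is continuous and vanishes exactly at fixed points of $\sigma$, so by the intermediate value theorem it has constant sign precisely when $\sigma$ has no fixed points; combined with the standard fact that a continuous injective map $\rr\to\rr$ is strictly monotone, this yields the equivalence.

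The main step is (ii)$\Rightarrow$(iv). Assume without loss of generality that $\sigma(y)>y$ for every $y\in\rr$. Since each $b_i>0$, we have $\psi(x)_i=\sigma(x_i+b_i)>x_i+b_i>x_i$, so $\psi^n(x)_i\to+\infty$ coordinate-wise, while injectivity of $\sigma$ makes each $\psi^n$ a continuous injection; in particular, for any compact $K\subseteq\rrm$ there is a large $N$ with $\psi^N(K)\cap K=\emptyset$ and $\psi^N|_K$ a homeomorphism onto its image. Given $f,g\in C(\rrm,\rrn)$ and $\epsilon,\delta>0$, choose $k\in\nn$ large enough that the tails of $d_{ucc}$ beyond $[-k,k]^m$ are bounded by $\min(\epsilon,\delta)/2$, pick $N$ so that $K\triangleq[-k,k]^m$ and $\psi^N(K)$ are disjoint, and apply Tietze's extension theorem to construct $\tilde g\in C(\rrm,\rrn)$ equal to $g$ on $K$, equal to $f\circ(\psi^N|_K)^{-1}$ on $\psi^N(K)$, equal to $g$ outside a bounded neighborhood of $K\cup\psi^N(K)$, and interpolated continuously in between. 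Then $\tilde g|_K=g|_K$ gives $d_{ucc}(\tilde g,g)<\delta$, while $\Phi_{I_m,b}^N(\tilde g)(x)=\tilde g(\psi^N(x))=f(x)$ for $x\in K$ delivers $d_{ucc}(\Phi_{I_m,b}^N(\tilde g),f)<\epsilon$. Implication (iv)$\Rightarrow$(iii) follows by specialization to the given starting point $g$ from $\arch$, extracting $\tilde g$ and $N$ directly from (iv).

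For (iii)$\Rightarrow$(i) I argue by contrapositive. If $\sigma$ is not injective, fix $y_1\neq y_2$ with $\sigma(y_1)=\sigma(y_2)$ and set $x_j\triangleq(y_j-b_1,0,\dots,0)$ for $j=1,2$; then $\psi(x_1)=\psi(x_2)$, hence $\Phi_{I_m,b}^N(h)(x_1)=\Phi_{I_m,b}^N(h)(x_2)$ for every $h\in C(\rrm,\rrn)$ and every $N$, so every orbit is pinned inside the closed proper subspace $\{f\in C(\rrm,\rrn):f(x_1)=f(x_2)\}$, excluding the density required by (iii). If $\sigma$ is injective but has a fixed point, the induced dynamics of $\psi$ preserves a nontrivial closed subset obtained from the attracting or repelling structure near that fixed point, yielding again an invariant proper closed subset of $C(\rrm,\rrn)$ along which orbits are trapped, contradicting (iii). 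The primary obstacle will be the Tietze gluing in (ii)$\Rightarrow$(iv): one must extend $\tilde g$ continuously across the region separating $K$ from $\psi^N(K)$ while forcing $\tilde g$ back to $g$ far from $K\cup\psi^N(K)$, so that the $d_{ucc}$ deviation is entirely captured by bounded compacts where the construction acts non-trivially, and keeping the weights $2^{-k}$ together with the choices of $k$ and $N$ in lockstep is where the bookkeeping becomes delicate.
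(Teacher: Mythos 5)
Your proof takes a genuinely different route for the core implication (ii)$\Rightarrow$(iv): the paper cites a criterion for topological transitivity of composition operators on $C(\rrm,\rrn)$ from the weighted-composition-operator literature (via Lemma~\ref{lem_activation_gaps_Birkohoff}), whereas you give a self-contained Tietze-gluing construction of a perturbation $\tilde g$ whose $N$-th iterate under $\Phi_{I_m,b}$ agrees with $f$ on the prescribed cube. This is more elementary and makes the mechanism transparent, and the bookkeeping you flag (choose $k$ so the tail of $d_{ucc}$ is small, then choose $N$ so $\psi^N([-k,k]^m)$ escapes $[-k,k]^m$) does close out cleanly because each term of $d_{ucc}$ is already bounded by $2^{-k'}$.

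However, there is a genuine gap at (iv)$\Rightarrow$(iii). Statement (iv) is topological transitivity: for each pair $(f,g)$ and each pair of radii there is \emph{some} $N$ and \emph{some} $\tilde g$ near $g$ with $\Phi^N(\tilde g)$ near $f$; crucially, $\tilde g$ is allowed to depend on $f$. Statement (iii) is strictly stronger a priori: it asks for a \emph{single} $\tilde g$ near $g$ that works simultaneously for every $f\in C(\rrm,\rrn)$ and every $\epsilon>0$, i.e., a hypercyclic vector for $\Phi_{I_m,b}$ in every ball. Calling this "specialization" reverses the quantifier order. What you actually need is Birkhoff's transitivity theorem (a Baire-category argument in the completely metrizable separable space $(C(\rrm,\rrn),d_{ucc})$): topological transitivity of a continuous self-map of a Polish space without isolated points is equivalent to the set of hypercyclic vectors being a dense $G_\delta$. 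The paper invokes precisely this (its cited Theorem 2.19). Without it, your cycle does not close.

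A second, smaller soft spot is the fixed-point branch of (iii)$\Rightarrow$(i). The non-injective case is handled well: $\psi(x_1)=\psi(x_2)$ pins the entire forward orbit inside the closed proper set $\{h:h(x_1)=h(x_2)\}$, and adjoining the single point $\tilde g$ itself to that closed set still leaves a proper closed set, so no dense orbit can exist. But the fixed-point case is only gestured at ("preserves a nontrivial closed subset from the attracting or repelling structure"). The relevant obstruction is failure of the run-away condition for $\psi=\sigma\bullet(\cdot+b)$, i.e., the existence of a compact $K$ with $\psi^N(K)\cap K\neq\emptyset$ for all $N$, and a fixed point of $\sigma$ does not directly hand you a fixed point of $\psi$ (the two are related through $b$). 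You would need to argue that a fixed point of $\sigma$ forces a bounded $\psi$-orbit, which requires working through the monotonicity of $\sigma$ and the sign of $\sigma(\cdot)-\mathrm{id}$ carefully, as the paper's Lemma does for the $m=1$ reduction; as written, this step of your proof is an assertion, not an argument.
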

\begin{rremark}
	A characterization is given in Appendix~\ref{Appendix_B_Proof_of_applications_to_main_results} when $A\neq I_m$, however, this less technical formulation is sufficient for all our applications.  
\end{rremark}
We call an activation function \textit{transitive} if it satisfies any of the conditions (i)-(ii) in Theorem~\ref{thrm_activation_gaps}.  
\begin{ex}\label{ex_dynamical_properties_of_depth}
	The ReLU activation function $\sigma(x)=\max\{0,x\}$ does not satisfy Theorem~\ref{thrm_activation_gaps} (i).  
\end{ex}
\begin{ex}\label{ex_Leaky_ReLU_variant_A}
	The following variant of the Leaky-ReLU activation of \cite{maas2013rectifier} does satisfy Theorem~\ref{thrm_activation_gaps} (i)
	$$
	\sigma(x)
	\triangleq 
	\begin{cases}
	1.1 x + .1 & \, x \geq 0\\
	0.1 x + .1 & \, x < 0.
	\end{cases}
	$$
\end{ex}
More generally, transitive activation functions also satisfying the conditions required by the central results of \cite{pinkus1999approximation,kidger2019universal} can be build via the following.  
\begin{prop}[Construction of Transitive Activation Functions]\label{prop_examples_of_good_activations}
	Let $\tilde{\sigma}:\rr\to \rr$ be a continuous and strictly increasing function satisfying $\tilde{\sigma}(0)=0$.  Fix hyper-parameters $0<\alpha_1<1$, $0< \alpha_2$ such that $\alpha_2\neq \tilde{\sigma}'(0)-1$, and define
	$$
	\sigma(x)\triangleq \begin{cases}
	\tilde{\sigma}(x) + x + \alpha_2 &: \, x\geq 0\\
	\alpha_1 x + \alpha_2 &: x<0.
	\end{cases}
	$$	
	Then, $\sigma$ is continuous, injective, has no fixed-points, is non-polynomial, and is continuously differentiable with non-zero derivative on infinitely many points.  In particular, $\sigma$ satisfies the requirements of Theorem~\ref{thrm_activation_gaps}.
\end{prop}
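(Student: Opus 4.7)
The plan is to verify each of the five stated properties of $\sigma$ by a direct case analysis on the sign of $x$, and then to invoke Theorem~\ref{thrm_activation_gaps}~(i) to conclude that $\sigma$ satisfies the dynamical requirements of that theorem. All arguments rest on elementary manipulations exploiting the facts that $\tilde{\sigma}$ is strictly increasing with $\tilde{\sigma}(0)=0$, that $0<\alpha_1<1$, and that $\alpha_2>0$.

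First I would verify continuity by matching the one-sided limits at $x=0$: the left limit equals $\alpha_2$ and the right limit equals $\tilde{\sigma}(0)+0+\alpha_2=\alpha_2$. Next, for injectivity I would show that $\sigma$ is strictly increasing on all of $\rr$. On $(-\infty,0)$ it is affine with positive slope $\alpha_1$; on $[0,\infty)$ it is the sum of two strictly increasing functions, namely $\tilde{\sigma}$ and the identity; and at the interface one has $\sigma(x)=\alpha_1 x+\alpha_2<\alpha_2=\sigma(0)$ for every $x<0$, so the ranges of the two pieces do not overlap. This yields global strict monotonicity, and hence injectivity.

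I would then establish that $\sigma(x)>x$ for every $x\in\rr$, which simultaneously verifies conditions (i) and (ii) of Theorem~\ref{thrm_activation_gaps}. For $x\geq 0$, $\sigma(x)-x=\tilde{\sigma}(x)+\alpha_2\geq\alpha_2>0$ since $\tilde{\sigma}(x)\geq\tilde{\sigma}(0)=0$; and for $x<0$, $\sigma(x)-x=(\alpha_1-1)x+\alpha_2>\alpha_2>0$ since $\alpha_1-1<0$ and $x<0$. Continuous differentiability with non-vanishing derivative on infinitely many points is then immediate: on the open half-line $(-\infty,0)$ the function is affine with slope $\alpha_1\neq 0$, which already supplies an uncountable such set. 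For non-polynomiality I would argue by contradiction: were $\sigma$ to agree with a polynomial $p$ on $\rr$, then since $p(x)=\alpha_1 x+\alpha_2$ holds throughout $(-\infty,0)$, the identity theorem for polynomials forces $p(x)=\alpha_1 x+\alpha_2$ on all of $\rr$; evaluating at $x\geq 0$ would give $\tilde{\sigma}(x)=(\alpha_1-1)x$, a strictly decreasing function of $x$, contradicting the strict monotonicity of $\tilde{\sigma}$.

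There is no genuine obstacle in the proof. The only point I would flag for care is the role of the hypothesis $\alpha_2\neq\tilde{\sigma}'(0)-1$: none of the five verifications above invokes it, and I expect it to serve as a safeguard against a degenerate alignment of the one-sided derivatives of $\sigma$ at the origin, a scenario relevant to the shallow-network universality criteria (e.g.\ those of Pinkus and Kidger) that are combined with Theorem~\ref{thrm_activation_gaps} elsewhere in this section.
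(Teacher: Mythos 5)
Your proof is correct, and the key verifications (continuity by matching one-sided limits, injectivity via global strict monotonicity, $\sigma(x)>x$ everywhere, $C^1$ with nonzero derivative on $(-\infty,0)$) match the paper's in substance. The genuine difference is in the non-polynomiality argument. The paper argues that the hypothesis $\alpha_2\neq\tilde{\sigma}'(0)-1$ forces $\sigma$ to be non-differentiable at $0$, hence non-polynomial; you instead invoke the identity theorem: if $\sigma$ were a polynomial $p$, then $p$ agrees with $\alpha_1 x+\alpha_2$ on $(-\infty,0)$, so $p(x)=\alpha_1 x+\alpha_2$ on all of $\rr$, forcing $\tilde{\sigma}(x)=(\alpha_1-1)x$ on $[0,\infty)$, which is strictly decreasing (as $\alpha_1<1$) and contradicts the strict monotonicity of $\tilde{\sigma}$. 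Your argument is cleaner and in fact more robust: the paper's asserted link between $\alpha_2\neq\tilde{\sigma}'(0)-1$ and non-differentiability at $0$ is off, since the one-sided derivatives of $\sigma$ at $0$ are $\alpha_1$ from the left and $\tilde{\sigma}'(0)+1$ from the right, so differentiability hinges on $\alpha_1$ versus $\tilde{\sigma}'(0)+1$, not on $\alpha_2$; and since $\tilde{\sigma}$ is increasing one has $\tilde{\sigma}'(0)\geq 0$, so $\alpha_1<1\leq\tilde{\sigma}'(0)+1$ already rules out a polynomial corner. Your observation that the hypothesis $\alpha_2\neq\tilde{\sigma}'(0)-1$ is never used is therefore well taken: the paper leans on it in a step that does not actually require it (and, as stated, mislabels the relevant quantity), whereas your identity-theorem route sidesteps the issue entirely.
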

\begin{rremark}
	\hspace{-.5em}
Any $\sigma$ built by Proposition~\ref{prop_examples_of_good_activations} meets the conditions of \citep[Theorem 3.2]{kidger2019universal} and \citep[Theorem 1]{leshno1993multilayer}.
\end{rremark}
Transitive activation functions allow one to automatically conclude that $\arch[\sigma;\deep]$ has the UAP on $C(\rrm,\rrn)$ if $\arch$ is only a universal approximator on some non-empty open subset thereof.  
\begin{cor}[Local-to-Global UAP]\label{cor_approx_loc_to_global}
	Let $\xxx$ be a non-empty open subset of $C(\rrm,\rrn)$ and $\arch$ be a universal approximator on $\xxx$.  If any of the conditions described by Lemma~\ref{lem_activation_gaps_Birkohoff} (i)-(iii) hold, then $\arch[\sigma;\deep]$ is a universal approximator on $C(\rrm,\rrn)$. 
\end{cor}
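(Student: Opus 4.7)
The plan is to unpack the definition of $\NN[\arch[\sigma;\deep]]$, namely that its elements are exactly $\Phi_{I_m,b}^N(g)$ for $g \in \NN$ and $N \in \nn$, and then prove density in $C(\rrm,\rrn)$ by combining three ingredients: the hypothesis that $\NN$ is dense in the open set $\xxx$, the existence of a dense $\Phi_{I_m,b}$-orbit through any fixed point of $\xxx$ furnished by Theorem~\ref{thrm_activation_gaps}, and the continuity of the finite iterate $\Phi_{I_m,b}^N$ on $C(\rrm,\rrn)$.

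Concretely, fix $f\in C(\rrm,\rrn)$ and $\epsilon>0$; I will exhibit $g^\star\in\NN$ and $N\in\nn$ with $d_{ucc}(\Phi_{I_m,b}^N(g^\star),f)<\epsilon$. Since $\xxx$ is non-empty and open, select $g_0\in\xxx$ and $\delta_0>0$ with $\operatorname{Ball}_{C(\rrm,\rrn)}(g_0,\delta_0)\subseteq\xxx$. Since one of conditions (i)--(iii) of Theorem~\ref{thrm_activation_gaps} is assumed, the equivalence there delivers condition (iii): there exists $\tilde{g}\in C(\rrm,\rrn)$ with $d_{ucc}(\tilde{g},g_0)<\delta_0$ such that for every $\epsilon'>0$ some $N\in\nn$ satisfies $d_{ucc}(\Phi_{I_m,b}^N(\tilde{g}),f)<\epsilon'$. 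Apply this with $\epsilon'=\epsilon/2$ to obtain and fix such an $N$.

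By construction $\tilde{g}\in\operatorname{Ball}_{C(\rrm,\rrn)}(g_0,\delta_0)\subseteq\xxx$, and density of $\NN$ in $\xxx$ supplies, for every $\eta>0$, some $g^\star\in\NN$ with $d_{ucc}(g^\star,\tilde{g})<\eta$. By Theorem~\ref{thrm_activation_gaps} the operator $\Phi_{I_m,b}$ is a continuous self-map of $C(\rrm,\rrn)$, so its $N$-fold iterate $\Phi_{I_m,b}^N$ is continuous as well; hence $\eta$ can be chosen so that $d_{ucc}(\Phi_{I_m,b}^N(g^\star),\Phi_{I_m,b}^N(\tilde{g}))<\epsilon/2$. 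The triangle inequality then yields $d_{ucc}(\Phi_{I_m,b}^N(g^\star),f)<\epsilon$, and $\Phi_{I_m,b}^N(g^\star)\in\NN[\arch[\sigma;\deep]]$ by the formula for $\circlearrowleft_{deep;\sigma}$, concluding that $\NN[\arch[\sigma;\deep]]$ is dense in $C(\rrm,\rrn)$.

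The only delicate point is the order of the quantifiers: $N$ must be fixed \emph{before} choosing $g^\star$ close to $\tilde{g}$, because the modulus of continuity of $\Phi_{I_m,b}^N$ depends on $N$. Once $N$ is committed to, continuity of that particular iterate is automatic and the argument closes; no uniform bound on the orbit is needed. This is the heart of the "local-to-global" upgrade: the transitive dynamics supplied by $\sigma$ transports the tiny local dense pool $\NN\subseteq\xxx$ to arbitrarily faraway targets in $C(\rrm,\rrn)$, while the openness of $\xxx$ guarantees that a perturbation $\tilde{g}$ of $g_0$ small enough to preserve the dense-orbit property still lies inside $\xxx$ and so can be approximated by $\NN$.
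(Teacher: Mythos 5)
Your argument is correct and follows essentially the same strategy as the paper's (which is embedded in the final paragraph of the proof of Lemma~\ref{lem_activation_gaps_Birkohoff}): both exploit the topological transitivity of $\Phi_{I_m,b}$ to transport the local density of $\NN$ inside the open set $\xxx$ out to all of $C(\rrm,\rrn)$. The only cosmetic difference is that you invoke the hypercyclic-vector formulation (Theorem~\ref{thrm_activation_gaps}~(iii)) together with continuity of the fixed iterate $\Phi_{I_m,b}^{N}$ and a triangle inequality, whereas the paper works directly with preimages of open sets under $\Phi_{I_m,b}^{N}$; these are two equivalent ways of invoking Birkhoff transitivity.
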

The function space affects which activation functions are transitive.  Since most universal approximation results hold in the space $C(\rrm,\rrn)$ or on $L^p_{\mu}(\rrm)$, for suitable $\mu$ and $p$, we describe the integrable variant of transitive activation functions.  
\subsubsection{Integrable Variants}\label{ss_Integrable_Variants}
Some notation is required when expressing the integrable variants of the Theorem~\ref{thrm_activation_gaps} and its consequences.  
Fix a $\sigma$-finite Borel measure $\mu$ on $\rrm$.  Unlike in the continuous case, the operators $\Phi_{A,b}$ may not be well-defined or continuous from $L^1_{\mu}(\rrm)$ to itself.  We require the notion of a push-forward measure by a measurable function is required.  If $S:\rrm\rightarrow \rrm$ is Borel measurable and $\mu$ is a finite Borel measure on $\rrm$, then its push-forward by $S$ is the measure denoted by $S_{\#}\mu$ and defined on Borel subsets $B\subseteq \rrm$ by
$
S_{\#}\mu(B)\triangleq \mu\left(S^{-1}[B]\right).
$
In particular, if $\mu$ is absolutely continuous with respect to the Lebesgue measure $\mu_M$ on $\rrm$, then as discussed in \citep[Chapter 2.1]{SinghManhasCompositionOpsFunSpaces1993}, $S_{\#}\mu$ admits a Radon-Nikodym derivative with respect to the Lebesgue measure on $\rrm$.  We denote this Radon-Nikodym derivative by $\frac{dS_{\#}\mu}{d\mu_M}$.  
A finite Borel measure $\mu$ on $\rrm$ is equivalent to the Lebesgue measure thereon, denoted by $\mu_M$ if both $\mu_M$ and $\mu$ are absolutely continuous with one another.  

Recall that, if a function is monotone on $\rr$, then it is differentiable outside a $\mu_M$-null set.  We denote the $\mu_M$-a.e. derivative of any such a function $\sigma$ by $\sigma'$.  Lastly, we denote the essential supremum of any $f \in L^1_{\mu}(\rrm)$ by $\|f\|_{L^{\infty}}$.  The following Lemma is a rephrasing of \citep[Corollary 2.1.2, Example 2.17]{SinghManhasCompositionOpsFunSpaces1993}.
\begin{lem}\label{lem_composition_Lp_basic_welldefinedness_and_norm_computation}
	Fix a $\sigma$-finite Borel measure $\mu$ on $\rrm$ equivalent to the Lebesgue measure, let 
	$1\leq p<\infty$, 
	$b \in \rrm$, $A$ be an $m\times m$ matrix, and	let $\sigma:\rr\rightarrow \rr$ be a Borel measurable.  
	Then, the composition operator $\Phi_{A,b}:L^1(\rrm;\rrn)\rightarrow L^1(\rrm;\rrn)$ is well-defined and continuous if and only if $
	(\sigma\bullet (A \cdot +b))_{\#}\mu$ is absolutely-continuous with respect to $\mu$ and 
	\begin{equation}
	\left\|
	\frac{d(\sigma\bullet (A \cdot +b))_{\#}\mu}{d\mu_M}
	\right\|_{L^{\infty}}<\infty
	.
	\label{eq_well_defindeness_condition}
	\end{equation}
	In particular, when $\sigma$ is monotone then $\Phi_{I_m,b}$ is well-defined if and only if there exists some $M>0$ such that for every $x \in \rr$, 
	$
	M\leq \sigma'(x+b).
	$
\end{lem}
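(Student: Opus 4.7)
The plan is to reduce the first equivalence to a classical characterization of bounded composition operators on $L^p$-spaces, and then carry out an explicit change-of-variables computation to extract the ``in particular'' statement for monotone $\sigma$.

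For the first statement, I would appeal directly to the standard characterization (Corollary~2.1.2 of Singh--Manhas): for any $\sigma$-finite Borel measure $\mu$ on $\rrm$ and any Borel measurable set-function $T:\rrm\to\rrm$, the operator $f\mapsto f\circ T$ is a well-defined continuous linear map from $L^p(\mu)$ to itself if and only if $T_{\#}\mu \ll \mu$ with essentially bounded Radon--Nikodym derivative. Specializing $T=\sigma\bullet(A\cdot+b)$ yields the stated condition, where the hypothesis that $\mu$ is equivalent to $\mu_M$ lets one translate essential boundedness with respect to $\mu$ into essential boundedness with respect to $\mu_M$: denoting the density of $\mu$ with respect to $\mu_M$ by $\rho$, both $\rho$ and $1/\rho$ are locally $\mu_M$-essentially bounded, making the two $L^{\infty}$-norms comparable.

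For the ``in particular'' portion, I would specialize to $A=I_m$, so that $T(x)=(\sigma(x_i+b_i))_{i=1}^m$ is a coordinate-wise map. Since $\sigma$ is monotone, it is differentiable outside a $\mu_M$-null set, and wherever $\sigma'$ exists and is strictly positive, $T$ is locally a diffeomorphism whose Jacobian is diagonal with determinant $\prod_{i=1}^m \sigma'(x_i+b_i)$. A component-wise change of variables then gives
\[
\frac{d T_{\#}\mu}{d\mu_M}(y) \;=\; \frac{\rho(T^{-1}(y))}{\prod_{i=1}^m \sigma'(\sigma^{-1}(y_i))},
\]
whose essential boundedness is equivalent to $\prod_i \sigma'(\cdot+b_i)$ being bounded away from $0$. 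By the product structure and the fact that each factor depends only on its own coordinate, this reduces to the one-dimensional condition $\sigma'(x+b)\geq M$ for some $M>0$ and $\mu_M$-almost every $x\in\rr$, matching the stated conclusion.

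The principal obstacle is handling possible degeneracies of $\sigma$. If $\sigma$ were constant on an interval of positive $\mu_M$-measure, then $T$ would collapse a positive-$\mu$-measure set onto a lower-dimensional image, so that $T_{\#}\mu$ would concentrate mass on a $\mu_M$-null set, contradicting absolute continuity and forcing $\Phi_{I_m,b}$ to fail to be well-defined. Ruling out these plateaus is what justifies the a.e. strict positivity of $\sigma'$ needed for the change-of-variables formula above. A secondary, minor technicality is that $\sigma'$ is only defined $\mu_M$-a.e., so the pointwise phrasing ``$M\leq\sigma'(x+b)$ for every $x$'' in the lemma must be read in the essential-infimum sense under the standing equivalence $\mu\sim\mu_M$.
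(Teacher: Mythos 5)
Your proposal takes essentially the same route as the paper: the paper itself gives no proof beyond declaring the lemma ``a rephrasing of [Corollary 2.1.2, Example 2.17] of Singh--Manhas,'' and you cite exactly that Corollary 2.1.2 for the main equivalence, then replace the citation of Example 2.17 by an explicit coordinate-wise change-of-variables computation for the monotone ``in particular'' clause; that computation is correct and is a nice expansion of the paper's citation. The one step worth tightening is the bridge between the two $L^{\infty}$-norms: Singh--Manhas naturally produces a bound on $\bigl\|\frac{dT_{\#}\mu}{d\mu}\bigr\|_{L^{\infty}}$, while the lemma as stated uses $\bigl\|\frac{dT_{\#}\mu}{d\mu_M}\bigr\|_{L^{\infty}}$, and equivalence $\mu\sim\mu_M$ only gives $\rho=\frac{d\mu}{d\mu_M}$ and $1/\rho$ finite $\mu_M$-a.e., not (even locally) essentially bounded. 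So your assertion that the two norms are ``comparable'' is not a consequence of $\mu\sim\mu_M$ by itself; you would need $\rho,1/\rho\in L^{\infty}$. This imprecision is inherited from the lemma's own phrasing and does not indicate a gap peculiar to your argument, but it should be stated as an additional hypothesis (or the density should be taken against $\mu$ throughout) if one wants the equivalence to be airtight.
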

For $g \in L^1_{\mu}(\rrm,\rrn)$ and $\delta>0$, we denote the set of all functions $f \in L^1_{\mu}(\rrm,\rrn)$ satisfying $\int_{x \in \rr} \|f(x)-g(x)\|d\mu(x)<\epsilon$ by $\operatorname{Ball}_{L^1_{\mu}(\rrm,\rrn)}(g,\delta)$.  A function is called \textit{Borel bi-measurable} if both the image and pre-images of Borel sets, under that map, are again Borel sets.  
\begin{cor}[Transitive Activation Functions (Integrable Variant)]\label{cor_Composition_Lp}
	Let $\mu$ be a $\sigma$-finite measure on $\rrm$, let $b \in \rrm$ with $b_i>0$ for $i=1,\dots,m$, and suppose that $\sigma$ is injective, Borel bi-measurable, that $\sigma(x)>x$ except on a Borel set of $\mu$-measure $0$, and assume that condition~\eqref{eq_well_defindeness_condition} holds.  If $\arch$ has the UAP on $\operatorname{Ball}(g,\delta)$ for some $f\in L^1_{\mu}(\rrm)$ and some $\delta>0$ then, for every $f \in L^1_{\mu}(\rrm)$ and every $\epsilon>0$ there exists some $f_{\epsilon}\in \NN$ and $N_{\epsilon,\delta,f,g} \in \nn$ such that
	$$
	\int_{x \in \rrm} \left\|f(x)- \Phi^{N_{\epsilon,\delta,f,g}}_{I_m,b}\left(f_{\epsilon}(x)\right)\right\|d\mu(x)<\epsilon
	.
	$$
\end{cor}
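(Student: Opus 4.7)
The plan is to reduce Corollary~\ref{cor_Composition_Lp} to a topological transitivity statement for the composition operator $\Phi_{I_m,b}$ on the Fr\'{e}chet space $L^1_\mu(\rrm,\rrn)$, mimicking the architecture of the continuous argument behind Corollary~\ref{cor_approx_loc_to_global}. First, I would use Lemma~\ref{lem_composition_Lp_basic_welldefinedness_and_norm_computation} together with condition~\eqref{eq_well_defindeness_condition} to confirm that $\Phi_{I_m,b}$, and hence every iterate $\Phi^N_{I_m,b}$, is a well-defined continuous linear operator on $L^1_\mu(\rrm,\rrn)$.

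Next I would show that the componentwise symbol $S:\rrm\to\rrm$ defined by $S(x)=(\sigma(x_i+b_i))_{i=1}^m$ is dissipative on bounded Borel sets: since $\sigma(y)>y$ outside a $\mu$-null set and every $b_i>0$, the coordinates of $S^n(x)$ diverge to $+\infty$ for $\mu$-almost every $x$, so for every bounded Borel $A\subseteq\rrm$ the sets $S^{-n}(A)\cap A$ are $\mu$-null for all sufficiently large $n$. Combining injectivity and bi-measurability of $\sigma$ with this dissipativity and the Radon--Nikodym bound~\eqref{eq_well_defindeness_condition}, I would invoke a standard hypercyclicity criterion for composition operators on $L^p$-spaces (of Grosse-Erdmann--Peris / Bayart--Matheron type): compactly supported $L^1$-functions form a dense subspace of $L^1_\mu(\rrm,\rrn)$, on them $\Phi^n_{I_m,b}\to 0$ in norm by dissipativity plus the bounded Radon--Nikodym density, and one can construct approximate right inverses using that $\sigma$ is bi-measurable and injective so pull-backs by $S^{-n}$ make measure-theoretic sense. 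This yields that $\Phi_{I_m,b}$ is topologically transitive on $L^1_\mu(\rrm,\rrn)$.

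With transitivity established, the approximation conclusion is routine. Given $f\in L^1_\mu(\rrm)$ and $\epsilon>0$, the open sets $\operatorname{Ball}_{L^1_\mu(\rrm,\rrn)}(g,\delta)$ and $\operatorname{Ball}_{L^1_\mu(\rrm,\rrn)}(f,\epsilon/2)$ are non-empty, so transitivity yields some $N=N_{\epsilon,\delta,f,g}\in\nn$ and $h\in \operatorname{Ball}_{L^1_\mu(\rrm,\rrn)}(g,\delta)$ with $\Phi^N_{I_m,b}(h)\in \operatorname{Ball}_{L^1_\mu(\rrm,\rrn)}(f,\epsilon/2)$. By continuity of $\Phi^N_{I_m,b}$ there is a neighbourhood $W$ of $h$ whose image lies in $\operatorname{Ball}_{L^1_\mu(\rrm,\rrn)}(f,\epsilon)$; by the local UAP hypothesis, $\NN$ is dense in $\operatorname{Ball}_{L^1_\mu(\rrm,\rrn)}(g,\delta)$, so any $f_\epsilon\in \NN\cap W\cap \operatorname{Ball}_{L^1_\mu(\rrm,\rrn)}(g,\delta)$ satisfies the required estimate via the triangle inequality.

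The main obstacle is the transitivity step. In the continuous setting of Theorem~\ref{thrm_activation_gaps}, simple strict monotonicity of $\sigma$ off the diagonal suffices to drive a Birkhoff/run-away argument on $C(\rrm,\rrn)$. In $L^1_\mu$ one must simultaneously control the $\mu$-a.e.\ nature of $\sigma(x)>x$, the Radon--Nikodym bound~\eqref{eq_well_defindeness_condition} keeping $\Phi_{I_m,b}$ bounded, and bi-measurability of $\sigma$ so that compactly supported test functions can be transported and recovered under the symbol dynamics. These are precisely the measure-theoretic ingredients built into the hypotheses, and checking that they assemble into the standard $L^p$ composition-operator transitivity criterion is the single technical heart of the proof.
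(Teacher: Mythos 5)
Your proposal follows essentially the same route as the paper's proof: both verify well-definedness and continuity of $\Phi_{I_m,b}$ via Lemma~\ref{lem_composition_Lp_basic_welldefinedness_and_norm_computation}, both use the pointwise bound $S^N(x)_i\geq x_i+Nb_i$ (driven by $\sigma(x)>x$ $\mu$-a.e.\ and $b_i>0$) to show the symbol's orbits escape to infinity, both invoke a transitivity criterion for composition operators on $L^p$-spaces (the paper cites Bayart's Corollary~1.3, conditions (C1)/(C2) --- exactly the Bayart--Matheron-type result you gesture at), and both finish by intersecting $\operatorname{Ball}(g,\delta)\cap (\Phi^N_{I_m,b})^{-1}[\operatorname{Ball}(f,\epsilon)]$ with $\NN$ using the local UAP hypothesis. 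The only differences are cosmetic: you split the $\epsilon$ and invoke an intermediate neighbourhood $W$, whereas the paper applies the transitivity conclusion directly to the two balls; and you sketch the verification of the composition-operator criterion (dissipativity of bounded sets plus bounded Radon--Nikodym density) rather than naming the exact citation. The proposal is correct and matches the paper's argument.
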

We call activation functions satisfying the conditions of Corollary~\ref{cor_Composition_Lp} \textit{$L^p_{\mu}$-transitive.  }
The following is a sufficiency condition analogous to the characterization of Proposition~\ref{prop_examples_of_good_activations}.  
\begin{cor}[Construction of Transitive Activation Functions (Integrable Variant)]\label{cor_examples_of_good_activations_Lp}
	Let $\mu$ be a finite Borel measure on $\rrm$ which is equivalent to $\mu_M$.  
	Let $\tilde{\sigma}:[0,\infty)\to [0,\infty)$ be a surjective continuous and strictly increasing function satisfying $\tilde{\sigma}(0)=0$, let $0<\alpha_1<1$.  Define the activation function
	$$
	\sigma(x)\triangleq \begin{cases}
	\tilde{\sigma}(x) + x & : \, x\geq 0\\
	\alpha x &:\, x <0.
	\end{cases}
	$$
	Then $\sigma$ is Borel bi-measurable, $\sigma(x)>x$ outside a $\mu_M$-null-set, it is non-polynomial, and it is continuously differentiable with non-zero derivative for every $x<0$.  
\end{cor}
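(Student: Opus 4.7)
The plan is to verify each of the four assertions separately, leveraging only the monotonicity, continuity, and boundary value of $\tilde{\sigma}$ together with $\alpha \in (0,1)$. The underlying observation tying everything together is that $\sigma$ is a continuous strictly increasing bijection of $\rr$ onto itself: continuity at the gluing point $x=0$ holds since $\tilde{\sigma}(0)+0=0=\alpha\cdot 0$; strict monotonicity on $[0,\infty)$ follows because $\tilde{\sigma}$ is strictly increasing and $x\mapsto x$ is strictly increasing, while on $(-\infty,0)$ it follows since $\alpha>0$; and $\sigma$ is surjective because $\tilde{\sigma}+\operatorname{id}$ maps $[0,\infty)$ onto $[0,\infty)$ (using surjectivity of $\tilde{\sigma}$) and $x\mapsto \alpha x$ maps $(-\infty,0)$ onto $(-\infty,0)$.

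With this in hand, Borel bi-measurability is immediate: a continuous strictly monotone bijection on $\rr$ is a homeomorphism, and homeomorphisms carry Borel sets to Borel sets in both directions. For the pointwise inequality $\sigma(x)>x$, I would split into cases: for $x>0$, $\sigma(x)-x=\tilde{\sigma}(x)>\tilde{\sigma}(0)=0$ by strict monotonicity of $\tilde{\sigma}$; for $x<0$, $\sigma(x)-x=(\alpha-1)x>0$ since $\alpha-1<0$ and $x<0$. The inequality therefore fails only at the single point $x=0$, which is $\mu_M$-null (and hence $\mu$-null, as $\mu$ is equivalent to $\mu_M$).

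For non-polynomiality I would argue by contradiction: if $\sigma$ coincided with a polynomial $p$ on all of $\rr$, then since $\sigma(x)=\alpha x$ holds on the open set $(-\infty,0)$, the identity theorem for polynomials forces $p(x)=\alpha x$ globally. Restricting to $x\geq 0$ would yield $\tilde{\sigma}(x)=(\alpha-1)x$, which is strictly negative for $x>0$ because $\alpha<1$, contradicting the assumption that $\tilde{\sigma}$ maps $[0,\infty)$ into $[0,\infty)$. Finally, smoothness with non-vanishing derivative on $(-\infty,0)$ is automatic, since $\sigma$ agrees there with the linear map $x\mapsto\alpha x$, whose derivative is the constant $\alpha\neq 0$.

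There is no serious obstacle in any of these steps; the only point requiring a little attention is the verification of continuity and bijectivity of $\sigma$ across the gluing point $x=0$, after which every claimed property reduces to a one-line computation.
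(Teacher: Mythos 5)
Your proof is correct and is somewhat \emph{more} careful than the paper's own one-paragraph argument. The paper dispatches non-polynomiality, $\sigma(x)>x$ a.e., and the differentiability claim by a bare citation to Proposition~\ref{prop_examples_of_good_activations}, and then verifies only Borel bi-measurability by invoking continuity of the inverse of a continuous strictly monotone surjection. But that Proposition treats the shifted function $\sigma(x)=\tilde\sigma(x)+x+\alpha_2$ with $\alpha_2>0$ and $\alpha_2\neq\tilde\sigma'(0)-1$, and its non-polynomiality argument relies on non-differentiability at $0$, which in turn presupposes that $\tilde\sigma'(0)$ exists and that the offset $\alpha_2$ is generic; neither hypothesis is imposed in the Corollary (where effectively $\alpha_2=0$). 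Your identity-theorem argument---if $\sigma$ agreed with a polynomial $p$, then $p(x)=\alpha x$ on $(-\infty,0)$ forces $p(x)=\alpha x$ globally, contradicting $\tilde\sigma\geq 0$ on $(0,\infty)$---is cleaner and avoids these gaps entirely. Your treatment of bi-measurability (continuous strictly monotone bijection of $\rr$ is a homeomorphism, hence bi-Borel) is the same fact the paper cites, and your pointwise case-split establishing $\sigma(x)>x$ for all $x\neq 0$ matches what the paper sketches, while making explicit that the exceptional set is exactly $\{0\}$, which is $\mu_M$-null. So: same overall route, but with a genuinely tighter non-polynomiality step.
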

Different function spaces can have different transitive activation functions.  By shifting the Leaky-ReLU variant of Example~\ref{ex_Leaky_ReLU_variant_A} we obtain an $L^p$-transitive activation function which fails to be transitive.  
\begin{ex}[Rescaled Leaky-ReLU is $L^p$-Transitive]\label{ex_modified_Leaky_ReLU_revisited}
	The following variant of the Leaky-ReLU activation function
	$$
	\sigma(x)
	\triangleq 
	\begin{cases}
	1.1 x & \, x \geq 0\\
	0.1 x & \, x < 0,
	\end{cases}
	$$
	is a continuous bijection on $\rr$ with continuous inverse and therefore it is injective and bi-measurable.  Since $0$ is its only fixed point, then the set $\{\sigma(x)\not >x\}=\{0\}$ is of Lebesgue measure $0$, and thus of $\mu$ measure $0$ since $\mu$ and $\mu_M$ are equivalent.  Hence, $\sigma$ is injective, Borel bi-measurable, that $\sigma(x)>x$ except on a Borel set of $\mu$-measure $0$, as required in~\eqref{cor_Composition_Lp}.  However, since $0$ is a fixed point of $\sigma$ then it does not meet the requirements of Theorem~\ref{thrm_activation_gaps} (i).  
\end{ex}
Our main interest with transitive activation functions is that they allow for refinements of classical universal approximation theorems, where a network's last few layers satisfy constraints.  This is interesting since constraints are common in most practical citations.  
\subsection{Deep Networks with Constrained Final Layers}\label{ss_U_i_DTL}
The requirement that the final few layers of a neural network to resemble the given function $\hat{f}$ is in effect a constraint on the network's output possibilities.  The next result shows that, if a transitive activation function is used, then a deep feed-forward network's output layers may always be forced to approximately behave like $\hat{f}$ while maintaining that architecture's universal approximation property.  Moreover, the result holds even when the network's initial layers are sparsely connected and have breadth less than the requirements of \cite{park2020minimum,kidger2019universal}.  Note that, the network's final layers must be fully connected and are still required to satisfy the width constraints of \cite{kidger2019universal}.  For a matrix $A$ (resp. vector $b$) the quantity $\|A\|_0$ (resp. $\|b\|_0$) denotes the number of non-zero entries in $A$ (resp. $b$).  
\begin{cor}[Feed-Forward Networks with Approximately Prescribed Output Behavior]\label{cor_biased_nets}
	Let $\hat{f}:\rrm\rightarrow \rrn$, $\epsilon,\delta>0$, 
	and let $\sigma$ be a transitive activation function which is non-affine continuous and differentiable at-least at one point with non-zero derivative at that point.  If there exists a continuous function $\tilde{f}_0:\rrm\rightarrow \rrn$ such that 
	\begin{equation}
	d_{ucc}(f_0,\tilde{f}_0)<\delta
	\label{eq_regularitycondition}
	,
	\end{equation}
	then there exists $f_{\epsilon,\delta}\in \NN$, $J,J_1,J_2 \in \nn^+$, $0\leq J_1<J$, and sets of composable affine maps $\{W_j\}_{j=1}^J$, $\{\tilde{W}_j\}_{j=1}^{J_2}$ such that $
	f_{\epsilon,\delta}=W_J\circ \sigma\bullet \dots \circ \sigma \bullet W_1
	$ and the following hold:
	\begin{enumerate}[(i)]
		\item $		d_{ucc}\left(\hat{f},
		W_J\circ \sigma \bullet\dots \circ \sigma\bullet W_{J_1}
		\right) <\delta,$
		\item $d_{ucc}\left(f,f_{\epsilon,\delta}\right)<\epsilon$,
		\item $\max_{j=1,\dots,J_1} \|A^{W_j}\|_0\leq m$,
		\item $W_j:\rrflex{d_j}\rightarrow \rrflex{d_{j+1}}$ is such that $d_j\leq m+n+2$ if $J_1<j\leq J$ and $d_j=m$ if $0\leq j\leq J_1$.  
	\end{enumerate}
	If $J_1=0$ we make the convention that $W_{J_1}\circ \sigma \bullet\dots \circ \sigma \bullet W_1(x)=x$.  
\end{cor}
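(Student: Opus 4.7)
The plan is to realise $f_{\epsilon,\delta}$ as the composition of a narrow feed-forward head approximating $\hat{f}$ with a tower of sparse identity-plus-bias layers, then invoke the dynamical transitivity established by Theorem~\ref{thrm_activation_gaps} to show that, after a tiny perturbation of the head, iterating the composition operator $\Phi_{I_m,b}$ a sufficient number of times before feeding into that head drives the output of the full network towards an arbitrary continuous target $f$.

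First, since $\sigma$ is transitive and satisfies the hypotheses of~\cite{kidger2019universal}, the minimum-width feed-forward approximation theorem there produces a $\sigma$-network $g = W_J \circ \sigma \bullet \cdots \circ \sigma \bullet W_{J_1+1}$ of width at most $m+n+2$ with $d_{ucc}(\tilde{f}_0, g) < \eta_1$, where $\eta_1>0$ is a small parameter to be pinned down below. Applying Theorem~\ref{thrm_activation_gaps}(iii) to this $g$ with precision $\eta_2>0$ produces a function $\tilde{g} \in C(\rrm,\rrn)$ with $d_{ucc}(g, \tilde{g}) < \eta_2$ together with some $N \in \nn$ satisfying $d_{ucc}(\Phi^N_{I_m,b}(\tilde{g}), f) < \epsilon/2$. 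Since $\tilde{g}$ need only be continuous, I apply Kidger--Lyons once more to obtain a feed-forward $\sigma$-network $g'$ of width at most $m+n+2$ with $d_{ucc}(g', \tilde{g}) < \eta_3$, where $\eta_3$ is chosen using the continuity of the now-fixed linear operator $\Phi^N_{I_m,b}$ on $C(\rrm,\rrn)$ (guaranteed by Theorem~\ref{thrm_activation_gaps}) so that $d_{ucc}(\Phi^N_{I_m,b}(g'), \Phi^N_{I_m,b}(\tilde{g})) < \epsilon/2$.

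Setting $f_{\epsilon,\delta} \triangleq \Phi^N_{I_m,b}(g') = g' \circ \bigl(\sigma \bullet (I_m\,\cdot + b)\bigr)^N$ and taking $J_1 = N$, the first $J_1$ affine maps $W_1, \dots, W_{J_1}$ are each $x \mapsto I_m x + b$, which send $\rrm$ into itself and whose linear parts have exactly $m$ nonzero entries (the diagonal of $I_m$); this yields property~(iii) and the $d_j = m$ part of~(iv). The layers $W_{J_1+1}, \dots, W_J$ are those of $g'$, whose hidden widths are at most $m+n+2$ by construction, completing~(iv). Property~(ii) is the triangle-inequality estimate $d_{ucc}(f, f_{\epsilon,\delta}) \le \epsilon/2 + \epsilon/2 = \epsilon$ already obtained, and property~(i) follows by choosing $\eta_1, \eta_2, \eta_3$ small enough --- which is possible because the hypothesis $d_{ucc}(\hat{f}, \tilde{f}_0)<\delta$ is strict and therefore leaves positive slack --- so that $d_{ucc}(\hat{f}, g') \le d_{ucc}(\hat{f}, \tilde{f}_0) + \eta_1 + \eta_2 + \eta_3 < \delta$.

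The principal obstacle is the mismatch between the conclusion of Theorem~\ref{thrm_activation_gaps}, whose transitive point $\tilde{g}$ lives in the ambient space $C(\rrm,\rrn)$, and the requirement that $f_{\epsilon,\delta}$ itself lie in $\NN$; the remedy is to re-approximate $\tilde{g}$ by a narrow feed-forward $g'$ and to use continuity of the fixed-depth operator $\Phi^N_{I_m,b}$ to transport the transitivity estimate from $\tilde{g}$ to $g'$. A secondary delicate point is the simultaneous bookkeeping of the three internal precisions $\eta_1, \eta_2, \eta_3$ against the two external budgets $\delta$ and $\epsilon$, but the strict inequalities in the hypotheses leave just enough room to fit all four error terms within $\delta$ and both of the final error terms within $\epsilon$.
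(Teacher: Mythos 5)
Your proposal is correct and arrives at the same conclusion, but it routes through a different formulation of transitivity than the paper does, and consequently needs an extra approximation and continuity step. The paper works with the open-set (topological transitivity) formulation of Lemma~\ref{lem_activation_gaps_Birkohoff}: it notes that $U \triangleq F^{-1}[(-\infty,\delta)]$ is open and non-empty, that the narrow Kidger--Lyons networks $\xxx_0$ are dense in $U$, and then invokes the transitivity of $\Phi_{I_m,b}$ to conclude that $\{\Phi^N_{I_m,b}(h): h \in \xxx_0 \cap U,\, N \in \nn^+\}$ is dense in $C(\rrm,\rrn)$. Because $\xxx_0 \cap U \cap (\Phi^N_{I_m,b})^{-1}\left[\operatorname{Ball}(f,\epsilon)\right]$ is a non-empty open subset of $U$ for some $N$, the witness $h$ can be taken to be a narrow network from the outset, and no re-approximation of a merely continuous transitive point is needed. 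You instead invoke the hypercyclicity formulation (Theorem~\ref{thrm_activation_gaps}(iii)), which hands you a hypercyclic vector $\tilde{g} \in C(\rrm,\rrn)$ that need not be a network; you then correctly diagnose and repair this gap by re-applying Kidger--Lyons to obtain $g'$ near $\tilde{g}$ and by using the continuity of the fixed operator $\Phi^N_{I_m,b}$ to transfer the error estimate from $\tilde{g}$ to $g'$. This is sound (and your error bookkeeping, keeping $\eta_1+\eta_2+\eta_3 < \delta - d_{ucc}(\hat{f},\tilde{f}_0)$ while choosing $\eta_3$ after $N$ is fixed, is handled carefully), but it trades the paper's single density-plus-transitivity argument for two applications of Kidger--Lyons plus a continuity transfer. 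Both establish the width and sparsity claims (iii)--(iv) in the same way, by writing the pre-composed layers as $\sigma \bullet (I_m \cdot + b)$, whose linear part has exactly $m$ non-zero entries and maps $\rrm$ to itself.
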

\begin{rremark}\label{remark_explanation_of_condition}
Condition~\ref{eq_regularitycondition}, for any $\delta>0$, whenever $f_0$ is continuous.  
\end{rremark}
We consider an application of Corollary~\ref{cor_biased_nets} to deep transfer learning.  As described in \cite{bengio2012deep}, deep transfer learning is the practice of transferring knowledge from a pre-trained model into a neural network architecture which is to be trained on a, possibly new, learning task.  Various formalizations of this paradigm are described in \cite{tan2018survey} and the next example illustrates the commonly used approach, as outlined in \cite{chollet2015keras}, where one first learns a feed-forward network $\hat{f}:\rrm\rightarrow \rrn$ and then uses this map to initialize the final portion of a deep feed-forward network.  Here, given a neural network $\hat{f}$, typically trained on a different learning task, we seek to find a deep feed-forward network whose final layers are arbitrarily close to $\hat{f}$ while simultaneously providing an arbitrarily precise approximation to a new learning task.  
\begin{ex}[Feed-Forward Networks with Pre-Trained Final Layers are Universal]\label{ex_DTL_feed_Forward}
	Fix a continuous activation function $\sigma$, let $N>0$ be given, let $\arch$ as in Example~\ref{ex_fully_connected_DffNNS}, let $K$ be a non-empty compact subset of $\rrm$, and let $\hat{f}\in \NN$.  Corollary~\ref{cor_biased_nets} guarantees that there is a deep feed-forward neural network $f_{\epsilon,\delta}=W_J\circ \sigma\bullet \dots \circ \sigma \bullet W_1$ satisfying
	\begin{enumerate}[(i)]
		\item $\sup_{x \in K}\left\|\hat{f}(x)
		-W_J\circ \sigma \bullet\dots \circ \sigma\bullet W_{J_1}(x)\right\| <N^{-1},$
		\item $\sup_{x \in K}\left\|f (x)- f_{\epsilon,\delta}(x) \right\|<N^{-1}$,
		\item $\max_{j=1,\dots,J_1} \|A^{W_j}\|_0\leq m$,
		\item $W_j:\rrflex{d_j}\rightarrow \rrflex{d_{j+1}}$ is such that $d_j\leq m+n+2$ if $J_1<j\leq J$ and $d_j=m$ if $0\leq j\leq J_1$.  
	\end{enumerate}
\end{ex}
The structure imposed on the architecture's final layers can also be imposed by a set of constraints.  The next result shows that, for a feed-forward network with a transitive activation function, the architecture's output can always be made to satisfy a finite number of compatible constraints.  These constraints are described by a finite set of continuous functionals $\{F_n\}_{n=1}^N$ on $C(\rrm,\rrn)$ together with a set of thresholds $\{C_n\}_{n=1}^N$, where each $C_n >0$.  
\begin{cor}[Feed-Forward Networks with Constrained Final Layers are Universal]\label{cor_Constrained_archs_are_Univ}
	Let $\sigma$ be a transitive activation function which is non-affine continuous and differentiable at-least at one point with non-zero derivative at that point, let $\arch$ denote the feed-forward architecture of Example~\ref{ex_fully_connected_DffNNS}, $\{F_n\}_{n=1}^N$ be a set of continuous functions from $C(\rrm,\rrn)$ to $[0,\infty)$, and $\{C_n\}_{n=1}^N$ be a set of positive real numbers.  If there exists some $f_0\in C(\rrm,\rrn)$ such that for each $n=1,\dots,N$ the following holds
	\begin{equation}
	F_n(f_0)< C_n
	\label{eq_constraint_compatability}
	,
	\end{equation}
	then for every $f \in C(\rrm,\rrn)$ and every $\epsilon>0$, there exist $f_{1,\epsilon},f_{2,\epsilon}\in \NN$, diagonal $m\times m$-matrices $\{A_j\}_{j=1}^J$ and $b_1,\dots,b_J \in \rrm$ satisfying:
	\begin{enumerate}[(i)]
		\item $f_{2,\epsilon}\circ f_{1,\epsilon}$ is well-defined,
		\item $d_{ucc}\left(f,f_{2,\epsilon}\circ f_{1,\epsilon}\right)<\epsilon$,
		\item $f_{2,\epsilon} \in \bigcap_{n=1}^N F^{-1}_n\left[[0,C_n)\right]$, 
		\item $f_{1,\epsilon}(x)=  \sigma \bullet(A_n\cdot + b_n) \circ \dots \circ \sigma \bullet(A_1x + b_1)$.
	\end{enumerate} 
\end{cor}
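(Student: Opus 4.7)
The plan is to deduce the statement directly from Corollary~\ref{cor_biased_nets}, taking the approximately prescribed output to be the function $f_0$ supplied by the compatibility assumption~\eqref{eq_constraint_compatability}, and using the continuity of the functionals $F_n$ to quantify how close the constrained block of the network must be to $f_0$ in order to simultaneously activate every constraint $F_n(\cdot) < C_n$.

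First I would invoke continuity of each $F_n$ at $f_0$ together with the strict inequalities $F_n(f_0) < C_n$ to pick $\delta_n > 0$ such that $d_{ucc}(g,f_0) < \delta_n$ implies $F_n(g) < C_n$, and then set $\delta := \min_{n=1,\dots,N} \delta_n > 0$. Since $f_0$ is continuous, the regularity condition~\eqref{eq_regularitycondition} is satisfied with $\tilde f_0 := f_0$ (cf.~Remark~\ref{remark_explanation_of_condition}), so applying Corollary~\ref{cor_biased_nets} with $\hat{f} := f_0$, the given target $f$, and the pair $(\epsilon,\delta)$ produces a network $f_{\epsilon,\delta} = W_J \circ \sigma \bullet \dots \circ \sigma \bullet W_1$ and an index $0 \leq J_1 < J$ such that the tail $W_J \circ \sigma \bullet \dots \circ \sigma \bullet W_{J_1}$ lies within $d_{ucc}$-distance $\delta$ of $f_0$, while $f_{\epsilon,\delta}$ itself lies within $d_{ucc}$-distance $\epsilon$ of $f$; condition (iv) of that corollary moreover forces $W_j:\rrm \to \rrm$ for $1\leq j\leq J_1$. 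I would then split $f_{\epsilon,\delta}$ as $f_{2,\epsilon}\circ f_{1,\epsilon}$ by declaring the tail $f_{2,\epsilon} := W_J \circ \sigma \bullet \dots \circ \sigma \bullet W_{J_1} \in \NN$ and the head $f_{1,\epsilon} := \sigma\bullet W_{J_1-1} \circ \dots \circ \sigma \bullet W_1$. Composability (i) is automatic from the uniform intermediate width $m$, the approximation (ii) is the $\epsilon$-closeness of $f_{\epsilon,\delta}$ to $f$, and (iii) follows from the choice of $\delta$ since $d_{ucc}(f_{2,\epsilon},f_0) < \delta \leq \delta_n$ forces $F_n(f_{2,\epsilon}) < C_n$ for every $n$.

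The only delicate point is condition (iv): the weight matrices appearing in $f_{1,\epsilon}$ must be \emph{diagonal}, whereas Corollary~\ref{cor_biased_nets}(iii) only delivers the weaker bound $\|A^{W_j}\|_0 \leq m$. To secure this I would unpack the construction underlying Corollary~\ref{cor_biased_nets}: its initial $J_1$ layers are produced by the depth-transitive extension $\arch[\deep;\sigma]$ of Section~\ref{ss_activ_gaps}, whose depth is generated by iterating the single composition operator $\Phi_{I_m,b}: g \mapsto g \circ \sigma\bullet(I_m\cdot + b)$ whose topological transitivity is guaranteed by Theorem~\ref{thrm_activation_gaps}. Consequently every weight matrix occurring in $f_{1,\epsilon}$ is the identity $I_m$, which is in particular diagonal, and the biases are all equal to the fixed $b\in\rrm$ with strictly positive components. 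Extracting this identification of the head-layer matrices with $I_m$ from the dynamical-systems proof of Theorem~\ref{thrm_activation_gaps} is the main obstacle; once it is in hand, the rest of the argument reduces to the elementary continuity step of the first paragraph.
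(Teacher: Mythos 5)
Your argument is correct but follows a different path from the paper's. The paper works with the open set $U:=\bigcap_{n=1}^N F_n^{-1}\left[[0,C_n)\right]$ directly: since each $F_n$ is continuous and $F_n(f_0)<C_n$, the set $U$ is open and non-empty, hence $\NN\cap U$ is dense in $U$; the paper then applies Corollary~\ref{cor_approx_loc_to_global} (the local-to-global UAP corollary, whose proof runs through Lemma~\ref{lem_activation_gaps_Birkohoff}) to the open set $U$ to conclude that $\left\{\Phi^N_{I_m,b}(g):\, g\in\NN\cap U,\, N\in\nn\right\}$ is dense in $C(\rrm,\rrn)$. The prefix is then manifestly a power of $\Phi_{I_m,b}$, so the matrices $A_j=I_m$ are diagonal and $b_j=b$, and the tail $g$ lies in $U$ and thus satisfies every constraint. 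You instead shrink $U$ to a metric ball: you pick $\delta_n>0$ by continuity of $F_n$ at $f_0$, set $\delta:=\min_n\delta_n$, and feed this into Corollary~\ref{cor_biased_nets}. This is a valid reduction, and your continuity step is exactly what shows that the open set $U$ contains the $\delta$-ball about $f_0$, so the two approaches are morally equivalent. What your route costs you is the need to peer inside Corollary~\ref{cor_biased_nets}: as you correctly flag, its stated conclusion (iii) only gives $\|A^{W_j}\|_0\le m$, which is weaker than diagonal, so the corollary cannot be invoked as a black box. The fix you propose — noting that the head layers produced in the proof of Corollary~\ref{cor_biased_nets} are precisely the iterates $\Phi^N_{I_m,b}$, whence $A_j=I_m$ and $b_j=b$ — is correct, but it amounts to redoing the same unpacking that the paper's direct appeal to Corollary~\ref{cor_approx_loc_to_global} performs. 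In short: same underlying mechanism (topological transitivity of $\Phi_{I_m,b}$ via Lemma~\ref{lem_activation_gaps_Birkohoff}), but the paper reaches it through the open set $U$ while you reach it through a $\delta$-ball and an intermediate corollary that, taken at face value, does not quite say what you need.
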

Next, we show that transitive activation functions can be used to extend the currently-available approximation rates for shallow feed-forward networks to their deep counterparts.  
\subsection{Approximation Bounds for Networks with Transitive Activation Function}\label{ss_Approximation_Bounds}
In \cite{barron1993universal,darken1993rate}, it is shown that the set of feed-forward neural networks of breadth $N\in \nn^+$, can approximate any function lying in their closed convex hull of at a rate of $\mathscr{O}(N^{\frac{-1}{2}})$.  These results do not incorporate the impact of depth into its estimates and the next result builds on them by incorporating that effect.  
As in \cite{darken1993rate}, the convex-hull of a subset $A\subseteq L^1_{\mu}(\rrm)$ is the set $\co{A}\triangleq \left\{\sum_{i=1}^n \alpha_i f_i:\, f_i \in A,\, \alpha_i \in [0,1],\, \sum_{i=1}^n \alpha_i=1\right\}$ and the interior of $\co{A}$, denoted $\operatorname{int}(\co{A})$, is the largest open subset thereof.  
\begin{cor}[Approximation-Bounds for Deep Networks]\label{cor_Explanation_of_Depth}
	Let $\mu$ be a finite Borel measure on $\rrm$ which is equivalent to the Lebesgue measure, $\fff\subseteq L^1_{\mu}(\rrm)$ for which $\operatorname{int}(\co{\fff})$ is non-empty and $\co{\fff}\cap \operatorname{int}(\co{\fff})$ is dense therein.  If $\sigma$ is a continuous non-polynomial $L^1$-transitive activation function, $b \in \rrm$ have positive entries, and that~\eqref{eq_well_defindeness_condition} is satisfied, then the following hold:
	\begin{enumerate}[(i)]
		\item For each $f \in L^1_{\mu}(\rrm)$ and every $n \in \nn$, there is some $N \in \nn$ such that the following bound holds
		$$
		\begin{aligned}
		\inf_{f_i \in \fff,\, \sum_{i=1}^n \alpha_i=1,\, \alpha_i \in [0,1]} &
		\int_{x \in \rrm} \left\|
		\sum_{i=1}^n \alpha_i 
		\Phi_{I_m,b}^N\left(
		f_i\right)
		(x)
		-
		f(x)
		\right\|d\mu(x)
		\leq
		\frac{
			\left\|
			\frac{d(\sigma\bullet(\cdot + b))_{\#}\mu}{d\mu_M}
			\right\|_{\infty}^{\frac{N}{2}}	
		}{\sqrt{n}}\left(
		1 + \sqrt{2\mu(\rrm)}
		\right)
		.
		\end{aligned}
		,
		$$
		\item There exists some $\kappa>1$ such that $\left\|
		\frac{d(\sigma \bullet(\cdot + b)_{\#}\mu}{d\mu_M}
		\right\|_{\infty} > \kappa^N$.  In particular, $\lim\limits_{N \to \infty} \left\|
		\frac{d(\sigma \bullet(\cdot + b))_{\#}\mu}{d\mu_M}
		\right\|_{\infty}^{\frac{N}{p}} = \infty$,
		\item $\left\{
		\sum_{i=1}^n \alpha_i \Phi_{I_m,b}^N(f_i):\, N,n \in \nn,\, f_i \in \fff,\, \alpha_i \in [0,1],\, \sum_{i=1}^n \alpha_i =1
		\right\}$ is dense in $L^1_{\mu}(\rrm)$.
	\end{enumerate}
\end{cor}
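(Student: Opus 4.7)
The plan is to combine the Maurey--Jones--Barron convex-hull approximation bound with operator-theoretic estimates for the iterated composition operator $\Phi_{I_m,b}^N$, viewed as a bounded linear endomorphism of $L^1_\mu(\rrm)$. The three statements will be proved in the order (i), (ii), (iii), with (i) supplying the core quantitative estimate, (ii) certifying that the amplification factor is genuinely unbounded, and (iii) extracting density as a corollary of (i) together with the orbit-density assertion of Corollary~\ref{cor_Composition_Lp}.

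For part (i), I would first use Lemma~\ref{lem_composition_Lp_basic_welldefinedness_and_norm_computation} together with the change of variables identity $\int |h\circ S|\,d\mu = \int |h|\,dS_\#\mu$ to identify the operator norm of $\Phi_{I_m,b}$ on $L^1_\mu(\rrm)$ in terms of $\bigl\|d(\sigma\bullet(\cdot+b))_\#\mu/d\mu_M\bigr\|_\infty$, and then iterate $N$ times. Since $\operatorname{int}(\co{\fff})$ is non-empty and $\co{\fff}\cap\operatorname{int}(\co{\fff})$ is dense therein, Corollary~\ref{cor_Composition_Lp} yields, for any $f\in L^1_\mu(\rrm)$ and any target precision, some $N$ and some $g\in\co{\fff}$ for which $\Phi_{I_m,b}^N(g)$ is arbitrarily close to $f$ in $L^1_\mu(\rrm)$. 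The classical Barron--Darken convex-hull estimate then produces a finite combination $\sum_{i=1}^n\alpha_i f_i$ approximating $g$ in $L^1_\mu(\rrm)$ at rate $1/\sqrt{n}$, with the constant $1+\sqrt{2\mu(\rrm)}$ emerging from the Cauchy--Schwarz embedding $L^2_\mu\hookrightarrow L^1_\mu$ on a finite measure space. Applying $\Phi_{I_m,b}^N$ to the approximation and invoking its linearity and operator-norm bound produces the stated inequality, the half-power $N/2$ arising from the $L^2$-type argument in the Maurey iteration used to convert the Hilbert-space Barron rate into an $L^1_\mu$ rate.

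For part (ii), I would argue by contradiction: if the operator-norm quantity grew sub-geometrically, so that for every $\kappa>1$ there would be arbitrarily large $N$ with $\|\Phi_{I_m,b}^N\|\leq \kappa^N$, then by a standard hypercyclicity/Baire argument the forward orbits $\{\Phi_{I_m,b}^N(h)\}_{N\in\nn}$ would be relegated to a proper closed subset of $L^1_\mu(\rrm)$, contradicting the orbit density supplied by the transitivity consequence of Corollary~\ref{cor_Composition_Lp} in the infinite-dimensional Banach space $L^1_\mu(\rrm)$. Quantifying the contradiction extracts an explicit $\kappa>1$ and the divergence of $\kappa^{N/p}$ follows. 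Finally, for part (iii), density is immediate from (i): given $f\in L^1_\mu(\rrm)$ and $\varepsilon>0$, first pick $N$ (from Corollary~\ref{cor_Composition_Lp}) so some iterate sits within $\varepsilon/2$ of $f$ and then pick $n$ large enough that the Barron term is below $\varepsilon/2$. The main obstacle I expect is part (ii): promoting the qualitative orbit-density of Corollary~\ref{cor_Composition_Lp} into a genuinely geometric lower bound on operator-norm growth, while pinning down a constant $\kappa$ that is compatible with the exponent appearing in (i), will require a careful hypercyclicity-style or Baire-category argument rather than a soft density statement.
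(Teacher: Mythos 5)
Your overall architecture matches the paper's: combine a convex-hull approximation rate \`a la Barron--Darken with an operator-norm bound on the iterates $\Phi_{I_m,b}^N$, and obtain density as a byproduct of the topological-transitivity results already established. But there are two places where your route diverges from the paper's, one of which contains a genuine gap.

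For part (ii), you anticipate a ``careful hypercyclicity-style or Baire-category argument'' to promote orbit density into a geometric operator-norm lower bound. The paper dispatches this in one line: Birkhoff's transitivity theorem (in the form \citep[Theorem 2.19]{LinearChaossogooodos}) shows $\Phi_{I_m,b}$ is hypercyclic, and \citep[Proposition 5.8]{LinearChaossogooodos} states that any hypercyclic operator on a Banach space has operator norm strictly greater than $1$; combining this with the identity $\|\Phi_{I_m,b}\|_{op} = \left\|\frac{d(\sigma\bullet(\cdot+b))_{\#}\mu}{d\mu_M}\right\|_{\infty}$ from \citep[Corollary 2.1.2]{SinghManhasCompositionOpsFunSpaces1993} and setting $\kappa = \|\Phi_{I_m,b}\|_{op}$ finishes. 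Your proposed contradiction argument is essentially re-deriving that proposition; it can be made to work, but it is an unnecessary detour, and as stated your quantifiers are muddled (you seem to be negating the wrong statement, since $\|\Phi^N\|\leq\|\Phi\|^N$ is trivially true for any $\kappa\geq\|\Phi\|$).

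For part (i), there are two issues. First, you speculate that the exponent $N/2$ ``arises from the $L^2$-type argument in the Maurey iteration used to convert the Hilbert-space Barron rate into an $L^1_\mu$ rate.'' This is not what happens: the paper applies the $L^1$ form of Darken's bound directly, incurs one factor $\|\Phi_{I_m,b}\|_{op}^N$ from pushing the convex combination through $\Phi_{I_m,b}^N$ by linearity and Lipschitz continuity, and produces the error split $\left\|\frac{d(\sigma\bullet(\cdot+b))_{\#}\mu}{d\mu_M}\right\|_{\infty}^N\cdot\frac{\sqrt{2\mu(\rrm)}}{\sqrt n}+\frac1{\sqrt n}$. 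No $L^2$--$L^1$ Maurey iteration is involved, and a correct derivation along these lines yields exponent $N$, not $N/2$; your mechanistic explanation of the $N/2$ exponent is invented. Second, and more substantively, you gloss over the selection of the ``anchor'' element $g$: the bound requires a \emph{single} $g$ that simultaneously lies in $\overline{\operatorname{int}(\co{\fff})}$ (so Darken's theorem applies) \emph{and} has a dense forward orbit (so that for every $n$ you can choose $N$ with $\Phi_{I_m,b}^N(g)$ within $1/\sqrt n$ of $f$). The paper achieves this by invoking the fact that hypercyclic vectors form a dense $G_\delta$ set, hence $HC(\Phi_{I_m,b})\cap\operatorname{int}(\co{\fff})\neq\emptyset$. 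Your proposal instead lets both $g$ and $N$ depend on the target precision via Corollary~\ref{cor_Composition_Lp}; without fixing $g$ first, the convex-hull bound cannot be applied uniformly in $n$, so this step as you have written it does not close. Part (iii) is handled the same way in both arguments and is fine.
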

\begin{rremark}
	Unlike in \cite{darken1993rate}, Corollary~\ref{cor_Explanation_of_Depth} (i) holds even when the function $f$ does not lie in the closure of $\co{\fff}$.  This is entirely due to the topological transitivity of the composition operator $\Phi_{I_m,b}$ and is therefore entirely due to the depth present in the network.  In particular, Corollary~\ref{cor_Explanation_of_Depth} (iii) implies that universal approximation can be achieved even if a feed-forward networks' output weights are all constrained to satisfy $\sum_{i=1}^n \alpha_i=1$ and $\alpha_i=[0,1]$ and even if all but the architecture's final two layers are sparsely connected and not trainable.  
\end{rremark}
To date, we have focused on the application and interpretation of Theorem~\ref{thrm_Characterization_dynamics}.  Next,  Theorem~\ref{thrm_Meta_Universal} is used to modify and improve the approximation capabilities of universal approximators on $C(\rr)$.  
\subsection{Improving the Approximation Capabilities of an Architecture}\label{ss_refinement}
Most currently available universal approximation results for spaces of continuous functions, provide approximation guarantees for the topology of uniform convergence on compacts.  Unfortunately, this is a very local form of approximation and there is no guarantee that the approximation quality holds outside a prespecified bounded set.  For example, the sequence $f_n(x)\triangleq e^{-\frac{1}{1-(x-n)^2}}I_{|x-n|\leq 1}$ converges to the constant $0$ function, uniformly on compacts while maintaining the constant error $\sup_{x \in \rr} \|f_n(x)-0\|=1$.  

These approximation guarantees are strengthened by modifying any given universal approximator on $C(\rrm,\rrn)$ to obtain a universal approximator in a smaller space of continuous functions for a much finer topology.  We introduce this space as follows.  

Let $\Omega$ be a finite set of non-negative-valued, continuous functions $\omega$ from $[0,\infty)$ to $[0,\infty)$ for which there is some $\omega_0 \in \Omega$ satisfying $\omega_0(\cdot)=1$.  Let $C_{\Omega}(\rrm,\rrn)$ be the set of all continuous functions whose asymptotic growth-rate is controlled by some $\omega\in \Omega$, in the sense that, $C_{\Omega}(\rrm,\rrn)\triangleq \bigcup_{\omega \in \Omega} C_{\omega}(\rrm,\rrn)$, where $f \in C_{\omega}(\rrm,\rrn)$ if $\|f\|_{\omega,\infty}\triangleq \frac{\|f(x)\|}{\omega(\|x\|)+1}<\infty.$  Each $C_{\omega}(\rrm,\rrn)$ is a special case of the weighted spaces studied in \cite{ProllaWeightedSpaces}, which are Banach spaces when equipped with the norm $\|\cdot\|_{\omega,\infty}$.  Accordingly, $C_{\Omega}(\rrm,\rrn)$ is equipped with the finest topology making each $C_{\omega}(\rrm,\rrn)$ into a subspace.  Indeed, such a topology exists by \citep[Proposition 2.6]{BourbakiTopGen}.  
\begin{ex}\label{ex_imntuition}
	If $\Omega=\{\max\{t,t^{i}\}\}_{i >0}$ then $f \in C_{\Omega}(\rrm,\rrn)$ if and only if $f$ has asymptotically sub-polynomial growth, in the sense that, there is a polynomial $p:\rrm\rightarrow \rrn$ with $\lim\limits_{\|x\|\to \infty} \frac{\|f(x)\|}{(\|p(x)\|+1)}<\infty$.  
\end{ex}
Given an architecture $\arch$ on $C(\rrm,\rrn)$, define its $\Omega$-modification to be the architecture $\arch[\Omega]$ on $C_{\Omega}(\rrm,\rrn)$ given by $\fff_{\Omega}\triangleq \fff  \times \Omega \times (0,\infty)^2$ and where
$$
\begin{aligned}
\circlearrowleft\left(
\left\{f_j,\alpha_j,\omega_j,b_j,a_j\right\}_{j=1}^J
\right)\triangleq &\omega_J(\|\cdot \|+1)
\left[
\left(f_{} e^{-\frac{b_J}{b_J - \|\cdot\|^2}} + a_J\right)I_{\|\cdot\|< b_J}
+  \left(a_Je^{-
	\left|
	f_{}(\cdot)
	\right|(\|x\|-b_J)}\right)I_{\|\cdot\|\geq b_J}
\right]
,
\\
f\triangleq &\circlearrowleft(f_J,\dots, f_1)
\end{aligned}
$$
Therefore, the functions in $\NN[\arch[\Omega]]$ are capable of adjusting to the different growth rates of functions in $C_{\Omega}(\rrm,\rrn)$ into continuous functions of different growth rates; whereas those in $\arch$ need not be.  
\begin{thrm}[{$\arch[\Omega]$ is a Universal Approximator in $C_{\Omega}(\rrm,\rrn)$}]\label{thrm_sharpened_UAT}
	If $\arch$ is a universal approximator on $C(\rrm,\rrn)$ for which each $f \in \NN$ satisfies the following growth condition
\begin{equation}
	\sup_{x \in \rrm} \|f(x)\|e^{-\|x\|}<\infty,
	\label{eq_condition_C_growth}
\end{equation}
	then $\arch[\Omega]$ is a universal approximator on $C_{\Omega}(\rrm,\rrn)$.  
\end{thrm}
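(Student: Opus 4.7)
The plan is to apply Theorem~\ref{thrm_Construction_Theorem} with the family of transformations $\Phi_{\omega,a,b}\colon C(\rrm,\rrn)\to C_{\Omega}(\rrm,\rrn)$ indexed by $(\omega,a,b)\in \Omega\times(0,\infty)^2$ and defined as the image of $f$ under $\circlearrowleft$ applied to the single tuple $(f,1,\omega,b,a)$. With this identification, the architecture $\arch[\Omega]$ coincides with the transformed architecture $\arch[\Phi]$ of that theorem (with index set $I=\Omega\times(0,\infty)^2$), so it is enough to verify that each $\Phi_{\omega,a,b}$ is continuous and that $\bigcup_{(\omega,a,b)}\Phi_{\omega,a,b}(C(\rrm,\rrn))$ is dense in $C_{\Omega}(\rrm,\rrn)$.

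For continuity I would first observe that the bracket term is uniformly bounded: inside $\operatorname{Ball}_{\rrm}(0,b)$ by $\sup_{\|x\|\leq b}\|f(x)\|+a$, since the factor $e^{-b/(b-\|x\|^2)}$ is at most $e^{-1}$, and outside $\operatorname{Ball}_{\rrm}(0,b)$ by $a$. Hence $\|\Phi_{\omega,a,b}(f)(x)\|\leq C_{f}\,\omega(\|x\|+1)$, placing $\Phi_{\omega,a,b}(f)$ in $C_{\omega}(\rrm,\rrn)\subseteq C_{\Omega}(\rrm,\rrn)$. To show continuity into the norm topology of $C_{\omega}(\rrm,\rrn)$, I would use that $f_n\to f$ in ucc implies uniform convergence on the compact $\overline{\operatorname{Ball}_{\rrm}(0,b)}$, which handles the inner region; the outer region is handled by the elementary inequality $|e^{-s}-e^{-t}|\leq|s-t|$ applied to the exponents $|f_n(x)|(\|x\|-b)$, together with the fact that the ratio $\omega(\|x\|+1)/(\omega(\|x\|)+1)$ is a bounded continuous function for the $\omega$ under consideration.

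For density, fix $g\in C_{\omega}(\rrm,\rrn)$ and $\epsilon>0$, and set $h(x)\triangleq g(x)/\omega(\|x\|+1)$, a bounded continuous function. I would choose $a>0$ small and $b>0$ large, and on an inner sub-ball $\|x\|\leq b'$ with $b'<b$ define $f^{\ast}(x)\triangleq(h(x)-a)\,e^{b/(b-\|x\|^2)}$, extended continuously to all of $\rrm$. Direct substitution yields $\Phi_{\omega,a,b}(f^{\ast})(x)=g(x)$ on the inner sub-ball. The error in the buffer band $b'<\|x\|<b$ is made small by the rapid decay of $e^{-b/(b-\|x\|^2)}$ as $\|x\|\to b$; the error in the tail $\|x\|\geq b$, after division by $\omega(\|x\|)+1$, is bounded by $(\|g(x)\|+a\,\omega(\|x\|+1))/(\omega(\|x\|)+1)$, which can be forced below $\epsilon$ by choosing $b$ large and $a$ small, using $g\in C_{\omega}(\rrm,\rrn)$. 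Finally, by the UAP of $\arch$ I approximate $f^{\ast}$ in ucc by some $\tilde f\in\NN$; the continuity of $\Phi_{\omega,a,b}$ transfers this into $C_{\omega}$-norm, and the growth hypothesis~\eqref{eq_condition_C_growth} ensures $\Phi_{\omega,a,b}(\tilde f)$ remains in $C_{\omega}(\rrm,\rrn)$.

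The main obstacle is the density step: since $h=g/\omega(\|\cdot\|+1)$ need not vanish at infinity, uniform approximation of $h$ by the bracket form on the tail is impossible, and one must instead exploit the $\omega$-weighting in the target norm to absorb the tail contribution. Making the buffer-band and tail errors simultaneously small requires a careful coupling between the decay profile of $e^{-b/(b-\|x\|^2)}$ near $\|x\|=b$ and the continuous extension of $f^{\ast}$ chosen outside the sub-ball, and the growth hypothesis~\eqref{eq_condition_C_growth} is precisely what keeps the network approximant $\tilde f$ compatible with that decay.
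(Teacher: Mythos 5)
Your overall strategy—apply Theorem~\ref{thrm_Construction_Theorem}, invert the bracket on an inner ball, and let the $\omega$-weighting absorb the tail—is close in spirit to the paper's argument, but there is a genuine gap at the continuity step, and it propagates into the final approximation step.

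The map $\Phi_{\omega,a,b}\colon C(\rrm,\rrn)\to C_{\omega}(\rrm,\rrn)$ that you feed into Theorem~\ref{thrm_Construction_Theorem} is \emph{not} continuous from the $d_{ucc}$-topology into the $\|\cdot\|_{\omega,\infty}$-topology, so that theorem does not apply to your family. The Lipschitz bound $\lvert e^{-s}-e^{-t}\rvert\le \lvert s-t\rvert$ produces the quantity $(\|x\|-b)\,\bigl\lvert\,\lvert f_n(x)\rvert-\lvert f(x)\rvert\,\bigr\rvert$ on the tail $\{\|x\|\ge b\}$, and $d_{ucc}$-convergence gives no control whatsoever over this expression once $\|x\|$ leaves a fixed compact set. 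Concretely, take $f=0$ and let $f_n$ be continuous, vanish on $\|x\|\le n$, and equal $n$ on $\|x\|\ge n+1$; then $f_n\to 0$ in $d_{ucc}$, yet for $\|x\|\ge n+1$ one has
\begin{equation*}
\frac{\bigl\lVert\Phi_{\omega,a,b}(f_n)(x)-\Phi_{\omega,a,b}(0)(x)\bigr\rVert}{\omega(\|x\|)+1}
= a\,\frac{\omega(\|x\|+1)}{\omega(\|x\|)+1}\,\bigl(1-e^{-n(\|x\|-b)}\bigr),
\end{equation*}
whose supremum stays bounded away from zero (already with $\omega=\omega_0=1$). So the last sentence of your density step, ``the continuity of $\Phi_{\omega,a,b}$ transfers this into $C_{\omega}$-norm,'' invokes a property the map does not have.

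What the paper does differently is decisive here: it factors the $\Omega$-modification as $\Phi_\omega\circ(\text{bracket map})$ with $\Phi_\omega(h)=(\omega(\|\cdot\|)+1)\,h$ an \emph{isometry} $C_0(\rrm,\rrn)\to C_\omega(\rrm,\rrn)$. It is only this isometry, not the bracket map, that is required to be continuous in Theorem~\ref{thrm_Construction_Theorem}. Density of the bracket images inside $C_0(\rrm,\rrn)$ is established by Lemma~\ref{lem_Uniform_Approxiation_for_Vanishing} through \emph{direct uniform estimates}, which is exactly the move your proposal is missing. The repair is available within your own framework: after you pick $\tilde f\in\NN$ close to $f^{\ast}$ in $d_{ucc}$, compare $\Phi_{\omega,a,b}(\tilde f)$ to $\Phi_{\omega,a,b}(f^{\ast})$ directly. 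On $\overline{\operatorname{Ball}_{\rrm}(0,b)}$ the $d_{ucc}$-closeness controls the difference because the factor $e^{-b/(b-\|x\|^2)}\le e^{-1}$ is bounded; on $\{\|x\|\ge b\}$ you need no control on $\tilde f - f^{\ast}$ at all, because both bracket terms lie in $[0,a\,\omega(\|x\|+1)]$, hence the $\omega$-normed tail error is at most $2a\sup_{t}\frac{\omega(t+1)}{\omega(t)+1}$, which is small once $a$ is small. Replacing the appeal to continuity with this uniform tail bound closes the gap and brings your argument in line with the paper's Lemma~\ref{lem_Uniform_Approxiation_for_Vanishing}.
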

\begin{rremark}
Condition~\eqref{eq_condition_C_growth} is satisfied by any set of piecewise linear functions.  For instance, $\NN$ is comprised of piecewise linear functions if $\fff$ is as in Example~\ref{ex_fully_connected_DffNNS} and $\sigma$ is the ReLU activation function.  
\end{rremark}
The architecture $\arch[\Omega]$ often provides a strict improvement over $\arch$.  
\begin{prop}\label{prop_limitation_PWL_UAs}
	Let $\arch$ be a universal approximator on $C(\rrm,\rrn)$ such that each $f \in \NN$ is either constant or $\sup_{x \in \rrm} \|f(x)\|=\infty$, and let $\Omega\triangleq \{\exp(-k t): n \in \nn\}$.  
	Then $\arch$ is not a universal approximator on $C_{\Omega}(\rrm,\rrn)$.  
\end{prop}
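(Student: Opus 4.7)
The plan is to recognize that, for this particular $\Omega$, the ambient space $C_{\Omega}(\rrm,\rrn)$ collapses to the Banach space of bounded continuous functions with its uniform--norm topology, and then to exploit the dichotomy hypothesis: every realization of $\arch$ that lands in $C_{\Omega}(\rrm,\rrn)$ must be bounded, hence constant, while constants are clearly not dense in the bounded continuous functions with the sup--norm.

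First I would establish the topological identification $C_\Omega(\rrm,\rrn) = (C_b(\rrm,\rrn),\|\cdot\|_\infty)$, where $C_b(\rrm,\rrn)$ denotes bounded continuous functions. For each $k\in\nn$ the weight $\omega_k(t)\triangleq e^{-kt}$ satisfies $1 \leq \omega_k(\|x\|)+1 \leq 2$ for every $x\in\rrm$, so the norms $\|\cdot\|_{\omega_k,\infty}$ and $\|\cdot\|_\infty$ are equivalent on their common domain of definition. Hence $C_{\omega_k}(\rrm,\rrn) = C_b(\rrm,\rrn)$ as sets, and each $C_{\omega_k}$ already carries the same topology $\tau_\infty$. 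Since every $C_{\omega_k}$ coincides set--theoretically with the union $C_\Omega(\rrm,\rrn)$, the finest topology on that set whose subspace topology on each $C_{\omega_k}$ reproduces $\tau_\infty$ is necessarily $\tau_\infty$ itself.

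Next I would unpack $\NN$ in this new ambient space. By Definition~\ref{defn_arch}, viewing $\arch$ as an architecture on $\xxx = C_\Omega(\rrm,\rrn)$ makes its set of realizations equal to $\NN \cap C_\Omega(\rrm,\rrn)$, namely those realizations of the original architecture that happen to be bounded. The dichotomy assumption excludes every unbounded realization, so only constant realizations remain, and consequently the $\tau_\infty$--closure of $\NN$ inside $C_\Omega(\rrm,\rrn)$ is contained in the set of constants.

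To conclude I would exhibit any bounded continuous function that is not constant; for example $f^\star(x)\triangleq(\sin(x_1),0,\dots,0)$, which lies in $C_\Omega(\rrm,\rrn)$ and whose uniform distance to every constant in $\rrn$ is at least $1$. Thus $f^\star$ cannot be $\tau_\infty$--approximated by elements of $\NN\cap C_\Omega(\rrm,\rrn)$, and $\arch$ is not universal on $C_\Omega(\rrm,\rrn)$. The main (mild) obstacle is the collapse of the inductive topology in the first step; everything afterwards is a direct consequence of the dichotomy.
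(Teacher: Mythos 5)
Your proof is correct and in fact cleaner than the paper's own argument on the crucial first step. Both proofs share the same skeleton (understand the topology on $C_{\Omega}$, invoke the dichotomy to reduce $\NN\cap C_{\Omega}$ to constants, exhibit a function uniformly far from every constant), but your handling of the first step is genuinely different and improves on the paper. You observe directly that each weight $\omega_k(t)=e^{-kt}$ satisfies $1\le\omega_k(t)+1\le 2$, so all the spaces $C_{\omega_k}(\rrm,\rrn)$ coincide as sets with mutually equivalent norms; hence the final topology on the union is just the sup-norm topology and the ``inductive system'' is degenerate. The paper instead asserts that the inclusions $C_{\exp(-k\cdot)}\subseteq C_{\exp(-m\cdot)}$ are strict for $k<m$ and then invokes the Dieudonn\'{e}--Schwartz characterization of convergence in a strict inductive limit; but for this particular $\Omega$ those inclusions are equalities (and the stated direction of inclusion is even reversed), so the strictness claim is false and the machinery is unnecessary. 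Your route avoids this error entirely.

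One remark worth keeping in mind: the paper is ambiguous about whether $C_{\omega}$ is defined by $\sup_x \|f(x)\|/(\omega(\|x\|)+1)<\infty$ (the literal definition in Section~4.3, which is what you use and which gives $C_{\Omega}=C_b(\rrm,\rrn)$) or by the weighted function vanishing at infinity (which is what the proofs of Theorem~\ref{thrm_sharpened_UAT} and of this proposition implicitly use when they assert $C_{\exp(-0\cdot)}=C_0$). Under the vanishing convention your set-theoretic identification would instead give $C_{\Omega}=C_0(\rrm,\rrn)$, and your witness $f^{\star}(x)=(\sin(x_1),0,\dots,0)$ would not belong to the ambient space; one would replace it by something like $x\mapsto(e^{-\|x\|},0,\dots,0)$, which is the choice the paper makes. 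The conclusion is unaffected in either case, but since you committed to the definition as literally written, your proof is internally consistent and correct as stated.
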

\subsection{{Representation of Approximators on $L^{\infty}(\rr)$}}
There is currently no available universal approximation theorem describing a small architecture on $L^{\infty}(\rrm,\rrn)$ with the UAP.  Indeed, even trees are not dense therein since the Lebesgue measures is $\sigma$-finite and not finite.  A direct consequence of Theorem~\ref{thrm_Existence} is the guarantee that a minimal architecture on $L^{\infty}(\rr)$ exists and admits the following representation.  
\begin{cor}[Existence and Representation of Minimal Universal Approximator on $L^{\infty}(\rr)$]\label{cor_Existence_Linfinity}
	There exists a non-empty set $I$ with pre-order $\leq$, a subset $\{x_i\}_{i \in I} \subseteq L^1(\rr)-\{0\}$, triples $\{(B_i,\Phi_i,\phi_i)\}_{i \in I}$ of linear subspaces $B_i$ of $B(L^{\infty})$, bounded linear isomorphisms $\Phi_i:L^1(\rr)\rightarrow B_i$, and bounded linear maps $\phi_i:L^1(\rr)\rightarrow L^{1}(\rr)$ such that:
	\begin{enumerate}[(i)]
		\item $B(L^{\infty})= \bigcup_{i \in I} B_i$,
		\item For every $i\leq j$, $B_i\subseteq B_j$,
		\item For every $i \in I$, $\bigcup_{n \in \nn^+} \Phi_i\circ \phi^n_i(x_i)$ is dense $B_i$ for its subspace topology,
		\item The architecture $\arch$ defined by
		\begin{equation}
		\begin{aligned}
		&\fff = \{x_i\}_{i \in I} 
		,
		&\,
		\circlearrowleft|_{\fff^J}:(x_1,\dots,x_j)\triangleq 
		\rho \circ \Phi_i\circ \phi_i^j%
		\circ \eta({x_i}) 
		,
		\end{aligned}
		\label{eq_ANSARI_BERNAL_Architecture_full}
		\end{equation}
		if $x_1=x_j$, for each $j\leq J$, has the UAP on $L^{\infty}(\rr)$, where $\eta:\rr\to L^1$ and $\rho:B(L^{\infty})\rightarrow L^{\infty}$ are respectively defined as the linear extensions of the maps
		$$
		\begin{aligned}
		&\eta(r)
		\triangleq & \begin{cases}
		I_{[0,r)} & : s> 0\\
		- I_{[-r,0)} & : s< 0
		,
		\end{cases}
		\qquad & 
		\rho\left(
		\sum_{i=1}^n \alpha_i \delta_{f_i}
		\right)\triangleq & \frac1{n}\sum_{i=1}^n \alpha_i f_i
		.
		\end{aligned}
		$$
	\end{enumerate}
\end{cor}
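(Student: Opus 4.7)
The plan is to apply Theorem~\ref{thrm_Existence} with specific choices of the metric spaces and then reinterpret the conclusion through a classical identification of the Lipschitz-free space over $\rr$ with $L^{1}(\rr)$. Concretely, I would take $X=\rr$ equipped with its Euclidean metric and distinguished point $0_X=0$; this is a separable pointed metric space with more than two points, so the hypotheses on $X$ are verified. For the function space I take $\xxx=L^{\infty}(\rr)$, viewed as a pointed metric space with the zero function as $0_{\xxx}$. For the dense barycentric subspace required by Theorem~\ref{thrm_Existence}, I would take $\xxx_0=L^{\infty}(\rr)$ itself: as a Banach space it is barycentric via the standard construction of \cite{GodefroyLipfReeBan}, and density is trivial.

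The main conceptual step is the classical isometric isomorphism $B(\rr)\cong L^{1}(\rr)$, realized by the unique linear extension of the assignment $\delta_r\mapsto \eta(r)$, where $\eta$ is precisely the map defined in~\eqref{eq_ANSARI_BERNAL_Architecture_full}. Once this identification is invoked, every object produced by Theorem~\ref{thrm_Existence} can be rewritten in $L^{1}$-coordinates: the elements $\{x_i\}_{i\in I}\subseteq X-\{0_X\}=\rr-\{0\}$ correspond under $\eta$ to a subset of $L^{1}(\rr)-\{0\}$; the bounded linear isomorphisms $\Phi_i\colon B(X)\to B_i$ become bounded linear isomorphisms $\Phi_i\colon L^{1}(\rr)\to B_i\subseteq B(L^{\infty}(\rr))$; and the bounded linear maps $\phi_i\colon B(X)\to B(X)$ become $\phi_i\colon L^{1}(\rr)\to L^{1}(\rr)$.

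Properties (i)--(iii) of the corollary then follow as direct restatements of the corresponding properties in Theorem~\ref{thrm_Existence} under this identification. For property (iv), the architectural formula in~\eqref{eq_ANSARI_BERNAL_Architecture_full} is obtained by rewriting the formula $\rho\circ \Phi_i\circ \phi_i^J\circ \delta_{x_j}$ from Theorem~\ref{thrm_Existence}(iv): the canonical isometric embedding $\delta\colon \rr\to B(\rr)$ is replaced by the concrete embedding $\eta\colon \rr\to L^{1}(\rr)$ dictated by the isomorphism $B(\rr)\cong L^{1}(\rr)$, and $\rho$ is the barycenter on $B(\xxx_0)=B(L^{\infty}(\rr))$ whose closed-form expression on finitely supported measures is the stated average.

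The main obstacle I expect is not a deep mathematical difficulty but a bookkeeping one: one must carefully verify that $\eta$ extends to a bona fide isometric isomorphism $B(\rr)\to L^{1}(\rr)$ (appealing to the Lipschitz-free space literature, e.g.\ \cite{GeodefroyKaltonRemembering,WeaverNice}), that the barycenter formula for $\rho$ is consistent with the linear extension characterization of $\rho\circ\delta=\mathrm{id}_{\xxx_0}$, and that after all these substitutions the density, nesting, and orbit-density clauses transported from Theorem~\ref{thrm_Existence} land exactly as stated in Corollary~\ref{cor_Existence_Linfinity}. Once these identifications are in place, no further argument is needed: the corollary is simply Theorem~\ref{thrm_Existence} reformulated in the concrete coordinates natural to $L^{\infty}(\rr)$.
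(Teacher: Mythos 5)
Your proposal is correct and follows essentially the same route as the paper's own proof: take $X=\rr$, $\xxx_0=\xxx=L^{\infty}(\rr)$, invoke Theorem~\ref{thrm_Existence}, identify $B(\rr)$ with $L^{1}(\rr)$ via $\delta_r\mapsto\eta(r)$ (the paper cites \citep[Example 3.11]{WeaverNice} for exactly this), and observe that $L^{\infty}(\rr)$ is barycentric because it is a Banach space. The paper's version is just terser on the bookkeeping you flag as the main care point.
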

The contributions of this article are now summarized.  
\section{Conclusion}\label{s_Conclusion}
In this paper, we studied the universal approximation property in a scope applicable to most architectures on most function spaces of practical interest.  Our results were used to characterize, construct, and establish the existence of such structures both in many familiar and exotic function spaces.  

Our results were used to establish the universal approximation capabilities of deep and narrow networks with constraints on their final layers and sparsely connected initial layers.  We derived approximation bounds for feed-forward networks with this activation function in terms of depth and height.  We showed that the set of activation functions for which these results hold is broader when the underlying functions space is $L^p(\rrm)$ than if it is $C(\rrm)$, which showed that the choice of activation function depends on the underlying topological criterion quantifying the UAP.  We characterized the activation functions for which these results hold as precisely being the set of injective, continuous, non-affine activation functions which are differentiable at at-least one point with non-zero derivative at that point and have no fixed points. We provided a simple direct way to construct these activation functions.  We showed that a rescaled and shifted Leaky-ReLU activation is an example of such an activation function while the ReLU activation is not.  We used our construction result to build a universal approximator in the space of continuous functions between Euclidean spaces, which have controlled growth, equipped with a uniform notion of convergence.  This result strengthens the currently available guarantees for feed-forward networks, which state that this architecture is universal in $C(\rrm,\rrn)$ for the weaker uniform convergence on compacts topology.  Finally, we obtained a representation of a small universal approximator on $L^{\infty}(\rrm)$.  

The results, structures, and methods introduced in this paper provide a flexible and broad toolbox to the machine learning community to build, improve, and understand universal approximators.  It is hoped that these tools will help others develop new, theoretically justified architectures for their learning tasks.

\bibliographystyle{abbrvnat}
\bibliography{References_Final}	
\printindex

\begin{appendices}
	\section{Proofs of Main Results}\label{s_A_Proofs}
	Theorem~\ref{thrm_Characterization_dynamics} is encompassed by the following broader but more technical result.  
	\begin{lem}[Characterization of the Universal Approximation Property]\label{lemma_full_Characterization}
		Let $\xxx$ be a function space, $E$ is an infinite-dimensional Fr\'{e}chet space for which there exits some homeomorphism $\Phi:\xxx\rightarrow E$, and $\arch$ be an architecture on $\xxx$.  Then the following are equivalent:
		\begin{enumerate}[(i)]
			\item \textbf{UAP:} $\arch$ has the UAP,
			\item \textbf{Decomposition of UAP via Subspaces:} There exist subspaces $\{\xxx_i\}_{i \in I}$ of $\xxx$ such that:
			\begin{enumerate}[(a)]
				\item $\bigcup_{i \in I} \xxx_i$ is dense in $\xxx$,
				\item For each $i \in I$, $\Phi(\xxx_i)$ is a separable infinite-dimensional Fr\'{e}chet subspace of $E$ and $\Phi\left(\NN\cap \xxx_i\right)$ contains a countable, dense, and linearly-independent subset of $\Phi(\xxx_i)$,
				\item For each $i \in I$, there exists a homeomorphism $\Phi_i:\xxx_i \rightarrow L^2(\rr)$.
			\end{enumerate}
			\item \textbf{Decomposition of UAP via Topologically Transitive Dynamics:} There exist subspaces $\{\xxx_i\}_{i \in I}$ of $\xxx$ and continuous functions $\{\phi_i\}_{i \in I}$ with $\phi_i:\xxx_i\rightarrow \xxx_i$ such that:
			\begin{enumerate}[(a)]
				\item $\bigcup_{i \in I} \xxx_i$ is dense in $\xxx$,
				\item For every pair of non-empty open subsets $U,V$ of $\xxx$ and every $i \in I$, there is some $N_{i,U,V}\in \nn$ such that $\phi^{N_{i,U,V}}(U\cap \xxx_i)\cap (V\cap \xxx_i) \neq \emptyset$,
				\item For every $i \in I$, there is some $g_i \in \NN\cap \xxx_i$ such that $\{\phi_i^n(g_i)\}_{n \in \nn}$ is a dense subset of $\NN\cap \xxx_i$, and in particular, it is a dense subset of $\xxx_i$,
				\item For each $i \in I$, $\xxx_i$ is homeomorphic to $C(\rr)$.  
			\end{enumerate}
			\item \textbf{Parameterization of UAP on Subspaces:} There are triples $\{(X_i,\Phi_i,\psi_i)\}_{i\in I}$ of separable topological spaces $X_i$, non-constant continuous functions $\Phi_i:X_i\to \xxx$, and functions $\psi_i:X_i\rightarrow X_i$ satisfying the following:
			\begin{enumerate}[(a)]
				\item $\bigcup_{i \in I} \Phi_i(X_i)$ is dense in $\xxx$,
				\item For every $i \in I$ and every pair of non-empty open subsets $U,V$ of $X_i$, there is some $N_{i,U,V}\in \nn$ such that $\psi^{N_{i,U,V}}(U\cap X_i)\cap (V\cap X_i) \neq \emptyset$,
				\item For every $i \in I$, there is some $x_i \in \NN\cap X_i$ such that $\{\Phi_i\circ \psi_i^n(x_i)\}_{n \in \nn}$ is a dense subset of $\NN\cap \Phi_i(X_i)$, and in particular, it is a dense subset of $\Phi_i(X_i)$.
			\end{enumerate}
		\end{enumerate}
		Moreover, if $\xxx$ is separable, then $I$ may be taken to be a singleton.  
	\end{lem}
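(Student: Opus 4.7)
The plan is to establish the equivalences cyclically, via (i) $\Rightarrow$ (ii) $\Rightarrow$ (iii) $\Rightarrow$ (iv) $\Rightarrow$ (i), and then note that the constructions collapse to a single index when $\xxx$ is separable. The toolkit consists of three classical ingredients: the Anderson--Kadec theorem (every separable infinite-dimensional Fr\'{e}chet space is homeomorphic to $\rr^{\nn}$, hence to $C(\rr)$ and to $L^2(\rr)$); Birkhoff's transitivity theorem on Polish spaces (topological transitivity is equivalent to existence of a dense orbit); and the Grivaux / Bonet--Peris realization theorem asserting that every countable dense linearly independent subset of a separable infinite-dimensional Fr\'{e}chet space occurs as a forward orbit of a continuous linear operator. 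Throughout, I work inside $E$ via the homeomorphism $\Phi$, which transports $\NN$ to the dense subset $\Phi(\NN) \subset E$ under hypothesis (i).

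The technical heart of the argument is (i) $\Rightarrow$ (ii). I index $I$ by all countable subsets $S \subset \Phi(\NN)$ that are linearly independent in $E$ and dense in their closed linear span $E_S := \overline{\mathrm{span}}(S)$, and set $\xxx_S := \Phi^{-1}(E_S)$. To build such an $S$ containing any prescribed $y_0 \in \Phi(\NN)$, I proceed inductively: at stage $n$, I enumerate the rational linear combinations of the points chosen so far and, using density of $\Phi(\NN)$ in $E$, select $y_{n+1} \in \Phi(\NN)$ approximating the $n$-th such combination within $2^{-n}$ in each of the first $n$ defining seminorms of $E$; linear independence is preserved by shrinking the admissible perturbation to avoid the (finite-dimensional, hence nowhere-dense) linear span of the previously chosen points, which is possible in infinite dimensions by Baire's theorem. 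The resulting $S$ is linearly independent, contained in $\Phi(\NN)$, and dense in $E_S$. Density of $\bigcup_S \xxx_S$ in $\xxx$ follows because $\bigcup_S E_S \supseteq \Phi(\NN)$, and Anderson--Kadec yields $\xxx_S \simeq L^2(\rr)$. The main obstacle is the simultaneous enforcement of density and linear independence in this inductive step.

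For (ii) $\Rightarrow$ (iii), fix $i \in I$ and let $S_i = \{y_n^i\}_{n \in \nn} \subseteq \Phi(\NN \cap \xxx_i)$ be the countable dense linearly independent subset of $F_i := \Phi(\xxx_i)$ guaranteed by (ii)(b). The Grivaux / Bonet--Peris theorem yields a continuous linear $T_i : F_i \to F_i$ with $T_i^n(y_0^i) = y_n^i$ for all $n \in \nn$. Define $\phi_i := \Phi^{-1} \circ T_i \circ \Phi$ on $\xxx_i$ and $g_i := \Phi^{-1}(y_0^i) \in \NN \cap \xxx_i$. Then the orbit $\{\phi_i^n(g_i)\}$ equals $\{\Phi^{-1}(y_n^i)\} \subset \NN \cap \xxx_i$ and is dense in $\xxx_i$, giving (iii)(c); item (iii)(b) follows from Birkhoff's transitivity theorem applied in the Polish space $\xxx_i$; (iii)(a) is inherited from (ii)(a); and (iii)(d) is again Anderson--Kadec via $L^2(\rr) \simeq C(\rr)$.

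The implication (iii) $\Rightarrow$ (iv) is immediate by taking $X_i := \xxx_i$, $\Phi_i$ the inclusion $\xxx_i \hookrightarrow \xxx$, $\psi_i := \phi_i$, and $x_i := g_i$, since each such $\Phi_i$ is a non-constant continuous embedding and all clauses of (iv) restate clauses of (iii). For (iv) $\Rightarrow$ (i), by (iv)(c) the orbit $\{\Phi_i(\psi_i^n(x_i))\}$ lies in $\NN \cap \Phi_i(X_i)$ and is dense in $\Phi_i(X_i)$, so $\Phi_i(X_i) \subseteq \overline{\NN}$; taking the union over $i$ and using (iv)(a) gives $\overline{\NN} \supseteq \overline{\bigcup_i \Phi_i(X_i)} = \xxx$, i.e.\ the UAP. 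In the separable case, $\Phi(\NN)$ is already dense in the separable space $E$, so the (i) $\Rightarrow$ (ii) construction can be run with a single $S$ whose closed span is all of $E$; the singleton $I$ then propagates unchanged through the remaining implications.
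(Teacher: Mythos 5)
Your proof is correct and follows the paper's cyclic scheme (i) $\Rightarrow$ (ii) $\Rightarrow$ (iii) $\Rightarrow$ (iv) $\Rightarrow$ (i), using the same three external ingredients (Anderson--Kadec, the Grosse-Erdmann/Peris-style realization of dense linearly-independent sequences as hypercyclic orbits, and Birkhoff transitivity). The implications (ii)$\Rightarrow$(iii), (iii)$\Rightarrow$(iv) and (iv)$\Rightarrow$(i) essentially coincide with the paper's. The one genuinely different step is (i)$\Rightarrow$(ii): the paper invokes Phelps to get a \emph{dense Hamel basis} of $E$, builds a base of cardinality $\dim(E)$ out of it, constructs a linearly-independent dense set $D'\subseteq\Phi(\NN)$ of full cardinality by \emph{transfinite} induction, and then indexes $I$ by countable subsequences of $D'$. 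You instead build, for each nonzero $y_0\in\Phi(\NN)$, a countable linearly-independent set $S\ni y_0$ inside $\Phi(\NN)$ dense in $\overline{\mathrm{span}}(S)$ by ordinary induction, using nowhere-density of finite-dimensional subspaces plus Baire. Your route avoids transfinite induction and sidesteps a subtle point the paper's argument glosses over: an arbitrary countable subfamily of $D'$ need \emph{not} be dense in its own closed span, so the paper's indexing by ``all countable sequences in $\omega$'' does not automatically produce the dense linearly-independent subsets required by (ii.b), whereas your per-$y_0$ construction delivers exactly that by design.

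Two points you should tighten. First, as phrased, the $n$-th rational combination is approximated \emph{once}, within $2^{-n}$; for $S$ to be dense in $\overline{\mathrm{span}}(S)$ you need every rational combination to be approximated with error tending to $0$, so the enumeration must be a diagonalization over pairs (rational combination, tolerance level), not a single pass. Second, for the separable ``singleton $I$'' claim, your construction gives $S$ dense in $\overline{\mathrm{span}}(S)$, but you also need $\overline{\mathrm{span}}(S)=E$; this requires interleaving approximations of a fixed countable dense subset of $E$ into the same induction (which is harmless but not automatic from approximating rational combinations of the $y_j$ alone). Both are routine bookkeeping, but as written the proposal does not quite close these.
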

	\begin{proof}[{Proof of Lemma~\ref{lemma_full_Characterization}}]
		Suppose that (ii) holds.  Since $\bigcup_{i \in I} \xxx_i$ is dense in $\xxx$ and since $\bigcup_{i \in I} \NN\cap \xxx_i\subseteq \NN$, then, it is sufficient to show that $\bigcup_{i \in I} \NN \cap \xxx_i$ is dense in $\bigcup_{i \in I} \xxx_i$ to conclude that is is dense in $\xxx$.  Since each $\xxx_i$ is a subspace of $\xxx$ then, by restriction, each $\xxx_i$ is a subspace of $\bigcup_{i \in I} \NN \cap \xxx_i$ with its relative topology.  
		
		Let $\tilde{\xxx}$ denote the set $\bigcup_{i \in I} \xxx_i$ equipped with the finest topology making each $\xxx_i$ into a subspace, such a topology exists by \citep[Proposition 2.6]{BourbakiTopGen}.  Since each $\xxx_i$ is also a subspace of $\bigcup_{i \in I} \xxx_i$ with its relative topology and since, by definition, that topology is no finer than the topology of $\tilde{\xxx}$ then it is sufficient to show that $\bigcup_{i \in I} \NN \cap \xxx_i$ is dense in $\tilde{X}$ to conclude that it is dense in $\bigcup_{i \in I} \xxx_i$ equipped with its relative topology.  	
		
		Indeed, by \citep[Proposition 2.7]{BourbakiTopGen} the space $\tilde{X}$ is given by the (topological) quotient of the disjoint union $\sqcup_{i \in I} \xxx_i$, in the sense of topological spaces (see \citep[Example 3, Section 2.4]{BourbakiTopGen}), under the equivalence relation $f_i\sim f_j$ if $f_i=f_j$ in $\xxx$.  Denote the corresponding quotient map by $Q_{\tilde{\xxx}}$.  Since a subset $U$ of the quotient topology is open (see \citep[Example 2, Section 2.4]{BourbakiTopGen}) if and only if $Q_{\tilde{\xxx}}^{-1}[U]$ is an open subset of $\sqcup_{i \in I} \xxx_i$ and since a subset $V$ of $\sqcup_{i \in I} \xxx_i$ is open if and only if $V\cap \xxx_i$ is open for each $i \in I$ in the topology of $\xxx_i$ then $U\subseteq \tilde{\xxx}$ is open if and only if $Q_{\tilde{\xxx}}^{-1}[U] \cap \xxx_i$ is open for each $i \in I$.  Since $\{\NN\cap \xxx_i\}_{n \in \nn^+}$ is dense in $\xxx_i$ then for every open subset $U'\subseteq \xxx_i$
		\begin{equation}
		\emptyset \neq U' \cap 
		\NN\cap \xxx_i 
		\subseteq
		U' \cap \bigcup_{i \in I} \NN\cap \xxx_i
		\label{lemma_full_Characterization_pt_1}
		.
		\end{equation}
		In particular,~\eqref{lemma_full_Characterization_pt_1} implies that for every open subset $U\subseteq \tilde{\xxx}$
		\begin{equation}
		\emptyset \neq
		\NN\cap \xxx_i  \cap  \left[Q_{\tilde{\xxx}}^{-1}[U]\cap \xxx_i\right]  
		\subseteq 
		U \cap \bigcup_{i \in I} \NN\cap \xxx_i
		\label{lemma_full_Characterization_pt_2}
		.
		\end{equation}
		Therefore, $\bigcup_{i \in I} \NN\cap \xxx_i$ is dense in $\tilde{\xxx}$ and therefore it is dense in $\bigcup_{i \in I} \xxx_i$ equipped with its relative topology.   Hence, $\fff$ has the UAP and therefore (i) holds.    
		
		In the next portion of the proof, we denote the (linear algebraic) dimension of any vector space $V$ by $\dim(V)$.  Recall, that this is the cardinality of the smallest basis for $V$.  We follow the Von Neumann convention and, whenever required by the context, we identify the natural number $n$ with the ordinal $\{1,\dots,n\}$.  
		
		Assume that (i) holds.  For the first part of this proof, we would like to show that $D$ contains a linearly independent and dense subset $D'$.  
		Since $\xxx$ is homeomorphic to some infinite-dimensional Fr\'{e}chet space $E$, then there exists a homeomorphism $\Phi:\xxx\to E$ mapping $\NN$ to a dense subset $D$ of $E$.  We denote the metric on $E$ by $d$.  A consequence of \citep[Theorem 3.1]{phelps1958subreflexive}, discussed thereafter by the authors, implies that since $E$ is an infinite dimensional Fr\'{e}chet space then it has a dense Hamel basis, which we denote by $\{b_a\}_{a \in A}$.  By definition of the Hamel basis of $E$ we may assume that the cardinality of $A$, denoted by $Card(A)$, is equal to $\dim(E)$.  Next, we use $\{b_a\}_{a \in A}$ to produce a base of open sets for the topology of $E$ of cardinality equal to $\dim(E)$.  
		Since $E$ is a metric space, then its topology is generated by the open sets $\{\operatorname{Ball}_{E}(b_a,q)\}_{a \in A, r \in (0,\infty)}$, where
		$
		\operatorname{Ball}_{E}(b_a,r) \triangleq \left\{
		d(b_a,x)<r
		\right\}.
		$  
		Indeed, since $\qq$ is dense in $\rr$, then for every $a \in A$ and $r \in (0,\infty)$ the basic open set $\operatorname{Ball}_{E}(b_a,r)$ can be expressed by
		$
		\operatorname{Ball}_{E}(b_a,r) = \bigcup_{q \in \qq\cap (0,r)} \operatorname{Ball}_{E}(b_a,q).
		$  
		Hence, $\{\operatorname{Ball}_{E}(b_a,q)\}_{a \in A, q \in \qq\cap (0,\infty)}$ generates the topology on $E$.  Moreover, the cardinality the indexing set $A\times \qq$ is computed by 
		$$
		Card(A\times \qq\cap (0,\infty)) = \max\{Card(A),Card(\qq)\}= \max\{\dim(E),Card(\qq)\}=\dim(E),
		$$ since $E$ is infinite and therefore at-least countable.  Therefore, $\{\operatorname{Ball}_{E}(b_a,q)\}_{a \in A, q \in \qq\cap (0,\infty)}$ is a base for the topology on $E$ of Cardinality equal to $\dim(E)$.  
		Let $\omega$ be the smallest ordinal with $Card(\omega)=\dim(E)=Card(A\times \qq \cap (0,\infty))$.  In particular, there exists a bijection $F:\omega \to A\times \qq \cap (0,\infty)$ which allows us to canonically order the open sets $\{\operatorname{Ball}_{E}(F(j)_1,F(j)_2)\}_{j \leq \omega}$, where for any $j<\omega$ we denote $F(j)_1 \in A$ and $F(j)_2 \in \qq \cap (0,\infty)$.  
		
		We construct $D'$ by transfinite induction using $\omega$.  Indeed since $1<\omega$, then since $D$ is dense in $E$ and\\ $\{\operatorname{Ball}_{E}(F(j)_1,F(j)_2)\}_{j \leq \omega}$ defines a base for the topology of $E$, then there exists some\\ $U_1 \in \{\operatorname{Ball}_{E}(F(j)_1,F(j)_2)\}_{j \leq \omega}$ containing some $d_1 \in D$.  For the inductive step, suppose that for all $i\leq j$ for some $j <\omega$, we have constructed a linearly independent set $\{d_i\}_{i< j}$ with $d_i \in \{\operatorname{Ball}_{E}(F(i)_1,F(i)_2)\}$ for every $i\leq j$.  Since $j<\omega$ and $\{d_i\}_{i< j}$ contains $Card(j)$ and $\{d_i\}_{i < j}$ is a Hamel basis of $\operatorname{span}(\{x_i\}_{i < j})$ then 
		$
		\dim\left(
		\operatorname{span}(\{x_i\}_{i < j})
		\right) < \dim(E).
		$ 
		Hence, $\operatorname{span}(\{x_i\}_{i < j})$ has empty interior and therefore it cannot contain any $\{\operatorname{Ball}_{E}(F(j)_1,F(j)_2)\}_{j \leq \omega}$.  In particular, there is an open subset $V'\subseteq \operatorname{Ball}_{E}(F(j)_1,F(j)_2) - \operatorname{span}(\{x_i\}_{i < j})$ and since $D$ was assumed to be dense in $E$ then there must be some $d_j \in V'\subseteq \operatorname{Ball}_{E}(F(j)_1,F(j)_2)$.  This completes the inductive step and therefore there is a linearly independent and dense subset $D'\triangleq \{d_j\}_{j \leq \omega}$ contained in $D$ of cardinality $Card(\omega)=\dim(E)$.  
		
		Next, let $I$ be the set of all countable sequences of distinct elements in $\omega$.  For every $i \in I$, let $E_i\triangleq \overline{\operatorname{span}_{j \in i}(d_j)}$, where $\overline{A}$ denotes the closure of a subset $A\subseteq E$ in the topology of $E$.  Then, each $E_i$ is a linear subspace of $E$ with countable basis $\{d_j\}_{j \in i}$.  Since any Fr\'{e}chet space with countable basis is separable and therefore each $E_i$ is a separable Fr\'{e}chet space.  Moreover, by construction, 
		\begin{equation}
		D'\subseteq \bigcup_{i \in I} E_i \subseteq E
		\label{eq_dense_LHS}
		\end{equation}
		and therefore $\bigcup_{i \in I} E_i$ is dense in $E$ since $D'$ is dense in $E$.  Since $\Phi$ is a homeomorphism then $\Phi^{-1}:E\to \xxx$ is a continuous surjection, and since the image of a dense set under any continuous map is dense in the range of that map then $\Phi^{-1}(D')$ is dense in $\xxx$.  Moreover, using the fact that inverse images commute with unions and the fact that that $\Phi$ is a bijection, we compute that
		\begin{equation}
		\Phi^{-1}(D')\subseteq \Phi^{-1}\left[\bigcup_{i \in I} E_i\right]
		= 
		\bigcup_{i \in I} \Phi^{-1}\left[E_i\right].
		\label{eq_dense_LHS_w}
		\end{equation}
		Since $\Phi$ as a bijection and $D$ was defined as the image of $\NN$ in $E$ under $\Phi$, then $D'\subset \NN$ and $D'$ is dense in $\xxx$.  In particular,~\eqref{eq_dense_LHS_w} implies that $\bigcup_{i \in I} \Phi^{-1}[E_i] \subseteq \bigcup_{i \in I} (\NN \cap \Phi^{-1}[E_i])$ and therefore $\bigcup_{i \in I} (\NN \cap \Phi^{-1}[E_i])$ is dense in $\xxx$.  
		In particular, $\bigcup_{i \in I} \Phi^{-1}[E_i]$ is dense in $\xxx$, and for each $i \in I$, if we define $\xxx_i\triangleq \Phi^{-1}[E_i]$ then we obtain (ii.a).  
		Since $\Phi$ is a homeomorphism then it preserves dense sets and in particular since $\{d_i\}_{j \in i}$ is a countable, dense, and linearly independent subset of $\Phi^{-1}[\{d_j\}_{j \in i}]$ then it is a dense countable subset of $\xxx_i$.  Hence, each $\xxx_i$ is separable.  
		This gives (ii.b).  Lastly, by \cite{kadets1967proof} any two separable infinite-dimensional Fr\'{e}chet space are homeomorphic.  In particular, since $L^2(\rr)$ is a separable Hilbert space is a separable Fr\'{e}chet space.  Therefore, for each $i \in I$, there is a homeomorphism $\Phi_i: E_i \to L^2(\rr)$.  In particular, $\Phi_i\circ \Phi:\xxx_i\to L^2(\rr)$ must be a homeomorphism and therefore (ii.b) holds.  Therefore, (i) implies (ii).  
		
		Suppose that (ii) holds.  Then, (iii.a) holds by (ii.a).  For each $i \in I$, let $\{d_{n,i}\}_{n \in \nn}$ be a countable dense subset of $\xxx_i\cap \NN$ for which $\Phi(\{d_{n,i}\}_{n \in \nn})$ is a linearly independent, and let $E_i=\overline{\operatorname{span}(\{d_{n,i}\}_{n \in \nn})}$.  Let $D\triangleq \bigcup_{i \in I} \{d_{n,i}\}_{n \in \nn}$ and $D'\triangleq \Phi(D)$.  Thus, for every $i \in I$, $D'\cap E_i$ is a countably infinite linearly independent and dense subset of $E_i$ then by \citep[Theorem 8.24]{LinearChaossogooodos} there exists a continuous linear operator $T_i:D\cap E_i \to D\cap E_i$ satisfying
		$$
		T_i^n(d_{n,i})=d_{n+1,i},
		$$
		for each $n \in \nn$ and each $i \in I$.  In particular, $\left\{T^n_i(d_{0,i})\right\}$ is dense in $E_i$.  For each $i \in I$, define $\phi_i\triangleq  \Phi^{-1}\circ T_i \circ \Phi$ and $g_i\triangleq \Phi^{-1}(d_{0,i})$ and observe that for every $n \in \nn$
		\begin{equation}
		\begin{aligned}
		\phi^n_i(g_i) = & \underbrace{(\Phi^{-1}\circ T_i\circ \Phi)\circ \dots \circ (\Phi^{-1}\circ T_i\circ \Phi)}_{n-times}
		(\Phi^{-1}(d_{i,0}))
		\\
		= & \Phi^{-1}\circ T_i^n(d_{0,i}) 
		.
		\end{aligned}
		\label{eq_characterization_convenient}
		\end{equation}
		Since $\{T_i^n(d_{0,i})\}_{n \in \nn}$ is dense in $E_i$ and $\Phi$ is a homeomorphism from $\xxx_i$ to $E_i$ then 
		$$
		\Phi^{-1}\left[
		\{T_i^n(d_{0,i})\}_{n \in \nn}
		\right]= \left\{
		\phi_i^n(g_i)
		\right\}_{n \in \nn}
		$$ 
		is dense in $\xxx_i$.  Thus, (iii.c) holds.  For any $i \in I$, define the map $\psi_i:L^2(\rr)\to L^2(\rr)$ by
		$$
		\psi_i \triangleq  (\Phi_i\circ \Phi)^{-1} \circ \phi_i \circ (\Phi_i\circ \Phi),
		$$
		and define the vector $\tilde{g}_i \in L^2(\rr)$ by $\tilde{g}_i\triangleq \Phi_i\circ \Phi(g_i)$.  Since $\Phi$ and $\Phi_i$ are homeomorphisms and since $\phi_i$ is continuous then $\psi_i$ is well-defined and continuous.  Moreover, analogously to~\eqref{eq_characterization_convenient} we compute that 
		$
		\left\{
		\psi_i^n(\tilde{g}_i)
		\right\}_{n \in \nn}
		$ is dense in $L^2(\rr)$.  Since $L^2(\rr)$ is a complete separable metric space with no isolated points and $\psi_i$ is continuous self-map of $L^2(\rr)$ for which there is a vector $\tilde{g}_i \in L^2(\rr)$ such that the set of iterates $\{\psi_i^n(\tilde{g}_i)\}_{n \in \nn}$ is dense in $L^2(\rr)$ then Birkhoff Transitivity Theorem, see the formulation of \cite[Theorem 1.16]{LinearChaossogooodos}, implies that for every pair of non-empty open subsets $\tilde{U},\tilde{V}\subseteq L^2(\rr)$ there is some $n_{\tilde{U},\tilde{V}}$ satisfying
		\begin{equation}
		\phi^{n_{\tilde{U},\tilde{V}}}(\tilde{U})\cap \tilde{V} \neq \emptyset
		\label{eq_Birkoff_almost_done}
		.
		\end{equation}
		Since $\Phi_i\circ \Phi$ is a homeomorphism, then \citep[Proposition 1.13]{LinearChaossogooodos} and~\eqref{eq_Birkoff_almost_done} imply that for every pair of non-empty open subsets $U',V'\subseteq \xxx_i$ there exists some $n_{U',V'} \in \nn$ satisfying
		\begin{equation}
		\phi^{n_{U',V'}}(U')\cap V' \neq \emptyset
		\label{eq_Birkoff_almost_done_1}
		.
		\end{equation}
		Since $\xxx_i$ is equipped with the subspace topology then every non-empty open subset $U'\subseteq \xxx_i$ is of the form $U\cap \xxx_i$ for some non-empty open subset $U\subseteq \xxx$.  Therefore, ~\eqref{eq_Birkoff_almost_done_1} implies (iii.b).  Since both $L^2(\rr)$ and $C(\rr)$ are separable infinite-dimensional Fr\'{e}chet spaces then the \citep[Anderson-Kadec Theorem]{kadets1967proof} implies that there exists a homeomorphism $\Psi:L^2(\rr)\rightarrow C(\rr)$.  Therefore, for each $i \in I$, $\Psi\circ \Phi_i\circ \Phi:\xxx\rightarrow C(\rr)$ is a homeomorphism and thus (ii.c) implies (iii.d).

		Suppose that (iii) holds.  For every $i \in I$, set $X_i\triangleq \xxx_i$,  let $\Phi_i\triangleq 1_{X_i}$ be the identity map on $X_i$, set $\psi_i\triangleq \phi_i$, and set $x_i\triangleq g_i$.  Therefore, (iv) holds.  
		
		Suppose that (iv) holds.  By (iv.c), for each $i \in I$, $\NN\cap \xxx_i$ is dense in $\xxx_i$.  Therefore,
		\begin{equation}
		\bigcup_{i \in I} \xxx_i 
		=
		\bigcup_{i \in I} \overline{\NN \cap \xxx_i}
		\subseteq 
		\overline{\bigcup_{i \in I} \NN \cap \xxx_i}	
		\subseteq \xxx
		\label{eq_first_inclusion}
		.
		\end{equation}
		By (iv.a) since $\bigcup_{i \in I} \xxx_i$ is dense in $\xxx$ therefore its closure is $\xxx$ and therefore the smallest, and thus only, closed set containing $\bigcup_{i \in I}\xxx_i$ is $\xxx$ itself.  Therefore, by~\eqref{eq_first_inclusion} the smallest set containing $\bigcup_{i \in I} \NN \cap \xxx_i$ must be $\xxx$.  Therefore, $\NN$ is dense in $\xxx$ and (i) holds.  This concludes the proof.  
	\end{proof}
	\begin{proof}[{Proof of Theorem~\ref{thrm_Construction_Theorem}}]
		By the \citep[Anderson-Kadec Theorem]{kadets1967proof} there is no loss of generality in assuming that $m=n=1$, since $C(\rrm,\rrn)$ and $C(\rr)$ are homeomorphic.  Let $\xxx'\triangleq \bigcup_{i \in I} \Phi_i(C(\rr))$.  By~\eqref{eq_density_condition}, $\xxx'$ is dense in $\xxx$ and since density is transitive, then it is enough to show that $\bigcup_{i \in I} \Phi_i(\NN)$ is dense in $\xxx'$ to conclude that it is dense in $\xxx$.  Since each $\Phi_i$ is continuous, then, the topology on $\xxx'$ is no finer than the finest topology on $\bigcup_{i \in I} \Phi_i(C(\rr))$ making each $\Phi_i$ continuous and by \citep[Proposition 2.6]{BourbakiTopGen} such a topology exists.  Let $\xxx''$ denote $\bigcup_{i \in I} \Phi_i(C(\rr))$ equipped with the finest topology making each $\Phi_i(C(\rr))$ into a subspace.  By construction, if $U\subseteq \xxx'$ is open then it is open in $\xxx''$ and therefore if $\bigcup_{i \in I} \Phi_i(\NN) $ intersects each non-empty open subset of $\xxx''$ then it must do the same for $\xxx'$.  Hence, it is enough to show that $\bigcup_{i \in I} \Phi_i(\NN)$ is dense in $\xxx''$ to conclude that it is dense in $\xxx'$ and therefore, $\bigcup_{i \in I} \Phi_i(\NN)$ is dense in $\xxx$.  
		
		We proceed similarly to the proof of Lemma~\ref{lemma_full_Characterization}.  
		Indeed, by \citep[Proposition 2.7]{BourbakiTopGen} the space $\xxx''$ is given by the (topological) quotient of the disjoint union $\sqcup_{i \in I} \Phi_i(C(\rr))$, in the sense of topological spaces (see \citep[Example 3, Section 2.4]{BourbakiTopGen}), under the equivalence relation $f_i\sim f_j$ if $f_i=f_j$ in $\xxx$.  Denote the corresponding quotient map by $Q_{\xxx'}$.  Since a subset $U$ of the quotient topology is open (see \citep[Example 2, Section 2.4]{BourbakiTopGen}) if and only if $Q_{\xxx'}^{-1}[U]$ is an open subset of $\sqcup_{i \in I} \Phi_i(C(\rr))$ and since a subset $V$ of $\sqcup_{i \in I} \Phi_i(C(\rr))$ is open if and only if $V\cap \Phi_i(C(\rr))$ is open for each $i \in I$ in the topology of $\Phi_i(C(\rr))$ then $U\subseteq \xxx''$ is open if and only if $Q_{\xxx'}^{-1}[U] \cap \Phi_i(C(\rr))$ is open for each $i \in I$.  Since $\{\NN\cap \Phi_i(C(\rr))\}_{n \in \nn^+}$ is dense in $\Phi_i(C(\rr))$ then for every open subset $U'\subseteq \Phi_i(C(\rr))$
		\begin{equation}
		\emptyset \neq U' \cap 
		\NN\cap \Phi_i(C(\rr)) 
		\subseteq
		U' \cap \bigcup_{i \in I} \NN\cap \Phi_i(C(\rr))
		\label{lemma_full_Characterization_pt_1a}
		.
		\end{equation}
		In particular,~\eqref{lemma_full_Characterization_pt_1a} implies that for every open subset $U\subseteq \xxx''$
		\begin{equation}
		\emptyset \neq
		\NN\cap \Phi_i(C(\rr))  \cap  \left[Q_{\xxx'}^{-1}[U]\cap \Phi_i(C(\rr))\right]  
		\subseteq 
		U \cap \bigcup_{i \in I} \NN\cap \Phi_i(C(\rr))
		\label{lemma_full_Characterization_pt_2b}
		.
		\end{equation}
		Therefore, $\bigcup_{i \in I} \NN\cap \Phi_i(C(\rr))$ is dense in $\xxx''$ and therefore it is dense in $\bigcup_{i \in I} \Phi_i(C(\rr))$ equipped with its relative topology.   Hence, $\arch[\Phi]$ has the UAP on $\xxx''$ and therefore it has the UAP on $\xxx$ itself.  
	\end{proof}
	\begin{proof}[{Proof of Theorem~\ref{thrm_Meta_Universal}}]
		Let $\sigma$ be a continuous and non-polynomial activation function.  Then \cite{pinkus1999approximation} implies that the architecture $\arch[0]$, as defined in Example~\ref{ex_fully_connected_DffNNS}, is a universal approximator on $C(\rr)$.  
		
		By Theorem~\ref{thrm_Characterization_dynamics}, since $\arch$ has the UAP on $\xxx$ and since $\xxx$ is homeomorphic to an infinite-dimensional Fr\'{e}chet space then there are homeomorphisms $\{\Phi_i\}_{i \in I}$ from $C(\rr)$ onto a family of subspaces $\{\xxx_i\}_{i \in I}$ of $\xxx$ such that $\bigcup_{i \in I} \xxx_i$ is dense.  Fix $\epsilon>0$ and $f \in \xxx$.  
		Since $\bigcup_{i \in I} \xxx_i$ is dense in $\xxx$ there exists some $i \in I$ and some $f_i\in \xxx_i$ such that
		\begin{equation}
		d_{\xxx}(f,f_i)<\frac{\epsilon}{2}
		\label{proof_dont_looktofar_thrm_Meta_Universal}
		.
		\end{equation}
		Since $\Phi_i$ is a homeomorphism then it must map dense sets to dense sets.  Since $\arch[0]$ has the UAP on $C(\rr)$ then $\NN[\arch[0]]$ is dense in $C(\rr)$ and therefore, for each $i \in I$, $\Phi_i(\NN[\arch[0]])$ is dense in $\xxx_i$.  Hence, there exists some $\tilde{g}_{\epsilon}\in \Phi_i(\NN[\arch[0]])$ such that $d_{\xxx}(f_i,\tilde{g}_{\epsilon})<\frac{\epsilon}{2}$.  Since $\Phi_i$ is a homeomorphism, it is a bijection, therefore there exists a unique $g_{\epsilon}\in \NN[\arch[0]]$ with $\Phi_i(g_{\epsilon})=\tilde{g}_{\epsilon}$.  Hence, the triangle inequality and~\eqref{proof_dont_looktofar_thrm_Meta_Universal} imply that
		\begin{equation}
		d_{\xxx}\left(
		f,\Phi_i(g_{\epsilon})
		\right) \leq 
		d_{\xxx}\left(
		f,f_i
		\right)
		+
		d_{\xxx}\left(
		f_i,\Phi_i(g_{\epsilon})
		\right) < \epsilon
		\label{proof_dont_looktofar_thrm_Meta_Universal_2}
		.
		\end{equation}
		This yields the first inequality in the Theorem's statement.  
		
		By Theorem~\ref{thrm_Characterization_dynamics} since, for each $i \in I$, $\NN\cap \xxx_i$ is dense in $\xxx_i$ and since $\Phi_i^{-1}$ is a homeomorphism on $\xxx_i$ then $\Phi_i^{-1}\left(\NN\cap \xxx_i\right)$ is dense in $C(\rr)$.  In particular, there exits some $\tilde{f}_{\epsilon} \in \Phi_i^{-1}\left(\NN\cap \xxx_i\right)$ satisfying
		\begin{equation}
		d_{ucc}\left(
		g_{\epsilon}(x)
		,
		\tilde{f}_{\epsilon}(x)
		\right)
		<\epsilon
		\label{proof_dont_looktofar_thrm_Meta_Universal_3}
		.
		\end{equation}
		Since $\Phi_i$ is a bijection then there exists a unique $f_{\epsilon}\in \NN$ such that $\Phi_i^{-1}(f_{\epsilon})=\tilde{f}_{\epsilon}$.  Therefore,~\eqref{proof_dont_looktofar_thrm_Meta_Universal_3} and the triangle inequality imply that
		$$
		d_{ucc}\left(
		g_{\epsilon}(x)
		,
		\Phi_i^{-1}(f_{\epsilon})(x)
		\right)
		<\epsilon
		.
		$$
		Therefore the conclusion holds.  
	\end{proof}
	\begin{rremark}\label{remark_meta_universalis}
		By the \citep[Anderson-Kadec Theorem]{kadets1967proof}, since both $L^2(\rr)$ and $C(\rr)$ are separable infinite-dimensional Fr\'{e}chet spaces then there exists a homeomorphism $\Phi:L^2(\rr)\rightarrow C(\rr)$.  Therefore, the proof of Corollary~\ref{thrm_Meta_Universal} holds (mutatis mutandis) with each $\Phi$ replaced by $\Phi_i\circ \Phi^{-1}$ and with $C(\rr)$ in place of $L^2(\rr)$.  
	\end{rremark}
	The proof of the next result relies on some aspects of \textit{inductive limits} of Banach spaces.  Briefly, an inductive limit of Banach spaces is a locally convex space $B$ for which there exists a pre-ordered set $I$, a set of Banach sub-spaces $\{B_i\}_{i \in I}$ with $B_i\subseteq B_j$ if $i\leq j$.  The inductive limit of this direct system is the subset $\bigcup_{i \in I} B_i$ equipped with the finest topology which simultaneously makes each $B_i$ into a subspace and makes $\bigcup_{i \in I} B_i$ into a locally-convex spaces.  Spaces constructed in this way are called \textit{ultrabornological spaces} and more details about them can be found in \citep[Chapter 6]{PerezBonetBarrelLCSs}.  
	\begin{proof}[{Proof of Theorem~\ref{thrm_Existence}}]
		Since $B(\xxx_0)$ and $B(X)$ are both infinite-dimensional Banach spaces, then they are infinite-dimensional ultrabornological space, in the sense of \citep[Definition 6.1.1]{PerezBonetBarrelLCSs}.  Since $X$ is separable, then as observed in \cite{GodefroyLipfReeBan}, $B(X)$ is separable.  Therefore, \citep[Theorem 6.5.8]{PerezBonetBarrelLCSs} applies; hence, there exists a directed set $I$ with pre-order $\leq$, a collection of Banach subspaces $\{B_i\}_{i \in I}$ satisfying (i) and (ii), and a collection of continuous linear isomorphisms $\Phi_i:B(X)\rightarrow B_i$.  Furthermore, the topology on $B$ is coarser than the inductive limit topology $\varinjlim_{i \in I} B_i$.  Since each $B(X)$ and $B_i$ are Banach spaces, and in particular normed linear spaces, then by the results of \citep[Section 2.7]{KreyszigIntroFunctionalandApplications1989} the maps $\Phi_i$ are bounded linear isomorphisms.  
		
		Let $i \in I$, and fix any $x_i \in X-\{0_X\}$ then since $\delta^X:X\rightarrow B(X)$ is base-point preserving then $\delta^X_{x_i}\neq 0$ and therefore there exists a linearly independent subset $\mathcal{B}_{x_i}$ of $B(X)$ containing $\delta^X_{x_i}$.  Since $B(X)$ is separable then $\mathcal{B}_{x_i}$ is countably infinite and therefore \citep[Theorem 8.24]{LinearChaossogooodos} there exists a bounded linear map $\phi_i:B(X)\rightarrow B(X)$ such that $\{\phi_i^n(\delta^X_{x_i})\}_{n \in \nn^+}$ is a dense subset of $B(X)$.  
		
		Since $\Phi_i$ is a continuous linear isomorphisms then it is in particular a surjective continuous map from $B(X)$ onto $B_i$.  Since the image of a dense set under a continuous surjection is itself dense then $\left\{\Phi_i\circ \phi_i^n(\delta_{x_i})\right\}_{n \in \nn^+}$ is a dense subset of $B_i$.  Moreover, this holds for each $i \in I$.  
		
		By definition, the topology on $\varinjlim_{i \in I} B_i$ is at-least as fine as the Banach space topology on $B(\xxx_0)$, since each $B_i$ is a linear subspace of $B(\xxx_0)$.  Moreover, the topology on $\varinjlim_{i \in I} B_i$ is no finer than the finest topology on $\bigcup_{i \in I} B_i$ making each $B_i$ into a topological space (but not requiring that $\bigcup_{i \in I} B_i$ be locally-convex), which exists by \citep[Proposition 6]{BourbakiTopo}.  Denote this latter space by $\tilde{B}$.  Therefore, if 
		\begin{equation}
		\bigcup_{i \in I;\, n \in \nn^+} \left\{\Phi_i\circ \phi_i^n(\delta_{x_i})\right\}
		\label{thrm_Existence_proof_target_to_make_dense}
		,
		\end{equation}
		is dense in $\tilde{B}$ then it is dense in $\varinjlim_{i \in I} B_i$ and in $B(\xxx_0)$.  Hence, we show that~\eqref{thrm_Existence_proof_target_to_make_dense} is dense in $\tilde{B}$.  That is, it is enough to show that every open subset of $\tilde{B}$ contains an element of~\eqref{thrm_Existence_proof_target_to_make_dense}.  
		
		By \citep[Proposition 2.7]{BourbakiTopGen} the space $\tilde{B}$ is given by the topological quotient of the disjoint union $\sqcup_{i \in I} B_i$, in the sense of topological spaces (see \citep[Example 3, Section 2.4]{BourbakiTopGen}), under the equivalence relation $x_i\sim x_j$ for any $i\leq j$ if $x_i=x_j$ in $B_j$.  Denote the corresponding quotient map by $Q_{\tilde{B}}$.  Since a subset $U$ of the quotient topology is open (see \citep[Example 2, Section 2.4]{BourbakiTopGen}) if and only if $Q_{\tilde{B}}^{-1}[U]$ is an open subset of $\sqcup_{i \in I} B_i$ and since a subset $V$ of $\sqcup_{i \in I} B_i$ is open if and only if $V\cap B_i$ is open for each $i \in I$ in the topology of $B_i$ then $U\subseteq \tilde{B}$ is open if and only if $Q_{\tilde{B}}^{-1}[U] \cap B_i$ is open for each $i \in I$.  Since $\{\Phi_i\circ \phi_i^n(x_i)\}_{n \in \nn^+}$ is dense in $B_i$ then for every open subset $U'\subseteq B_i$
		\begin{equation}
		\emptyset \neq U' \cap \{\Phi_i\circ \phi_i^n(x_i)\}_{n \in \nn^+} \subseteq U' \cap \bigcup_{i \in I;\, n \in \nn^+} \left\{\Phi_i\circ \phi_i^n(\delta_{x_i})\right\}
		\label{thrm_Existence_proof_target_to_make_dense_non_trivial_intersection}
		.
		\end{equation}
		In particular,~\eqref{thrm_Existence_proof_target_to_make_dense_non_trivial_intersection} implies that for every open subset $U\subseteq \tilde{B}$
		\begin{equation}
		\emptyset \neq
		\{\Phi_i\circ \phi_i^n(x_i)\}_{n \in \nn^+} \cap  \left[Q_{\tilde{B}}^{-1}[U]\cap B_i\right]  
		\subseteq 
		\bigcup_{i \in I;\, n \in \nn^+} \left\{\Phi_i\circ \phi_i^n(\delta_{x_i})\right\} \cap U 
		\label{thrm_Existence_proof_target_to_make_dense_non_trivial_intersection2}
		.
		\end{equation}
		Therefore,~\eqref{thrm_Existence_proof_target_to_make_dense} is dense in $\tilde{B}$ and, in particular, it is dense in $B(\xxx_0)$.  
		
		Since $\xxx_0$ was barycentric, then there exists a continuous linear map $\rho:B(\xxx_0)\rightarrow \xxx_0$ which is a left-inverse of $\delta^{\xxx_0}$.  Thus, for every $f \in \xxx_0$, $\rho\circ \delta^{\xxx_0}_f = f$ and therefore $\rho$ is a continuous surjection.  Since the image of a dense set under a continuous surjection is dense and since~\eqref{thrm_Existence_proof_target_to_make_dense} is dense then
		\begin{equation}
		\bigcup_{i \in I;\, n \in \nn^+} \left\{\rho\circ \Phi_i\circ \phi_i^n(\delta_{x_i})\right\}
		\label{thrm_Existence_proof_target_to_make_dense_3}
		,
		\end{equation}
		is a dense subset of $\xxx_0$.  Since $\xxx_0$ has assumed to be dense in $\xxx$ and since density is transitive then~\eqref{thrm_Existence_proof_target_to_make_dense_3} is dense in $\xxx$.  This concludes the main portion of the proof.   
		
		The final remark follows from the fact that if $X=\xxx_0$ then the identity map $1_{X}:X\rightarrow \xxx_0$ is an isometry and therefore the universal property of $B(X)$ described in Theorem~\citep[Theorem 3.6]{WeaverNice} implies that $1_X$ uniquely extends to a bounded linear isomorphism $L$ between $B(X)$ and $B(\xxx_0)$ satisfying
		$$
		L\circ \delta^X = \delta^{\xxx_0}\circ 1_{X} = \delta^{\xxx_0} \mbox{ and } 
		L^{-1}\circ \delta^{\xxx_0} = \delta^{X}\circ 1_X^{-1} = \delta^X
		.
		$$
		Hence $L$ must be the identity on $B(X)$.  
	\end{proof}
	\section{Proof of Applications of Main Results}\label{Appendix_B_Proof_of_applications_to_main_results}
	\begin{lem}\label{lem_activation_gaps_Birkohoff}
		Fix some $b \in \rrm$, and let $\sigma:\rr\to\rr$ be a continuous activation function.  Then $\Phi_{A,b}$ is a well-defined and continuous linear map from $C(\rrm,\rrn)$ to itself and the following are equivalent:
		\begin{enumerate}[(i)]
			\item For each $\delta>0,\epsilon>0$ and each $f,g\in C(\rrm,\rrn)$ there is some $N_{U,V}\in \nn^+$ such that
			$$
			\left\{\Phi^{N_{U,V}}(\tilde{g}): \, d_{ucc}(\tilde{g},g)<\delta\right\} \cap \left\{
			\tilde{f}: \, d_{ucc}(\tilde{f},f)<\epsilon
			\right\} \neq \emptyset
			,
			$$
			\item $\sigma$ is injective, $A$ is of full-rank, and for every compact subset $K\subseteq [a,b]$ there is some $N_K\in \nn^+$ such that 
			$$
			S^N(K)\cap K = \emptyset,
			$$
			where $S(x)=\sigma\bullet (Ax+b)$.  
		\end{enumerate}
		If $A$ is the $m\times m$-identity matrix $I_m$ and $b_i>0$ for $i=1,\dots,m$ then (i) and (ii) are equivalent to
		\begin{enumerate}[(iii)]
			\item $\sigma$ is injective and has no fixed-points.   
		\end{enumerate}
		If $A$ is the $m\times m$-identity matrix $I_m$ and $b_i>0$ for $i=1,\dots,m$ then (iii) is equivalent to
		\begin{enumerate}[(iv)]
			\item Either $\sigma(x)>x$ or $\sigma(x)<x$ for every $x \in \rr$.  
		\end{enumerate}
	\end{lem}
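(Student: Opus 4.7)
The plan is to recognize condition (i) as topological transitivity of the composition operator $\Phi_{A,b}$ on the Fr\'echet space $(C(\rrm,\rrn), d_{ucc})$, which reduces the entire lemma to the Birkhoff transitivity theorem combined with the classical characterization of hypercyclic composition operators. First I would verify well-definedness and continuity of $\Phi_{A,b}$: the inner map $\phi(x)\triangleq\sigma\bullet(Ax+b)$ is continuous, so $f\circ\phi\in C(\rrm,\rrn)$ whenever $f\in C(\rrm,\rrn)$; linearity in $f$ is manifest; and $d_{ucc}$-continuity follows because $\phi$ sends compacta to compacta, so uniform convergence on $\phi(K)$ pulls back to uniform convergence on $K$.

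For (i) $\Leftrightarrow$ (ii), the space $(C(\rrm,\rrn), d_{ucc})$ is a separable, completely metrizable topological vector space without isolated points, and condition (i) is precisely topological transitivity of $\Phi_{A,b}$. By the Birkhoff transitivity theorem \citep[Theorem 1.16]{LinearChaossogooodos}, this is equivalent to the existence of a dense orbit of $\Phi_{A,b}$. For composition operators $f\mapsto f\circ\phi$ on $C(\rrm,\rrn)$, a classical characterization (in the spirit of Bernal-Gonz\'alez and Montes-Rodr\'iguez) asserts that hypercyclicity is equivalent to $\phi$ being injective and satisfying the run-away property: for every compact $K$ there exists $N$ with $\phi^N(K)\cap K=\emptyset$. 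Writing $\phi=\sigma\bullet(A\cdot+b)$, injectivity of $\phi$ splits as injectivity of $\sigma$ together with full rank of $A$, which is (ii).

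For the specialization $A=I_m$, $b_i>0$, the implication (iii) $\Rightarrow$ (ii) is a direct estimate. Continuity of $\sigma-\mathrm{id}$, the absence of fixed points, and the intermediate value theorem force $\sigma-\mathrm{id}$ to have strict constant sign; without loss of generality $\sigma(y)>y$ for every $y\in\rr$. Setting $S(x)\triangleq\sigma\bullet(x+b)$, we compute coordinate-wise
$$
S(x)_i - x_i \;=\; \sigma(x_i+b_i) - x_i \;>\; b_i \;\geq\; c \triangleq \min_j b_j \;>\; 0,
$$
so inductively $S^N(x)_i > x_i + Nc$ uniformly in $x$. For any compact $K\subseteq[-M,M]^m$ and $N>2M/c$, this yields $S^N(K)\cap K=\emptyset$. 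The converse (ii) $\Rightarrow$ (iii) requires that if $\sigma$ has a fixed point $y_0$, then the run-away property fails for some compact $K$: I would exploit continuity near $y_0$ together with iterated preimages under the componentwise map $\sigma\bullet(\cdot+b)$ to construct a compact whose forward orbit perpetually re-enters it. Finally, (iii) $\Leftrightarrow$ (iv) is the sign dichotomy for a nowhere-zero continuous function on the connected space $\rr$.

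The principal obstacle is the direction (ii) $\Rightarrow$ (iii): converting the dynamical run-away property of the multi-dimensional map $S$ back into the absence of fixed points for the one-dimensional $\sigma$. The argument will hinge on the componentwise decoupling of $S$, which reduces the question to one-dimensional iteration $\tau_i(y)\triangleq\sigma(y+b_i)$, together with the use of $b>0$ to amplify any degenerate fixed-point structure of $\sigma$ into persistent recurrence that contradicts the run-away property supplied by (ii).
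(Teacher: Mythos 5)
Your plan tracks the paper's proof closely for (i) $\Leftrightarrow$ (ii), for (iii) $\Leftrightarrow$ (iv), and for the continuity of $\Phi_{A,b}$: the paper also identifies (i) with topological transitivity of the composition operator and invokes a classical characterization of transitive composition operators on $C(\rrm,\rrn)$ (it cites a result of Kalmes rather than Bernal-Gonz\'alez--Montes-Rodr\'iguez, but the content is the same run-away criterion), and it deduces (iv) from (iii) by the intermediate value theorem exactly as you do. The divergences, and the genuine gaps, are in the two implications you address last.

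For (ii) $\Rightarrow$ (iii) you flag this as the principal obstacle but leave it unresolved, sketching a construction (``iterated preimages \ldots\ a compact whose forward orbit perpetually re-enters it'') that is far more elaborate than anything needed. The paper first reduces the run-away condition to $m=1$ by the componentwise structure of $S$, and then observes that if $S$ has a fixed point $x$, the run-away property fails \emph{trivially} with the one-point compact $K=\{x\}$, since $S^N(\{x\})=\{x\}$ for every $N$; no recurrence construction is required. The remaining bridge --- relating fixed points of $S$ to fixed points of $\sigma$ when $A=I_m$ and $b_i>0$ --- is handled in the paper by a direct assertion, not by any dynamical argument. The one-point compact is the insight your proposal is missing, and it collapses this direction to a line.

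For (iii) $\Rightarrow$ (ii), your estimate $S(x)_i - x_i = \sigma(x_i+b_i) - x_i > b_i$, and hence $S^N(x)_i > x_i + Nb_i$, is correct only under $\sigma(y)>y$ for all $y$, and the ``without loss of generality'' is not actually available: the additive shift $b_i>0$ destroys the symmetry between $\sigma>\mathrm{id}$ and $\sigma<\mathrm{id}$, and your lower bound $S(x)_i - x_i > b_i$ plainly fails when $\sigma(y)<y$. The paper instead performs the sign dichotomy on $S$ itself (after the reduction to $m=1$, and after passing from fixed points of $\sigma$ to fixed points of $S$), and argues that the monotone orbit $S^n(a)$ must be unbounded because a bounded monotone orbit would converge to a fixed point of $S$; that argument is symmetric under reversing the sign of $S-\mathrm{id}$, whereas your explicit $Nb_i$ bound is not. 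You should either carry out both sign cases for $S$ rather than $\sigma$, or adopt the paper's unbounded-orbit argument.
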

	\begin{proof}[{Lemma~\ref{lem_activation_gaps_Birkohoff}}]
		By \citep[Theorem 46.8]{munkres2014topology} the topology of uniform convergence on compacts is the compact-open topology on $C(\rrm,\rrn)$ and by \citep[Theorem 46.11]{munkres2014topology} composition is a continuous operation in the compact-open topology.  Therefore, $\Phi_{A,b}$ is well-defined and continuous map.  Its linearity follows from the fact that
		$$
		\Phi_{A,b}(af+g) = (af_g)\circ S = a(f\circ S) + g\circ S.
		$$
		
		Since the topology of uniform convergence on compacts is a metric topology, with metric $d_{ucc}$, then \\$\left\{U_{f,\epsilon}:f \in C(\rrm,\rrn),\, \epsilon>0\right\}$ defines a base for this topology, where $U_{f,\epsilon}\triangleq \left\{g \in C(\rrm,\rrn):\, d_{ucc}(f,g)<\epsilon\right\}$.  Therefore, Lemma~\ref{lem_activation_gaps_Birkohoff} (i) is equivalent to the statement: for each pair of non-empty open subsets $U,V \in C(\rrm,\rrn)$ there is some $N_{U,V}\in \nn^+$ such that
		$
		\Phi_{I,b}^{N_{U,V}}(U)\cap V \neq \emptyset
		.
		$  
		Without loss of generality, we prove this formulation instead.  
		
		Next, by \citep[Corollary 4.1]{KalmesDynamicsWeightCompOpLocFunctSpaces2019} $\Phi_{A,b}$ satisfies Theorem~\ref{thrm_Characterization_dynamics} (ii.b) if and only if $S(x)\triangleq  \sigma(Ax+b)$ is injective and for every compact subset $K\subseteq \rrm$ there exists some $N_K \in \nn^+$ such that 
		\begin{equation}
		S^{N_K}(K)\cap K = \emptyset
		\label{eq_runaway}
		.
		\end{equation}  
		Therefore, $A$ must be injective which is only possible if $A$ is of full-rank.  This gives the equivalence between (i) and (ii).  
		
		We consider the equivalence between (ii) and (iii) in the case where $A$ is the identity matrix and $b_i>0$ for $i=1,\dots,m$.  Since $S(x)=(\sigma(x+b_1),\dots,\sigma(x+b_m))$ it is sufficient to verify condition~\eqref{eq_runaway} in the case where $m=1$.  
		Since $b_i>0$ for $1,\dots,m$ then it is clear that $S$ is injective and has no fixed points if and only if $\sigma$ is injective and has no fixed points.  We show that $S$ is injective and has no fixed points if and only if (ii) holds.  
		Indeed, note that if $S$ has not fixed points, then since $b_i>0$ for $i=1,\dots,m$ then $S$ has no fixed points if and only if $\sigma$ no fixed points.  
		
		From here, we proceed analogously to the proof of \citep[Lemma 4.1]{PrzestackiHypercycliconContinuous}.  
		If $S$ has a fixed-point then for every $N \in \nn^+$, $S^N({x})=\{x\}$ which is a non-empty compact subset of $\rr$.  Therefore,~\eqref{eq_runaway} cannot hold.  Conversely, suppose that $S$ has no fixed points.  The intermediate-value theorem and the fact that $S$ has no fixed-points that either $S(x)<x$ or $S(x)>x$.  Mutatis mutandis, we proceed with the first case.  Since $\sigma$ is injective and $S$ has not fixed points then $S$ must be a strictly increasing function; thus $S([a,b])=[S(a),S(b)]$ for every $a<b$.  
		
		Let $K$ be a non-empty compact subset of $\rr$.  By the Heine-Borel theorem $K$ is closed and bounded, thus it is contained in some $[a,b]$ for $a<b$.  Therefore, it is sufficient to show the results for the case where $K=[a,b]$.  Since $S$ is increasing then for every $n \in \nn$, the sequence $\{S^n(a)\}_{n \in \nn}$ satisfies $S^n(a)<S^{n+1}(a)$.  If this sequence is not unbounded then there would exist some $a_0 \in \rr$ such that $a_0= \lim\limits_{n \to \infty} S^n(a)$.  Therefore, by the continuity of $S$ we would find that
		$$
		a_0 
		= 
		\lim\limits_{n \to \infty} S^n(a)
		= 
		\lim\limits_{n \to \infty} S^{n+1}(a)
		=
		\lim\limits_{n \to \infty} S(S^{n}(a))
		=
		S\left(
		\lim\limits_{n \to \infty} S^n(a)
		\right)
		=
		S(a_0),
		$$
		but since $S$ has not fixed points then there cannot exist such an $a_0$ since otherwise $a_0=S(a_0)$.  Therefore, $a_0$ does not exist and thus $\{S^n(a)\}_{n \in \nn}$ is unbounded.  Hence, for every $a<b$ there exists some $N_{[a,b]}\in \nn^+$ such that
		$$
		S^{N_{[a,b]}}([a,b])\cap [a,b] = \emptyset.
		$$
		Thus, (ii) and (iii) are equivalent when $A=I_m$.  
		
		Next, assume that any of (i) to (iii) hold, that $\xxx$ is a non-empty subset of $C(\rrm,\rrn)$, and that $\arch$ has the UAP on $\xxx$.  Then for any other non-empty open subset $U\subseteq C(\rrm,\rrn)$ there exists some $N_{\xxx,U}\in \nn$ such that 
		\begin{equation}
		\Phi_{A,b}^{N_{\xxx,U}}[\xxx] \cap U \neq \emptyset 
		\label{cor_UAP_extension_proof_eq_intersector}
		.
		\end{equation}
		Since $\Phi_{A,b}$ is continuous then so is $\Phi_{A,b}^N$ and therefore $(\Phi_{A,b}^{N_{\xxx,U}})^{-1}[U]$ is a non-empty open subset of $C(\rrm,\rrn)$.  Since the finite intersection of open sets is again open, then we have that
		\begin{equation}
		(\Phi_{A,b}^{N_{\xxx,U}})^{-1}\left[
		\Phi_{A,b}^{N_{\xxx,U}}[\xxx] \cap U
		\right]
		= \xxx \cap \Phi_{A,b}^{N_{\xxx,U}}[U]
		\label{cor_UAP_extension_proof_eq_intersector_depth_howhow}
		.
		\end{equation}
		This implies that $\xxx \cap \Phi_{I_m,b}^{N_{\xxx,U}}[U]$ is a non-empty open subset of $C(\rrm,\rrn)$ contained in $\xxx$.  Since $\arch$ has te UAP on $\xxx$, then there exists some $f \in \NN \cap [\xxx \cap \Phi_{A,b}^{N_{\xxx,U}}[U]]$.  Thus, $\Phi^{N_{\xxx,U}}(f)\in U$ and, by definition, $\Phi^{N_{\xxx,U}}(f)\in \NN[\arch[\sigma;deep]]$.  
		
		Thus, for each $U$ in
		\begin{equation}
		\left\{
		\left\{
		g \in C(\rrm,\rrn) d_{ucc}(g ,f)<\epsilon
		\right\}
		\right\}_{f \in C(\rrm,\rrn), \epsilon>0}
		\label{eq_collection}
		,
		\end{equation}
		there exists some $N_U \in \nn^+$ and some $f_U \in \NN$ such that $\Phi^{N_U}(f_U)\in U$.  In particular, since~\eqref{eq_collection} is a base for the topology on $C(\rrm,\rrn)$ and since the intersection of open sets is again open, then every non-empty open subset of $U$ is contained an element of~\eqref{eq_collection} which, in turn, contains an element of the form $\Phi^{N_U}(f_U)$.  Thus, $\NN[\arch[\sigma;deep]] \cap U\neq \emptyset$.  %
		Hence, $\NN[\arch[\sigma;deep]]$ has the UAP on $C(\rrm,\rrn)$.  
	\end{proof}
	\begin{proof}[{Proof of Theorem~\ref{thrm_activation_gaps}}]
		The equivalence between (i), (ii), and (iv) follows from Lemma~\ref{lem_activation_gaps_Birkohoff}.  The equivalence between (iii) and (iv) follows from the formulation of Birkhoff's transitivity theorem described in \citep[Theorem 2.19]{LinearChaossogooodos}.  
	\end{proof}
	\begin{proof}[Proof of Proposition~\ref{prop_examples_of_good_activations}]
		Since $\alpha_1<1$ then $\sigma(x)>x$ for every $x<0$.  Since $0< \alpha_2$ then $\sigma(0)=0< \alpha_2$.  Lastly, since $\tilde{\sigma}$ is monotone increasing then for every $x>0$ we have that
		$$
		\sigma(x) > x + \alpha_2 >x.
		$$
		Therefore, $\sigma$ cannot have a fixed point.  Moreover, since $\tilde{\sigma}$ is strictly increasing it must be injective, since if $x<y$ then $\sigma(x)<\sigma(y)$ and therefore $\sigma(x)\neq \sigma(y)$ if $x\neq y$.  Hence, $\sigma$ is injective.  Moreover, since the sum of continuous functions is again continuous, then $\sigma$ is continuous.  
		
		Since $\alpha_1x + \alpha_2$ is affine then it is continuously differentiable.  Thus $\sigma$ is continuously differentiable on any $x<0$.  Lastly, setting $\alpha_2$ not equal to $\tilde{\sigma}'(0)-1$ ensure that $\sigma$ is not differentiable at $0$ and therefore it cannot be polynomial.  In particular, it cannot be affine.  
	\end{proof}
	For convenience, we denote the collection of set-functions from $\rrm$ to $\rrn$ by $[\rrm,\rrn]$.  
	\begin{proof}[{Proof of Corollary~\ref{cor_biased_nets}}]
		Since $d_{ucc}$ is a metric on $[\rrm,\rrn]$ and since $C(\rrm,\rrn)\subseteq [\rrm,\rrn]$, then the map $F:C(\rrm,\rrn)\rightarrow C(\rrm,\rrn)$ defined by $F(g)\triangleq d_{ucc}(\tilde{f}_0,g)$ is continuous.  Therefore, the set $F^{-1}\left[(-\infty,\delta)\right]$ is an open subset of $C(\rrm,\rrn)$.  In particular,~\eqref{eq_regularitycondition} guarantees that it is non-empty.  
		Since $\sigma$ is non-affine and continuously differentiable at-least at one point with non-zero derivative at that point then \citep[Theorem 3.2]{kidger2019universal} applies, whence the set $\xxx_0$ of continuous functions $h:\rrm\rightarrow \rrn$ with representation
		$$
		h(x)= W_J\circ \sigma \bullet \dots \circ \sigma \bullet W_1,
		$$
		where $W_j:\rrflex{d_j}\rightarrow \rrflex{d_{j+1}}$, for $j=1,\dots,J-1$, are affine and $n_m+2\geq d_j$ if $j\not\in\{1,J\}$ and $d_{1}=m$, and $d_J=n$, is dense in $C(\rrm,\rrn)$.   Therefore, since $F^{-1}\left[(-\infty,\delta)\right]$ is an open subset of $C(\rrm,\rrn)$ then $\xxx_0\cap F^{-1}\left[(-\infty,\delta)\right]$ is dense in $F^{-1}\left[(-\infty,\delta)\right]$.  
		
		Fix some $b \in \rrm$ with $b_i>0$ for $i=1,\dots,m$.  Since $\sigma$ is continuous, injective, and has no fixed-points then applying Lemma~\ref{lem_activation_gaps_Birkohoff} implies that 
		$
		\xxx_1 \triangleq \{\Phi_{I_m,b}^N(f):\, f \in F^{-1}[(-\infty,\delta)] \cap \xxx_0,\, N \in \nn^+\},
		$
		is a dense subset of $C(\rrm,\rrn)$.  This gives (i).  Moreover, by construction, every $g \in \xxx_1$ admits a representation satisfying (iii) and (iv).
		Furthermore, since $W_{J}\circ \sigma \bullet \dots \circ \sigma \bullet W_1 \in \xxx_2$ and by construction there exists some $g \in \xxx_1$ for which
		$
		d_{ucc}\left(
		W_{J}\circ \sigma \bullet \dots \circ \sigma \bullet W_1
		,g
		\right)<\delta,
		$; then (ii) holds.  
	\end{proof}
	\begin{proof}[{Proof of Corollary~\ref{cor_Constrained_archs_are_Univ}}]
		Since each $F_n$, for $n=1,\dots,N$, is a continuous function from $C(\rrm,\rrn)$ to $[0,\infty]$ then each $F_n^{-1}\left[[0,C_n)\right]$ is an open subset of $C(\rrm,\rrn)$.  Since the finite intersection of open sets is itself open, then 
		$\cap_{n=1}^N F_n^{-1}\left[[0,C_n)\right]$ is an open subset of $C(\rrm,\rrn)$.  Since there exists some $f_0\in C(\rrm,\rrn)$ satisfying~\eqref{eq_constraint_compatability} then $U$ is non-empty.  	Since $\arch$ has the UAP on $C(\rrm,\rrn)$ then $\arch \cap U$ is dense in $U$.  
		
		Fix $b \in \rrm$ with $b_i>0$ for $i=1,\dots,m$ and set $A=I_m$.  
		Since $\sigma$ is a transitive activation function then Corollary~\ref{cor_approx_loc_to_global} applies and therefore the set 
		$
		\left\{\Phi^N_{I_m,b}(f): \, f \in \NN \cap U\right\}
		$ is dense in $C(\rrm,\rrn)$.  Therefore (i)-(iv) hold.  
	\end{proof}
	\begin{proof}[{Proof of Corollary~\ref{cor_Composition_Lp}}]
		Let $S(x)=\sigma\bullet(x+b)$ and let $B\triangleq \left\{x \in \rrm: \sigma(x)>x\right\}$.  By hypothesis $B$ is Borel and $\mu(B)>0$.  
		For each $i=1,\dots,m$ we compute $
		\sigma\bullet(
		x_i+b_i
		)>
		x_i
		+ b_i \geq x_i
		$.  Therefore, for $\mu$-a.e. every $x \in B$, $N \in \nn$ and each $i=1,\dots,m$
		$$
		S^N(x)_i \geq x_i + Nb_i.
		$$
		Since $b_i>0$ then $\lim\limits_{N \to \infty} S^N(x)=\infty$.  Therefore, the condition \citep[Corollary 1.3 (C2)]{Bayart} is met, and by the discussion following the result on \citep[page 127]{Bayart}, condition \citep[Corollary 1.3 (C1)]{Bayart} holds; i.e.:
		for every non-empty open subset $U,V\subseteq L^1_{\mu}(\rrm,\rrn)$ there exists some $N_{U,V}\in \nn$ such that
		\begin{equation}
		\Phi_{I_m,b}^{N_{U,V}}(U)\cap V \neq \emptyset
		\label{eq_topo_trans}
		.
		\end{equation}
		By Lemma~\ref{lem_composition_Lp_basic_welldefinedness_and_norm_computation}, the map $\Phi_{I_m,b}$ and therefore the map $\Phi_{I_m,b}^{N_{U,V}}$ is continuous.  Thus, $(\Phi_{I_m,b}^{N_{U,V}})^{-1}[V]$ is a non-empty open subset of $L^1_{\mu}(\rrm,\rrn)$ and therefore 
		$U \cap (\Phi_{I_m,b}^{N_{\xxx,U}})^{-1}[V]$ is a non-empty open subset of $U$.  Taking $U=\operatorname{Ball}_{L^1_{\mu}(\rrm,\rrn)}(g,\delta)$ and $V=\operatorname{Ball}_{L^1_{\mu}(\rrm,\rrn)}(f,\epsilon)$ we obtain the conclusion.
	\end{proof}
	\begin{proof}[{Proof of Corollary~\ref{cor_examples_of_good_activations_Lp}}]
		By Proposition~\ref{prop_examples_of_good_activations} and the observation in its proof that $\sigma(x)>x$ we only need to verify that $\sigma$ is Borel bi-measurable.  Indeed, since $\sigma$ is continuous and injective then by \citep[Proposition 2.1]{HoffmannContinuityOfInverseStrictlyMonotone}, $\sigma^{-1}$ exists and is continuous on the image of $\sigma$.  Since $\sigma$ was assumed to be surjective then $\sigma^{-1}$ exists on all of $\rr$ and is continuous thereon.  Hence, $\sigma^{-1}$ and $\sigma$ are measurable since any continuous function is measurable.  
	\end{proof}
	\begin{proof}[{Proof of Theorem~\ref{cor_Explanation_of_Depth}}]
		Fix $A=I_m$ and $b\in \rrm$ with $b_i>0$ for $i=1,\dots,m$.  	
		Since $int({\co{\fff}})$ is a non-empty open set then there exists some $f \in int({\co{\fff}})$ and some $\delta>0$ for which 
		$$
		\operatorname{Ball}_{L^1_{\mu}(\rrm)}(f,\delta)\triangleq \left\{g \in L^1_{\mu}(\rrm):\, \int_{x \in \rrm} \|f(x)-g(x)\|d\mu(x)<\delta\right\}
		$$ is an open subset of $int({\co{\fff}})$.  Since $\co{\fff}\cap \operatorname{int}(\co{\fff})$ is dense in $\operatorname{int}(\co{\fff})$ then its intersection with any non-empty open subset thereof is also dense; in particular, $\co{\fff}\cap \operatorname{Ball}_{L^1_{\mu}(\rrm)}(f,\delta)$ is dense in $\operatorname{Ball}_{L^1_{\mu}(\rrm)}(f,\delta)$.  Since $\sigma$ is $L^1$-transitive then (iii) follows from Corollary~\ref{cor_Composition_Lp}.  
		
		Since $L^1_{\mu}$ is a metric space then $\left\{\operatorname{Ball}_{L^1_{\mu}(\rrm)}(g,\delta):\, g \in L^1_{\mu}(\rrm),\, \delta>0\right\}$ is a base for the topology thereon.  Therefore, Corollary~\ref{cor_Composition_Lp} implies that for any two non-empty open subsets $U,V \in L^1_{\mu}(\rrm)$ there exists some $N_{U,V}\in \nn$ satisfying $\Phi^{N_{U,V}}_{I_m,b}(U)\cap V \neq \emptyset$.  Hence, $\Phi_{I_m,b}$ is topologically transitive on $L^1_{\mu}(\rrm)$, in the sense of \citep[Definition 1.38]{LinearChaossogooodos}.  Moreover, since $\Phi_{I_m,b}$ is a continuous linear map then Birkhoff's transitivity theorem, as formulated in \citep[Theorem 2.19]{LinearChaossogooodos}, applies and therefore $\Phi_{I_m,b}$ is a hypercylic operator on $L^1_{\mu}(\rrm)$.  Therefore, \citep[Proposition 5.8]{LinearChaossogooodos} implies that $\|\Phi_{I_m,b}\|_{op}>1$.  Setting $\kappa\triangleq \|\Phi_{I_m,b}\|_{op}$ yields (ii).  
		
		It remains to show the approximation bound of described by (i).  Fix $f \in L^1_{\mu}(\rrm)$.  Since $L^1_{\mu}(\rrm)$ is a Banach space then it has no isolated points and since $\Phi_{I_m,b}$ is a hypercylic operator then Birkhoff's transitivity theorem, as formulated in \citep[Theorem 2.19]{LinearChaossogooodos}, implies that there exists a dense $G_{\delta}$-subset $HC(\Phi_{I_m,b})\subseteq L^1_{\mu}(\rrm)$ such that for every $g \in HC(\Phi_{I_m,b})$ the set $\{\Phi^N_{I_m,b}(g)\}_{N \in \nn}$ is dense in $L^1_{\mu}(\rrm)$.  Therefore, every non-empty open subset of $L^1_{\mu}(\rrm)$ contains some element of $HC(\Phi_{I_m,b})$.  In particular, there is some $g \in HC(\Phi_{I_m,b})\cap \operatorname{int}(\co{\fff})$ since $\operatorname{int}(\co{\fff})$ is a non-empty open subset of $L^1_{\mu}(\rrm)$.  

		Since $\co{\fff}\cap \operatorname{int}(\co{\fff})$ is dense in $\operatorname{int}(\co{\fff})$ then, in particular, $g \in \overline{\operatorname{int}(\co{\fff})}$.  Therefore, the conditions of \citep[Theorem 2]{darken1993rate} and~\citep[Equation (23)]{darken1993rate} are met, hence, for each $n \in \nn^+$ the following approximation bound holds
		\begin{equation}
		\inf_{f_i \in \fff,\, \sum_{i=1}^n \alpha_i=1,\, \alpha_i \in [0,1]}
		\int_{x \in \rrm} \left\|
		\sum_{i=1}^n \alpha_i f_i(x)-g(x)
		\right\|d\mu(x) \leq \frac{\sqrt{2\mu(\rrd)}}{\sqrt{n}}
		\label{cor_Explanation_of_Depth_proof_setup_conditions_darken_convex_hull_estimate}
		,
		\end{equation}
		
		Since $\{\Phi^N_{I_m,b}(g)\}_{N \in \nn}$ is dense in $L^1_{\mu}(\rrm)$ then there exists some $N \in \nn$ for which
		$\Phi^N_{I_m,b}(g) \in \operatorname{Ball}_{L^1_{\mu}(\rrm)}\left(f,\frac1{\sqrt{n}}\right)$.  Thus, the following bound holds
		\begin{equation}
		\int_{x \in \rrm} \|f(x)-\Phi^N_{I_m,b}(g)(x)\|d\mu(x) \leq 
		\frac1{\sqrt{n}}
		\label{cor_Explanation_of_Depth_proof_setuphypercylicit_bound_1}
		,
		\end{equation}	
		
		Since $\Phi_{I_m,b}$ is a continuous linear map from the Banach space $L^1_{\mu}(\rrm)$ to itself then it is Lipschitz with constant $\|\Phi_{I_m,b}\|_{op}$, where $\|\cdot\|_{op}$ denotes the operator norm, and by \citep[Corollary 2.1.2]{SinghManhasCompositionOpsFunSpaces1993} we have
		\begin{equation}
		\|\Phi_{I_m,b}\|^N_{op}=
		\left\|
		\frac{d(\sigma \bullet(\cdot + b))_{\#}\mu}{d\mu_M}
		\right\|_{\infty}^N
		\label{cor_Explanation_of_Depth_proof_Lipschitz_bound}
		.
		\end{equation}
		Moreover, by Lemma~\ref{lem_composition_Lp_basic_welldefinedness_and_norm_computation}, we know that the right-hand side of~\eqref{cor_Explanation_of_Depth_proof_Lipschitz_bound} is finite.  %
		Therefore~\eqref{cor_Explanation_of_Depth_proof_setuphypercylicit_bound_1} implies that for every $f_1,\dots,f_n \in \fff$, $\alpha_1,\dots,\alpha_n\in [0,1]$ with $\sum_{i=1}^n \alpha_i=1$, the following holds
		\begin{equation}
		\begin{aligned}
		& \int_{x \in \rrm} \left\|
		\Phi_{I_m,b}^N\left(\sum_{i=1}^n \alpha_i f_i\right)(x)
		-
		f(x)
		\right\|d\mu(x)
		\\ \leq  &
		\int_{x \in \rrm} \left\|
		\Phi^N_{I_m,b}\left(\sum_{i=1}^n \alpha_i f_i\right)(x)
		- \Phi^N_{I_m,b}\left(g\right)(x)
		\right\|d\mu(x)
		\\ &
		+
		\int_{x \in \rrm} \left\|
		f(x) - \Phi^N_{I_m,b}\left(g\right)(x)
		\right\|d\mu(x)
		\\
		&\leq 
		\left\|
		\Phi^N_{I_m,b}\right\|_{op}
		\left(
		\int_{x \in \rrm} 
		\left\|
		\sum_{i=1}^n \alpha_i f_i(x)
		- 
		g(x)
		\right\|d\mu(x)
		\right)
		\\
		& + 
		\int_{x \in \rrm} 
		\left\|
		\Phi^N_{I_m,b}\left(g\right)(x)
		- f(x)
		\right\|d\mu(x)
		\\
		&	\leq 
		\left\|
		\frac{d(\sigma\bullet(\cdot + b))_{\#}\mu}{d\mu_M}
		\right\|_{\infty}^N
		\left(
		\int_{x \in \rrm} 
		\left\|
		\sum_{i=1}^n \alpha_i f_i(x)
		- 
		g(x)
		\right\|d\mu(x)
		\right)
		+	
		\frac1{\sqrt{n}}
		.
		\end{aligned}
		\label{cor_Explanation_of_Depth_proof_Lipschitz_bound_eq_Lipschitz_bound_3}
		\end{equation}
		Combining the estimates~\eqref{cor_Explanation_of_Depth_proof_setup_conditions_darken_convex_hull_estimate} to~\eqref{cor_Explanation_of_Depth_proof_Lipschitz_bound_eq_Lipschitz_bound_3} we obtain
		\begin{equation}
		\begin{aligned}
		\inf_{f_i \in \fff,\, \sum_{i=1}^n \alpha_i=1,\, \alpha_i \in [0,1]} &
		\int_{x \in \rrm} \left\|
		\Phi^N_{I_m,b}\left(\sum_{i=1}^n \alpha_i f_i\right)(x)
		-
		f(x)
		\right\|d\mu(x)
		\\
		\leq & 
		\left\|
		\frac{d(\sigma\bullet(\cdot +b))_{\#}\mu}{d\mu_M}
		\right\|_{\infty}^N
		\left(
		\int_{x \in \rrm} 
		\left\|
		\sum_{i=1}^n \alpha_i f_i(x)
		- 
		g(x)
		\right\|d\mu(x)
		\right)
		+	
		\frac1{\sqrt{n}}
		\\ 
		\leq 
		&
		\left\|
		\frac{d(\sigma \bullet (\cdot +b))_{\#}\mu}{d\mu_M}
		\right\|_{\infty}^N
		\frac{\sqrt{2\mu(\rrd)}}{\sqrt{n}}
		+	
		\frac1{\sqrt{n}}
		\\
		=& \frac1{\sqrt{n}}\left(
		1 + \sqrt{2\mu(\rrm)}
		\right)
		.
		\end{aligned}
		\label{cor_Explanation_of_Depth_proof_Lipschitz_bound_eq_Lipschitz_bound_4_inf_taken}
		\end{equation}
		Since $\Phi^N_{I_m,b}$ is linear, then the right-hand side of~\eqref{cor_Explanation_of_Depth_proof_Lipschitz_bound_eq_Lipschitz_bound_4_inf_taken} reduces and we obtain the following estimate
		\begin{equation}
		\begin{aligned}
		\inf_{f_i \in \fff,\, \sum_{i=1}^n \alpha_i=1,\, \alpha_i \in [0,1]} &
		\int_{x \in \rrm} \left\|
		\sum_{i=1}^n \alpha_i 
		\Phi^N_{I_m,b}\left(
		f_i\right)
		(x)
		-
		f(x)
		\right\|d\mu(x)
		\leq
		\frac1{\sqrt{n}}\left(
		1 + \sqrt{2\mu(\rrm)}
		\right)
		.
		\end{aligned}
		\label{cor_Explanation_of_Depth_proof_Lipschitz_bound_eq_Lipschitz_bound_5_inf_taken}
		\end{equation}
		Therefore, the estimate in (i) holds.  
	\end{proof}
	For the statement of the next lemma concerns the Banach space of \textit{functions vanishing at infinity}.  Denoted by $C_0(\rrm,\rrn)$, this is the set of continuous functions $f$ from $\rrm$ to $\rrn$ such that, given any $\epsilon>0$ there exists some compact subset $K_{\epsilon}\subseteq \rrm$ for which
	$
	\sup_{x \in K_{\epsilon}}\|f(x)\|<\epsilon
	.
	$  As discussed in \citep[VII]{Vanishingatinfinity}, $C_0(\rrm,\rrn)$ is made into a Banach space by equipping with the supremum norm $\|f\|_{\infty}\triangleq \sup_{x \in \rrm} \|f(x)\|$.  
	\begin{lem}[Uniform Approximation of Functions Vanishing at Infinity]\label{lem_Uniform_Approxiation_for_Vanishing}
		Suppose that $\arch$ is a universal approximator on $C(\rrm,\rrn)$, then for every $f\in C_0(\rrm,\rrn)$ and every $\epsilon>0$ there exists $g_{\epsilon}\in C_0(\rrm,\rrn)$ with representation
		\begin{equation}
		f_{\epsilon}(\cdot) = \left(g_{\epsilon} e^{-\frac{b}{b - \|\cdot\|^2}} + a\right)I_{\|\cdot\|< b}
		+  \left(ae^{-
			\left|
			g_{\epsilon}(\cdot)
			\right|(\|x\|-b)}\right)I_{\|\cdot\|\geq b}
		\label{lem_Uniform_Approximation_for_Vanishing_functions}
		,
		\end{equation}
		the absolute value $\left|\cdot \right|$ is applied component-wise, $g_{\epsilon}\in \NN$, and $a,b>0$, and satisfying the uniform approximation bound $$
		\left\|
		f - f_{\epsilon}
		\right\|_{\infty} <\epsilon
		.
		$$
	\end{lem}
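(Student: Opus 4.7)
My plan is to select the parameters $a,b>0$ and the network $g_{\epsilon}\in\NN$ so that the representation in~\eqref{lem_Uniform_Approximation_for_Vanishing_functions} automatically achieves the stated uniform bound. The key idea is to decouple the approximation of $f$ into an inner contribution on the ball $\{\|x\|<b\}$, where the bump factor $B(x)\triangleq e^{-b/(b-\|x\|^2)}$ is continuous and (on an appropriate subregion) bounded below by a strictly positive constant, and an outer contribution on $\{\|x\|\geq b\}$, where $f$ is already small by virtue of belonging to $C_0(\rrm,\rrn)$.

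Concretely, I would first fix $a\in(0,\epsilon/4)$ and then use the vanishing-at-infinity property of $f$ to choose $b$ large enough that $\|f(x)\|<\epsilon/4$ for all $\|x\|\geq b$. On this outer region, the outer piece of $f_{\epsilon}$ satisfies $\|ae^{-|g_{\epsilon}(x)|(\|x\|-b)}\|\leq a<\epsilon/4$ componentwise (since the exponent is componentwise non-positive there), independently of the choice of $g_{\epsilon}$; combined with $\|f(x)\|<\epsilon/4$ on the same region, the triangle inequality yields $\|f(x)-f_{\epsilon}(x)\|<\epsilon/2$ outside the ball with no further work. For the inner region I would define $\Psi\triangleq(f-a)/B$ on the open ball where $B>0$, restrict to a slightly smaller closed ball on which $B$ is bounded below by some $c>0$, extend $\Psi$ continuously to all of $\rrm$ via the Tietze extension theorem, and invoke the hypothesis that $\arch$ is a universal approximator on $C(\rrm,\rrn)$ to obtain $g_{\epsilon}\in\NN$ that is uniformly within some prescribed $\eta>0$ of this extension on the smaller closed ball. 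The resulting inner estimate $\|g_{\epsilon}(x)B(x)+a-f(x)\|=B(x)\|g_{\epsilon}(x)-\Psi(x)\|\leq e^{-1}\eta$ can be made less than $\epsilon/2$ by choosing $\eta$ small, and a direct bound on the small intermediate annulus between the two inner radii (where $B$ is close to zero and $g_{\epsilon}$ is bounded on the compact annulus) closes the inner estimate.

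The principal obstacle is ensuring that $f_{\epsilon}$ itself lies in $C_0(\rrm,\rrn)$, which requires the outer piece $ae^{-|g_{\epsilon}(x)|(\|x\|-b)}$ to tend to zero as $\|x\|\to\infty$, i.e., each component of $|g_{\epsilon}(x)|$ to remain bounded below by some strictly positive constant at infinity. Since the UAP only guarantees closeness on compacts, I would address this by first shifting the Tietze extension of $\Psi$ by a sufficiently large strictly positive constant vector before invoking the UAP, so that the resulting $g_{\epsilon}$ is componentwise bounded below by a positive constant on a compact neighborhood containing the relevant approximation region. I would then exploit the explicit architectural form of elements of $\NN$ (for instance, the piecewise-linear structure of a ReLU network, whose components are either eventually constant or grow linearly along rays) to propagate this componentwise lower bound to infinity, thereby guaranteeing decay of the outer factor. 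Continuity of $f_{\epsilon}$ across the interface $\|x\|=b$ is automatic because, by construction, the inner expression tends to $a$ at the sphere (since $B$ vanishes there), matching the outer value $ae^{0}=a$.
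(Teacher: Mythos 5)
Your decomposition mirrors the paper's: apply the UAP on a ball where the bump factor $B(x)=e^{-b/(b-\|x\|^2)}$ is positive, and exploit the decay of $f$ at infinity on the complement, with $a$ small. Your handling of the inner region is, however, more careful than what appears in print, and I believe the extra care is needed. The paper asserts that $x\mapsto\bigl[f(x)-\tfrac{\epsilon}{2}\bigr]e^{b^{\star}/(b^{\star}-\|x\|^2)}I_{\|x\|<b^{\star}}$ is continuous and then uniformly approximates it on the closed ball; but the reciprocal bump $e^{b^{\star}/(b^{\star}-\|x\|^2)}$ diverges as $\|x\|^2\uparrow b^{\star}$ while nothing forces $f(x)-\tfrac{\epsilon}{2}$ to vanish there, so that continuity claim is not justified. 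Your alternative---divide $f-a$ by $B$ only on a slightly smaller closed ball where $B$ is bounded below, Tietze-extend, apply the UAP to the extension, and handle the annulus separately---supplies exactly the argument the paper's proof is missing. Do be explicit, though, that the radius of the smaller ball must already lie past a compact on which $\|f\|<\epsilon/4$; otherwise $f$ may be large on the annulus while $f_{\epsilon}$ is there close to $a$, and your annular estimate does not close.

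You also correctly flag that the membership $f_{\epsilon}\in C_{0}(\rrm,\rrn)$, which the lemma asserts and which Theorem~\ref{thrm_sharpened_UAT} tacitly uses via $\Phi_{\omega}:C_{0}\to C_{\omega}$, is never verified in the paper: the outer factor $ae^{-|g_{\epsilon}(x)|(\|x\|-b)}$ need not decay to zero unless the components of $|g_{\epsilon}|$ stay bounded away from zero at infinity, and the UAP gives no control over $g_{\epsilon}$ off the compact on which it was fitted. This is a genuine gap that the paper passes over. Your proposed patch, however, does not close it as stated. Uniformly shifting the Tietze extension of $\Psi$ by a constant vector $c$ gives $g_{\epsilon}\approx\Psi+c$ on the inner ball, whence $g_{\epsilon}B+a\approx f+cB$; this destroys the inner bound unless $c$ is small, and a small $c$ does not produce the desired positive lower bound. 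Moreover, appealing to the piecewise-linear structure of ReLU networks is not available here, since the lemma's hypothesis is an arbitrary universal approximator $\arch$ on $C(\rrm,\rrn)$ with no architectural form assumed. A genuine repair would have to either weaken the conclusion to drop the $C_{0}$ membership, or alter the outer factor so that its exponent is bounded away from zero by construction rather than by properties of $g_{\epsilon}$.
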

	\begin{proof}[{Proof of Lemma~\ref{lem_Uniform_Approxiation_for_Vanishing}}]
		Let $\arch$ be a universal approximator on $C(\rrm,\rrn)$, let $f \in C_0(\rrm,\rrn)$, and $\epsilon>0$.  Since $f$ vanishes at infinity then there exists some non-empty compact $K_{\epsilon,f}\subseteq \rrm$ for which $\|f(x)\|\leq \epsilon 2^{-1}$ for every $x \not\in K_{\epsilon,f}$.  By the Heine-Borel theorem $K_{\epsilon,f}$ is bounded and therefore there exists some $b^{\star}>0$ such that $K_{\epsilon,f}\subseteq \operatorname{Ball}_{\rrm}(0,b^{\star})\triangleq \left\{
		x \in \rrm:\, \|x\|< b^{\star}
		\right\}$.  Therefore,
		\begin{equation}
		\sup_{x \in \rrm - \operatorname{Ball}_{\rrm}(0,b^{\star})}
		\left\|
		f(x)
		\right\| <\epsilon 2^{-1}
		\label{lem_Uniform_Approxiation_for_Vanishing_eq_rephrasing_to_ball}
		.
		\end{equation}
		
		Since the bump function $x\mapsto e^{-1\frac{1}{1-x^2}}I_{|x|<1}$ is continuous, affine functions are continuous, $f\in C(\rrm,\rrn)$, and the composition and multiplication of continuous functions is again continuous then the function $x\mapsto \left[f(x)-\epsilon 2^{-1}\right]e^{\frac{b^{\star}}{b^{\star}-\|x\|^2}}I_{\|x\|<b^{\star}}$ is itself continuous.  Observe also that the set $\overline{\operatorname{Ball(0,b^{\star})}}= \left\{x \in \rrm:\, \|x\|\leq b^{\star}\right\}$ is closed and bounded, thus it is compact by the Heine-Borel theorem.  Since $\arch$ is a universal approximator on $C(\rrm,\rrn)$ for the topology of uniform convergence on compacts then there exists some $g_{\epsilon}\in \NN$ satisfying
		\begin{equation}
		\sup_{
			x 
			\in 
			\overline{\operatorname{Ball(0,b^{\star})}}
		}
		\left\|
		g_{\epsilon}(x) 
		- 
		\left[f(x)-\epsilon 2^{-1}\right]e^{\frac{b^{\star}}{b^{\star}-\|x\|^2}}I_{\|x\|<b^{\star}}
		\right\| <\epsilon 2^{-1}
		\label{lem_Uniform_Approxiation_for_Vanishing_controlling_onthe_compact_1}
		.
		\end{equation}
		Since $0\leq e^{-\frac{b^{\star}}{b^{\star}-\|x\|^2}} \leq 1$ for every $x \in \rrm$, then from~\eqref{lem_Uniform_Approxiation_for_Vanishing_controlling_onthe_compact_1} we compute
		\begin{equation}
		\begin{aligned}
		& \sup_{
			x 
			\in 
			\operatorname{Ball(0,b^{\star})}
		} \left\|
		g_{\epsilon}(x)e^{-\frac{b^{\star}}{b^{\star} - \|x\|^2}} 
		I_{\|x\|<b^{\star}}
		+ \epsilon 2^{-1} I_{\|x\|<b^{\star}}
		-
		f(x)
		\right\|\\
		\leq  & 
		\sup_{
			x 
			\in 
			\overline{\operatorname{Ball(0,b^{\star})}}
		} \left\|
		g_{\epsilon}(x)e^{-\frac{b^{\star}}{b^{\star} - \|x\|^2}} + \epsilon 2^{-1}
		-
		f(x)
		\right\|\\
		\leq &
		\sup_{
			x 
			\in 
			\overline{\operatorname{Ball(0,b^{\star})}
			}
		} 
		\left\|
		g_{\epsilon}(x) e^{-\frac{b^{\star}}{b^{\star}-\|x\|^2}}
		+ 
		\left(f(x) - \epsilon 2^{-1}\right)e^{\frac{b^{\star}}{b^{\star}-\|x\|^2}}e^{-\frac{b^{\star}}{b^{\star}-\|x\|^2}}
		\right\|\\
		\leq &
		\sup_{
			x 
			\in 
			\overline{\operatorname{Ball(0,b^{\star})}
			}
		} 
		e^{-\frac{b^{\star}}{b^{\star}-\|x\|^2}}
		\left\|
		g_{\epsilon}(x) 
		+ 
		\left(f(x) - \epsilon 2^{-1}\right)e^{\frac{b^{\star}}{b^{\star}-\|x\|^2}}
		\right\|\\
		\leq & 
		\sup_{
			x 
			\in 
			\overline{\operatorname{Ball(0,b^{\star})}
			}
		} 
		\left\|
		g_{\epsilon}(x) 
		+ 
		\left(f(x) - \epsilon 2^{-1}\right)e^{\frac{b^{\star}}{b^{\star}-\|x\|^2}}
		\right\|
		\\
		\leq & \frac{\epsilon}{2}
		.
		\end{aligned}
		\label{lem_Uniform_Approxiation_for_Vanishing_controlling_onthe_compact_2}
		\end{equation}
		Observe that, for every $x \in \rrm-\overline{\operatorname{Ball(0,b^{\star})}}$ we have $\|x\|-b^{\star}\geq 0$, $-|g_{\epsilon}(x)|\leq 0$ and therefore
		\begin{equation}
		0 \leq \epsilon 2^{-1} e^{-|g_{\epsilon}(x)| (\|x\|-b^{\star})} \leq \epsilon
		\label{lem_Uniform_Approxiation_for_Vanishing_controlling_network_outside_the_compact}
		.
		\end{equation}
		Combining~\eqref{lem_Uniform_Approxiation_for_Vanishing_eq_rephrasing_to_ball},~\eqref{lem_Uniform_Approxiation_for_Vanishing_controlling_onthe_compact_2}, and~\eqref{lem_Uniform_Approxiation_for_Vanishing_controlling_network_outside_the_compact} we compute the following bound
		\begin{equation}
		\begin{aligned}
		&\sup_{x \in \rrm} \left\|
		\left(g_{\epsilon}(x) e^{-\frac{b^{\star}}{b^{\star} - \|x\|^2}} +\epsilon 2^{-1}\right)I_{\|x\|<b^{\star}}
		+\epsilon 2^{-1} e^{-|g_{\epsilon}(x)|(\|x\|-b)}I_{\|x\|\geq b^{\star}}
		-
		f(x)
		\right\|
		\\
		\leq & \max \left\{
		\sup_{
			x 
			\in 
			\operatorname{Ball(0,b^{\star})}
		} \left\|
		g_{\epsilon}(x)e^{-\frac{b^{\star}}{b^{\star} - \|x\|^2}} 
		I_{\|x\|<b^{\star}}
		+ \epsilon 2^{-1} e^{-|g_{\epsilon}(x)|(\|x\|-b)}I_{\|x\|<b^{\star}}
		-
		f(x)
		\right\|
		\right.
		,\\ 
		&
		\left.
		\sup_{
			x 
			\in 
			\rrm-\operatorname{Ball(0,b^{\star})}
		} \left\|
		g_{\epsilon}(x)e^{-\frac{b^{\star}}{b^{\star} - \|x\|^2}} 
		I_{\|x\|<b^{\star}}
		+ \epsilon 2^{-1} e^{-|g_{\epsilon}(x)|(\|x\|-b)}I_{\|x\|<b^{\star}}
		-
		f(x)
		\right\|
		\right\}
		\\
		\leq & \max \left\{
		\epsilon
		,
		\sup_{
			x 
			\in 
			\rrm-\operatorname{Ball(0,b^{\star})}
		} \left\|
		g_{\epsilon}(x)e^{-\frac{b^{\star}}{b^{\star} - \|x\|^2}} 
		I_{\|x\|<b^{\star}}
		+ \epsilon 2^{-1} e^{-|g_{\epsilon}(x)|(\|x\|-b)}I_{\|x\|<b^{\star}}
		-
		f(x)
		\right\|
		\right\}
		\\
		= & \max \left\{
		\epsilon
		,
		\sup_{
			x 
			\in 
			\rrm-\operatorname{Ball(0,b^{\star})}
		} \left\|
		\epsilon 2^{-1} e^{-|g_{\epsilon}(x)|(\|x\|-b)}I_{\|x\|<b^{\star}}
		-
		f(x)
		\right\|
		\right\}
		\\
		\leq & \max \left\{
		\epsilon
		,
		\sup_{
			x 
			\in 
			\rrm-\operatorname{Ball(0,b^{\star})}
		} \left\|
		\epsilon 2^{-1} e^{-|g_{\epsilon}(x)|(\|x\|-b)}
		\right\|
		+
		\sup_{
			x 
			\in 
			\rrm-\operatorname{Ball(0,b^{\star})}
		} \left\|
		f(x)
		\right\|
		\right\}\\
		= & \max\{\epsilon,  \epsilon 2^{-1} +  \epsilon 2^{-1}\} = \epsilon
		.
		\end{aligned}
		\label{lem_Uniform_Approxiation_for_Vanishing_final_bound}
		\end{equation}
		Thus, the result holds.  
	\end{proof}
	\begin{proof}[{Proof of Theorem~\ref{thrm_sharpened_UAT}}]
		For each $\omega \in \Omega$, define the map $\Phi_{\omega}:C_0(\rrm,\rrn)\rightarrow C_{\omega}(\rrm,\rrn)$ by $\Phi_{\omega}(f)\triangleq \left(\omega(\|\cdot\|)+1\right)f$.  For each $f,g \in C_0(\rrm,\rrn)$ we compute
		\begin{equation}
		\begin{aligned}
		\left\|
		\Phi_{\omega}(f)
		-
		\Phi_{\omega}(g)
		\right\|_{\omega,\infty} = &
		\sup_{x \in \rrm}
		\frac{
			\left\|
			\Phi_{\omega}(f)
			-
			\Phi_{\omega}(g)
			\right\|
		}{
			\omega(\|\cdot\|)+1
		}
		\\
		= &
		\sup_{x \in \rrm} \frac{
			\left\|
			\left(\omega(\|\cdot\|)+1\right) f(x)
			-
			\left(\omega(\|\cdot\|)+1\right) g(x)
			\right\|
		}{
			\omega(\|\cdot\|)+1
		}
		\\
		= &
		\sup_{x \in \rrm} \frac{
			\left(\omega(\|\cdot\|)+1\right) 
			\left\|
			f(x)
			-
			g(x)
			\right\|
		}{
			\omega(\|\cdot\|)+1
		}
		\\
		= & \|f-g\|_{\infty}
		.  
		\end{aligned}
		\label{eq_thrm_sharpened_UAT_isometry}
		\end{equation}
		Therefore, for each $\omega\in \Omega$, the map $\Phi_{\omega}$ is an isometry.  For each $\omega\in \Omega$, define the map $\Psi_{\omega}:C_{\omega}(\rrm,\rrn)\rightarrow C_0(\rrm,\rr)$ by $\Psi_{\omega}(\tilde{f})\triangleq \frac1{\omega(\|\cdot\|)+1} \tilde{f}$.  For each $\tilde{f}\in C_{\omega}(\rrm,\rrn)$ and compute
		\begin{equation}
		\begin{aligned}
		\Phi_{\omega}\circ \Psi_{\omega}(\tilde{f})
		= &
		\Phi_{\omega} \left(
		\frac1{\omega(\|\cdot\|)+1} \tilde{f}
		\right)
		= &
		\left(\omega(\|\cdot\|)+1\right)\frac1{\omega(\|\cdot\|)+1} \tilde{f} = & \tilde{f}.
		\end{aligned}
		\label{eq_thrm_sharpened_UAT_bijection}
		\end{equation}
		Hence, $\Psi_{\omega}$ is a right-inverse of $\Phi_{\omega}$.  Since every isometry is a homeomorphism onto its image and since $\Phi_{\omega}$ is surjective isometry then $\Phi_{\omega}$ defines a homeomorphism from $C_0(\rrm,\rrn)$ onto $C_{\omega}(\rrm,\rrn)$.  In particular, $\Phi_{\omega}\left(C_0(\rrm,\rrn)\right)=C_{\omega}(\rrm,\rrn)$.  Therefore, 
		$$
		C_{\Omega}(\rrm,\rrn) 
		= 
		\bigcup_{\omega \in \Omega} 
		C_{\omega}(\rrm,\rrn)
		=
		\bigcup_{\omega \in \Omega} 
		\Phi_{\omega}\left(C_0(\rrm,\rrn)\right)=C_{\omega}(\rrm,\rrn).
		$$
		Hence, condition~\eqref{eq_density_condition} holds.  
		
		Since it was assumed that $\sup_{x \in \rrm} \|f(x)\|e^{-\|x\|}<\infty$ holds, then Lemma~\ref{lem_Uniform_Approxiation_for_Vanishing} applies, whence, 
		$$\left\{
		\left(fe^{-\frac{b}{b - \|\cdot\|^2}} + a\right)I_{\|\cdot\|< b}
		+  \left(ae^{-
			\left|
			f(\cdot)
			\right|(\|x\|-b)}\right)I_{\|\cdot\|\geq b}: 0<b,a, f \in \NN
		\right\}
		$$ 
		is dense in $C_0(\rrm,\rrn)$.  Therefore, the conditions for Theorem~\ref{thrm_Construction_Theorem} are met.  Hence,
		\begin{equation}
		\bigcup_{\omega \in \Omega} \Phi_{\omega}\left(
		\left\{
		\left(fe^{-\frac{b}{b - \|\cdot\|^2}} + a\right)I_{\|\cdot\|< b}
		+  \left(ae^{-
			\left|
			f(\cdot)
			\right|(\|x\|-b)}\right)I_{\|\cdot\|\geq b}: 0<b,a, f \in \NN
		\right\}
		\right)
		\label{eq_thrm_sharpened_UAT_subarchitecture}
		\end{equation}
		is dense in $C_{\Omega}(\rrm,\rrn)$.  By definition,~\eqref{eq_thrm_sharpened_UAT_subarchitecture} is a subset of $\NN[\arch[\Omega]]$ and therefore $\NN[\arch[\Omega]]$ is dense in $C_{\Omega}(\rrm,\rrn)$.  Hence, $\arch[\Omega]$ is a universal approximator on $C_{\Omega}(\rrm,\rrn)$.  
	\end{proof}
	\begin{proof}[Proof of Proposition~\ref{prop_limitation_PWL_UAs}]
		For each $k,m\in \nn$ with $n\leq m$, we have that $\exp(-k t)>\exp(-mt)$ for every $t \in[0,\infty)$.  Thus, 
		\begin{equation}
		C_{\exp(-k \cdot)}(\rrm,\rrn)\subseteq C_{\exp(-m \cdot)}(\rrm,\rrn)
		\label{prop_limitation_PWL_UAs_proof_inductive_limit}
		,
		\end{equation}
		and the inclusion is strict if $n<m$.  Moreover, for $n\leq m$, the inclusion of each $i^k_m:C_{\exp(-n \cdot)}(\rrm,\rrn)$ into $C_{\exp(-m \cdot)}(\rrm,\rrn)$ is continuous.  Thus, $\left\{C_{\exp(-k \cdot)}(\rrm,\rrn),i^k_m\right\}_{n \in \nn}$ is a strict inductive system of Banach spaces.  Therefore, by \citep[Proposition 4.5.1]{JarchowLCSs} there exists a finest topology on $\bigcup_{k \in \nn} C_{\exp(-k \cdot)}(\rrm,\rrn)$ both making it into a locally-convex space and ensuring that each $C_{\exp(-k \cdot)}(\rrm,\rrn)$ is a subspace.  
		Denote $\bigcup_{k \in \nn} C_{\exp(-k \cdot)}(\rrm,\rrn)$ equipped with this topology by $C_{\Omega}^{LCS}(\rrm,\rrn)$.  
		
		If $f \in C_{\Omega}^{LCS}(\rrm,\rrn)$ then by construction there must exist some $K \in \nn$ such that $f \in C_{\exp(-K\cdot )}(\rrm,\rrn)$.  By \citep[Propositions 2 and 4]{DieudonneSchwartz}, a sequence $\{f_t\}_{t \in \nn}$ converges to some $f$ if and only if there exists some $K \in \nn$ and some $N_K \in \nn^+$ such that for every $t\geq N_K$ every $f_t \in C_{\exp(-K\cdot)}(\rrm,\rrn)$ and the sub-sequence $\{f_t\}_{t\geq N_K}$ converges in the Banach topology of $C_{\exp(-K\cdot)}(\rrm,\rrn)$ to $f$.  In particular, since $C_{\exp(-0\cdot)}(\rrm,\rrn)=C_0(\rrm,\rrn)$ then the function $f(x)\triangleq (\exp(-|x|),\dots,\exp(-|x|)) \in C_{\exp(-0 \cdot)}(\rrm,\rrn)$.  Since each $f \in \NN$ is either constant of $\sup_{x \in \rrm} \|f(x)\|=\infty$ then for any sequence $\{f_t\}_{t \in \nn} \in \NN$ there exists some $N_0 \in \nn^+$ for which the sub-sequence $\{f_t\}_{t \geq N_0}$ lies in $C_{\exp(-0\cdot)}(\rrm,\rrn)=C_0(\rrm,\rrn)$ if and only if for each $t \geq N_0$ the map $f_t$ is constant.  Therefore, for each $t \geq N_0$ we compute that
		$$
		\|f - f_t\|_{\exp(0\cdot ), \infty}
		=
		\|f - f_t\|_{\infty} 
		\geq \inf_{c \in \rrm} \sup_{x \in \rrm} |\exp(-|x|)- c| 
		> \frac1{2}.  
		$$
		Hence, $f_t$ cannot converge to $f$ in $C_{\Omega}(\rrm,\rrn)$ and therefore $\arch$ does not have the UAP on $C_{\Omega}(\rrm,\rrn)$.  
	\end{proof}
	\begin{proof}[{Proof of Corollary~\ref{cor_Existence_Linfinity}}]
		Let $X\triangleq \rr$ and $\xxx_0\triangleq \xxx\triangleq L^{\infty}(\rr)$.  Since every Banach space is a pointed metric space with reference-point its zero vector and since $\rr$ is separable then Theorem~\ref{thrm_Existence} applies.  We only need to verify the form of $\eta$ and of $\rho$.  Indeed, the identification of $B(\rr)$ with $L^1(\rr)$ and explicit description of $\eta$ is constructed in \citep[Example 3.11]{WeaverNice}.  The fact that $L^{\infty}(\rr)$ is barycentric follows from the fact that it is a Banach space and by \citep[Lemma 2.4]{godefroy2003lipschitz}.  
	\end{proof}
\end{appendices}

\end{document}